\declaretheoremstyle[
  headfont=\normalfont\bfseries\itshape,
  numbered=yes,
  bodyfont=\normalfont,
  spaceabove=1em plus 0.75em minus 0.25em,
  spacebelow=.5em,
  qed={\small$\Diamond$},
]{deflt}
\declaretheorem[style=deflt,numberwithin=section]{theorem}
\declaretheorem[style=deflt,sibling=theorem]{lemma}
\declaretheorem[style=deflt,sibling=theorem]{proposition}
\declaretheorem[style=deflt,sibling=theorem]{corollary}
\declaretheorem[style=deflt,sibling=theorem]{setting}\declaretheorem[style=deflt,sibling=theorem]{assumption}
\declaretheorem[style=deflt,sibling=theorem]{example}
\declaretheorem[style=deflt,sibling=theorem]{remark}
\declaretheorem[style=deflt,sibling=theorem]{definition}
\numberwithin{equation}{section}
\newcommand{\V}{\mathsf{\bm{V}}}
\newcommand{\A}{\mathsf{\bm{A}}}
\newcommand{\RR}{\mathsf{\bm{R}}}
\newcommand{\MM}{\mathsf{\bm{M}}}
\newcommand{\EE}{\mathsf{\bm{E}}}
\newcommand{\T}{\mathbb{T}}
\newcommand{\R}{\mathbb{R}}
\newcommand{\C}{\mathbb{C}}
\newcommand{\E}{\mathbb{E}}
\newcommand{\N}{\mathbb{N}}
\newcommand{\Z}{\mathbb{Z}}
\newcommand{\cD}{\mathcal{D}}
\newcommand{\cE}{\mathcal{E}}
\newcommand{\cF}{\mathcal{F}}
\newcommand{\cG}{\mathcal{G}}
\newcommand{\cH}{\mathcal{H}}
\newcommand{\cK}{\mathcal{K}}
\newcommand{\cL}{\mathcal{L}}
\newcommand{\cM}{\mathcal{M}}
\newcommand{\cN}{\mathcal{N}}
\newcommand{\cP}{\mathcal{P}}
\newcommand{\cQ}{\mathcal{Q}}
\newcommand{\cU}{\mathcal{U}}
\newcommand{\cV}{\mathcal{V}}
\newcommand{\cX}{\mathcal{X}}
\newcommand{\cY}{\mathcal{Y}}
\newcommand{\tQ}{\tilde{\cQ}}
\newcommand{\fD}{\mathfrak{D}}
\newcommand{\fE}{\mathfrak{E}}
\newcommand{\dc}{{d_c}}
\newcommand{\Enc}{\mathrm{Enc}}
\newcommand{\FNO}{\mathrm{FNO}}
\newcommand{\define}{\textbf}
\newcommand{\vol}{\mathrm{vol}}
\renewcommand{\tilde}{\widetilde}
\renewcommand{\hat}{\widehat}
\DeclareMathOperator*{\argmin}{argmin}
\newcommand{\embeds}{{\hookrightarrow}}
\newcommand{\set}[2]{{\left\{ #1 \,\middle|\, #2 \right\}}}
\newcommand{\slot}{{\,\cdot\,}}
\newcommand{\Lip}{\mathrm{Lip}}
\newcommand{\explain}[2]{\overset{\mathclap{\underset{\downarrow}{#2}}}{#1}}
\newcommand{\Err}{\mathrm{Err}}
\title[Operator learning: Information-theory]{Operator Learning of Lipschitz Operators: \\
An Information-Theoretic Perspective}
\author{Samuel Lanthaler}
\address{California Institute of Technology}
\email{slanth@caltech.edu}
\date{ \today }
\renewcommand{\paragraph}{%
  \@startsection{paragraph}{4}%
  {\z@}{.8ex \@plus 1ex \@minus .2ex}{-1em}%
  {\normalfont\normalsize\bfseries}%
}
\definecolor{darkred}{rgb}{.6,0,0}
\definecolor{darkblue}{rgb}{0,0,.7}
\definecolor{darkgreen}{rgb}{0,.7,0}
\definecolor{darkbrown}{rgb}{0.8,0.4,0.4}
\newtheoremstyle{named}{}{}{\itshape}{}{\bfseries}{.}{.5em}{\thmnote{#3}}
\theoremstyle{named}
\begin{document}

\begin{abstract}
Operator learning based on neural operators has emerged as a promising paradigm for the data-driven approximation of operators, mapping between infinite-dimensional Banach spaces. Despite significant empirical progress, our theoretical understanding regarding the efficiency of these approximations remains incomplete. This work addresses the parametric complexity of neural operator approximations for the general class of Lipschitz continuous operators. Motivated by recent findings on the limitations of specific architectures, termed \emph{curse of parametric complexity}, we here adopt an information-theoretic perspective. Our main contribution establishes lower bounds on the metric entropy of Lipschitz operators in two approximation settings; uniform approximation over a compact set of input functions, and approximation in expectation, with input functions drawn from a probability measure. It is shown that these entropy bounds imply that, regardless of the activation function used, neural operator architectures attaining an approximation accuracy $\epsilon$ must have a size that is exponentially large in $\epsilon^{-1}$. The size of architectures is here measured by counting the number of encoded bits necessary to store the given model in computational memory. The results of this work elucidate fundamental trade-offs and limitations in operator learning.
\end{abstract}

\maketitle

\section{Introduction}
\label{sec:I}

Operators mapping between infinite-dimensional Banach spaces of functions are ubiquitous in the natural sciences and engineering. They often appear in connection with physical models expressed as a set of partial differential equations, where operators of interest frequently arise from associated forward and inverse problems, e.g. mapping initial data to the solution at a later time, or identifying external forcing terms from (partial) knowledge of the solution.

Operator learning has emerged as a new paradigm for the data-driven approximation of such operators. Popular operator learning frameworks build on the success of neural networks, but generalize this notion to the infinite-dimensional context of operator approximation, resulting in so-called \emph{neural operators}. These neural operator architectures define parametric mappings, whose parameters are tuned to approximate an underlying operator of interest.

While there is a very rapidly growing body of empirical work demonstrating the great potential, and practical utility, of such data-driven approaches, many open questions remain in our understanding of the theoretical underpinnings of this field, see e.g. \cite{kls2024hna} for a recent review and references therein.

First theoretical insights into specific architectures, and their underlying approximation mechanisms, can be gained by studying universal approximation, i.e. the ability to approximate very general classes of operators. The study of universal approximation of neural operators dates back at least three decades, to early work on operator networks by Chen and Chen \cite{chen1995universal}. Due to the recent rise in the popularity of operator learning and the introduction of a number of novel state-of-the-art frameworks, this early work has been complemented by a number of papers in recent years, demonstrating similar universal approximation properties for various architectures; e.g. DeepONets \cite{lu2021learning,lanthaler2022error}, PCA-Net \cite{bhattacharya2021model,lanthaler2023operator}, Fourier neural operator \cite{kovachki2021universal} and general neural operators \cite{kovachki2023neural,lanthaler2023nonlocal}, as well as multiple other architectures \cite{zhang2023belnet,jin2022mionet,HUA202321,castro2023kolmogorov,castro2022calder}.

Universal approximation implies that there are no fundamental obstructions to operator learning with a given framework, and usually requires identification of basic approximation mechanisms that can be leveraged by a given architecture. However, to determine whether operator learning can be achieved efficiently, a refined quantitative analysis is required. In such quantitative analysis, one often distinguishes between \emph{parametric complexity}, relating the required model size to the achieved accuracy, and \emph{sample or data complexity}, relating the number of required training samples to the achieved accuracy. The focus of the present work is on parametric complexity. For research relevant to the data complexity of operator learning, we mention, for example, \cite{adcock2023a,adcock2023b,NLM2024data,mhaskar1997neural,benitez2023outofdistributional}. 

A general class of operators for which efficient approximation is possible, in terms of the required number of tunable parameters, are so-called holomorphic operators. Research into the approximation of holomorphic operators goes back to the seminal work of Cohen, DeVore and Schwab \cite{cohen2010convergence,cohen2011analytic}, where it was shown that this class of operators can be efficiently approximated by generalized polynomial expansions. More recently, these results have been extended to neural network and neural operator approximation in a series of works \cite{schwab2019deep,opschoor2022exponential,sz2023,herrmann2022neural,adcock2022efficient,marcati2023exponential}, demonstrating that similar rates can be achieved by neural operators. 

 Other classes of operators for which efficient convergence rates have been derived are operator Barron spaces \cite{korolev2021two} and (operator) reproducing kernel Hilbert spaces (RKHS) \cite{lanthalernelsen2023error,nelsen2021random}. Alternative settings, such as parametric PDEs with low-dimensional latent structure are, for example, explored in \cite{kutyniok2022theoretical,lei2022solving,franco2023approximation}.

Apart from these specific classes of operators, efficient approximation has also been established via a case-by-case analysis for several PDE solution operators \cite{deng2021convergence,kovachki2021universal,lanthaler2022error,lanthaler2023operator,marcati2023exponential,galimberti2022designing}. These results identify a number of individual operators of interest which can be efficiently approximated by certain operator learning frameworks. Despite this progress, a general theory encompassing all these examples has yet to emerge. 

A very general class of operators of interest are Lipschitz operators. Approximation theory of relevance to such a general class of operators has been developed e.g. in \cite{liu2024deep,galimberti2022designing,schwab2023deep,patel2024variationally,kratsios2024mixture}. All of these works aim to bound the number of tunable parameters (model size) in terms of the accuracy that can be achieved. 

The present work will focus on deriving lower complexity bounds for the class of Lipschitz continuous operators $\cG: \cD \to \R$, defined on an infinite-dimensional domain $\cD$ and taking values in $\R$ (nonlinear Lipschitz functionals). Semantically, no distinction will be made between `functional' and `operator', since all lower bounds established for functionals continue to hold when considering operators with infinite-dimensional output spaces -- the latter containing (infinitely many) copies of $\R$.

 In addition to the aforementioned literature on neural operator approximation theory, the present work also takes inspiration from the information-theoretic point of view on neural network approximation theory in a finite-dimensional setting, pioneered in the works \cite{bolcskei2017memory,bolcskei2019optimal,petersen2018optimal,voigtlaender2019approximation,epgb2021}, as well as notions of stable approximation \cite{DeVore_Hanin_Petrova_2021,cohen2022optimal}. In the present work, the underlying ideas will be applied and extended to the infinite-dimensional context of operator learning.

The main motivation for this work are two recent results, established in \cite{lanthaler2023operator} and \cite{schwab2023deep} respectively, both applicable to the general setting of Lipschitz operators. A one-paragraph summary of the results in \cite{lanthaler2023operator} and  \cite{schwab2023deep} is as follows:
\begin{enumerate}
\item[(i)] The first result \cite{lanthaler2023operator} shows that certain neural operator architectures, based on ReLU activations,  suffer from a \emph{curse of parametric complexity}: under certain assumptions on the input functions, there exist Lipschitz continuous operators which can only be approximated to accuracy $\epsilon$, if the number of tunable parameters is exponential in $\epsilon^{-1}$; more precisely, the number of parameters must be at least as large as $C\exp(c\epsilon^{-\gamma})$ with problem-dependent constants $C,c,\gamma>0$. 
\item[(ii)] The second result in \cite{schwab2023deep} shows that, under similar assumptions on the input functions, neural operator architectures based on super-expressive activation functions can approximate general Lipschitz operators to accuracy $\epsilon$, with algebraically bounded parameter count; the number of parameters is upper bounded by $C\epsilon^{-\gamma}$, for problem-dependent $C,\gamma>0$.
\end{enumerate}
While the first result, viewed in isolation, appears to hint at fundamental limitations to the development of operator learning theory on the general class of Lipschitz operators, due to the identified ``curse'', the second result shows rigorously that this curse can be circumvented with a suitable choice of activation.

The aim of the present work is to examine the apparent dichotomy between these two results in detail. To this end, we explore the curse of parametric complexity from an information-theoretic perspective. As a result, we will uncover the fundamental information-theoretic character of the curse of parametric complexity, and identify the relevant trade-offs that are possible when parametric complexity is measured by the number of (real-valued) parameters as in \cite{lanthaler2023operator,schwab2023deep}.

\paragraph{Main contributions} 
This work makes the following main contributions:
\begin{itemize}
\item We propose an information-theoretic perspective of operator learning, based on the relation between bit-encoding and Kolmogorov metric entropy; this provides an alternative to the prevalent analysis in the literature, which has focused on estimating the required number of real-valued parameters.
\item For the model class of Lipschitz operators, we derive lower bounds on the metric entropy in two settings: one pertaining to uniform approximation, the other to approximation in expectation. 
\item These bounds imply, in either setting, that an exponentially large number of encoding bits is required to store the weights of any architecture achieving accuracy $\epsilon$ on the model class. This result holds independently of the activation function that is chosen.
\item We use topological arguments to show that even \emph{generic} operators can only be approximated with exponentially increasing complexity; when applied to FNO this implies that the approximation of a \emph{generic Lipschitz operator}, to accuracy $\epsilon$, requires a number of tunable parameters exponential in $\epsilon^{-1}$.
\end{itemize}

\paragraph{Overview}

The remainder of this paper is organized as follows. In Section \ref{sec:main}, we state the main results of this work, as they pertain to operator learning with neural operator architectures. This section contains the main conceptual contributions of this work and reviews the link between bit-encoding and Kolmogorov entropy. Several technical details are left to Sections \ref{sec:3} and \ref{sec:4}; in Section \ref{sec:3}, we derive lower bounds on the Kolmogorov metric entropy of the set of $1$-Lipschitz operators in both a sup-norm and $L^p$-norm approximation setting. In particular, we show that the metric $\epsilon$-entropy increases exponentially with $\epsilon^{-1}$, implying a general curse of parametric complexity for bit-encoded architectures. This is the first main technical contribution of this work. Approximation rates for \emph{generic} operators are the subject of Section \ref{sec:4}, where we first formulate the operator approximation problem in an abstract Banach space setting, and then use topological arguments to relate approximation rates of generic elements of a model class to the metric entropy of this class. This is the second main technical contribution of this work. Finally, Section \ref{sec:conclusion} contains concluding remarks.

\section{Main Results}
\label{sec:main}

This section contains a summary of the main results of this work, applied to the specific setting of operator learning. Several of these results are based on more general, abstract propositions which are included in subsequent Sections \ref{sec:3} and \ref{sec:4}. To aid readability, we leave most technical details to these latter sections. The aim of this section is instead to explain the main ideas underlying our analysis, and their implications for operator learning.  Recurring notation, to be introduced and discussed in the following, is summarized in Table \ref{tab:notation}.

\begin{table}
\begin{tabular}{c|c}
Notation & Meaning \\ \hline
$\cG: \cD \subset \cX \to \cY$ & Nonlinear operator with domain $\cD$ \\
$\cX$, $\cY$ & (Input/output) Banach spaces \\
$\cK \subset \cX$ & Compact subset of inputs \\
$\cD$ & Operator domain, $\cD = \cK$ or $\cD=\cX$ \\
$\mu \in \cP(\cX)$ & Probability measure on $\cX$ \\
$\Lip_1(\cD)$ & Real-valued $1$-Lipschitz operators, $\cG: \cD \to \R$ \\
$\V$ & Banach space of operators, $\V = C(\cK)$ or $\V = L^p(\mu)$ \\
$\A\subset \V$ & Compact subset of $\V$, e.g. $\A = \Lip_1(\cD)$
\end{tabular}
\caption{Recurring notation and definitions for operator learning.}
\label{tab:notation}
\end{table}

\subsection{Operator approximation by neural operators}

We begin the discussion of our main results by proposing an encoder-decoder point of view on operator learning, where the encoder and decoder are implicitly defined by a given architecture. We then define approximation errors of interest and discuss two common measures to quantify the ``complexity'' of a given architecture. The first counts the number of tunable, real-valued parameters in the architecture. The second goes one step further, and requires specification of a bit-encoding of all parameters, i.e. encoding by a sequence of 0's and 1's. To fix intuition, this bit-encoding can be loosely interpreted as the representation of the parameters on computing hardware. The complexity of a bitwise-encoded architecture is measured by the number of bits required to represent it. As will be explained, this provides a link to fundamental information-theoretic concepts such as the Kolmogorov metric entropy of our model class.

\subsubsection{Approximation theoretic setting}
Assume we are given input and output spaces $\cX$, $\cY$. A neural operator defines a parametrized mapping $\Phi: \cX \times \R^{q} \to \cY$, where $\theta \in \R^q$ are tunable parameters. Specification of $\theta$ defines an operator, $\Phi(\slot;\theta): \cX \to \cY$. In practice, the training of a neural operator results in an optimized parameter choice $\theta_{\cG}$ for given $\cG: \cX \to \cY$ and an approximation $\cG \approx \Phi(\slot;\theta_\cG)$.

\paragraph{Model class $\Lip_1(\cD)$} In the following, we will consider a model class of $1$-Lipschitz operators, restricting attention to the case of real-valued outputs, $\cY = \R$:
\begin{definition}[Model class $\Lip_1$]
Let $(\cD,d)$ be a metric space. We define $\Lip_1(\cD)$ as the set consisting of all $1$-Lipschitz continuous mappings $\cG: \cD \to \R$ with $\Vert \cG \Vert_{\Lip} \le 1$, where we define the $\Vert \slot \Vert_{\Lip}$-norm as follows:
\begin{align}
\left\{
\begin{aligned}
\Vert \cG \Vert_{\Lip} &= \max\Big\{ \textstyle\sup_{u \in \cD} | \cG(u) |, \Lip(\cG)\Big\}, \\
\Lip(\cG) &= \sup_{u\ne v} \frac{|\cG(u) - \cG(v)|}{d(u,v)},
\end{aligned}
\right.
\end{align}
\end{definition}

As described in the introduction, the goal of operator learning is to approximate $\cG: \cD \to \R$ by a neural operator $\Phi: \cD \times \R^q \to \R$. In this work, we aim to relate the approximation accuracy $\epsilon$ to the required model size of $\Phi$. 
We will focus on two settings, where either (i) $\cD = \cK\subset \cX$ is a compact subset of a Banach space and the metric is the sup-norm over $\cK$, or (ii) $\cD = \cX$ is a Banach space and the metric is induced by the $L^p(\mu)$-norm with respect to a probability measure $\mu$ on $\cX$ (cp. Table \ref{tab:0}). 

\paragraph{Approximation spaces and norms}
To measure the approximation accuracy of this approximation task, we have to define a distance between operators. To this end, we will consider a Banach space of operators $\V$, allowing for an embedding $\Lip_1(\cD)\subset \V$. Throughout, we will consider one of the following two settings. In the first setting, we aim to approximate $\cG$ over a compact domain $\cD = \cK \subset \cX$:
\begin{setting}[Uniform approximation]
\label{set:uniform}
If $\cG: \cK \to \R$ is an operator with compact domain $\cK \subset \cX$, we will study its uniform approximation over $\cK$, i.e. we take $\V = C(\cK)$ to be the space of continuous operators, metrized by the sup-norm:
\begin{align}
\Vert \cG \Vert_{C(\cK)} = \sup_{u\in \cK} |\cG(u)|.
\end{align}
\end{setting}

A common special case of this setting is the case where $\cK\subset \cX$ is defined by a smoothness constraint, as illustrated by the following example:
\begin{example}
Let $D \subset \R^d$ be a bounded domain. An example of the setting above is the case of Lipschitz operators $\cG: \cK \subset L^2(D) \to \R$, with 
\[
\cK = \set{u\in H^s(D)}{\Vert u \Vert_{H^s(D)} \le C},
\]
a set defined by a Sobolev smoothness constraint for $s>0$. Here, $\cX = L^2(D)$.
\end{example}

In the second setting, we aim to approximate $\cG$ over the entire Banach space $\cD= \cX$, but with respect to a (Bochner) $L^p(\mu)$-norm:
\begin{setting}[Approximation in expectation]
\label{set:expectation}
If $\cG: \cX \to \R$ is an operator with unbounded domain $\cX$ a separable Banach space, then we will assume that inputs are drawn at random from a probability measure $\mu \in \cP(\cX)$. In this case, we fix $p\in [1,\infty)$ and take $\V= L^p(\mu)$ as the space of $\mu$-measurable operators with finite $p$-th norm. $L^p(\mu)$ is metrized by the Bochner $L^p$-norm,
\begin{align}
\Vert \cG \Vert_{L^p(\mu)} = \E_{u\sim \mu}\left[
|\cG(u)|^p
\right]^{1/p}.
\end{align}
\end{setting}

\begin{table}
\begin{tabular}{c|c|c|c}
Operator domain & Operator class & Approximation space & Norm \\ \hline
$\cK \subset \cX$ compact & $\cG \in \Lip_1(\cK)$ & $\V = C(\cK)$ & sup-norm \\
$\cX$ Banach & $\cG \in \Lip_1(\cX)$ & $\V = L^p(\mu)$ & $L^p(\mu)$-norm
\end{tabular}
\caption{Operator approximation settings}
\label{tab:0}
\end{table}

\paragraph{Measures of complexity: Counting parameters versus bits}

We will distinguish two ways of measuring the ``complexity'' of neural operator $\Phi(\slot;\theta)$: one based on the number of tunable (real-valued) parameters, the other requiring bit-encoding (or quantization) of the parameters.

A first intuitive notion of complexity is the minimal number of tunable parameters required to reach approximation accuracy $\epsilon$, i.e. the parameter dimension $q$ of a neural operator $\Phi: \cD \times \R^q \to \cY$. As mentioned in the introduction, this point of view has been prevalent in the development of approximation theory for operator learning. As explained previously, depending on the type of activation function that is used, vastly different conclusions can be reached with this definition of complexity. This fact is well-known in the finite-dimensional setting: For example, it has been shown \cite{MAIOROV199981} that there exist smooth, sigmoidal activation functions for which a neural network of \emph{fixed size} can approximate arbitrary continuous function to arbitrary accuracy, i.e. approximation accuracy $\epsilon$ can be reached with a number of parameters $q=O(1)$.

In practical implementations, real-valued parameters can only be digitally represented to finite accuracy. This observation has led a number of authors \cite{bolcskei2017memory,bolcskei2019optimal,petersen2018optimal,voigtlaender2019approximation,epgb2021}, to analyze neural network approximation from a bit-encoding perspective. In this approach, the continuous parameters $\theta \in \R^q$ are replaced by quantized parameters $\theta \in \Theta$, where $\Theta \subset \R^q$ is a finite set. If the number of elements is bounded, say $|\Theta| = 2^B$ for some $B \in \N$, then we can identify $\Theta \simeq \{0,1\}^B$, i.e. each element in the set $\Theta$ is encoded by a string of $B$ bits. Taking this information-theoretic point of view, it is possible to derive (lower) complexity bounds that are independent of the activation function.

\subsection{Encoder-decoder view of neural operators}

Given the discussion of the last paragraph, we now outline an encoder-decoder point of view on neural operators, emphasizing the difference between ``counting parameters'' and ``counting (encoding) bits''.

\paragraph{Counting parameters}
Let $\Phi: \cD \times \R^q \to \R$ be a neural operator architecture. 
To explain our intuition, we temporarily assume the existence of, and fix an optimal parameter choice $\theta_\cG \in \R^q$ for each $\cG \in \Lip_1(\cD)$, so that
\begin{align}
\theta_\cG \in \argmin_{\theta \in \R^q} \Vert \cG - \Phi(\slot;\theta) \Vert_{\V},
\quad \forall \, \cG \in \Lip_1(\cD),
\end{align}
with respect to the relevant norm of interest on the space of operators $\V \supset \Lip_1(\cD)$. 
The corresponding encoder is then given by
\begin{align}
\label{eq:encoder}
\cE: \Lip_1(\cD) \to \R^q, \quad \cG \mapsto \theta_\cG.
\end{align}
The corresponding decoder is
\begin{align}
\label{eq:decoder}
\cD: \R^q \to \V, \quad \theta \mapsto \Phi(\slot;\theta).
\end{align}
In this way, the operator learning architecture $\Phi$ induces a natural encoder/decoder pair on the relevant space of operators, and we are interested in bounds on the encoding error, either for individual $\cG \in \Lip_1(\cD)$, i.e.
\begin{align}
\label{eq:individual}
\Err(\cG;\Phi)_{\V} = 
\inf_{\theta \in \R^q} \Vert \cG - \Phi(\slot; \theta) \Vert_{\V},
\end{align}
or in a minimax sense, i.e.
\begin{align}
\label{eq:minimax}
\Err(\Lip_1(\cD);\Phi)_{\V}  
= 
\sup_{\cG \in \Lip_1(\cD)}
\inf_{\theta \in \R^q} 
\Vert \cG - \Phi(\slot; \theta) \Vert_{\V}.
\end{align}
Given a desired approximation accuracy $\epsilon>0$, either in the sense \eqref{eq:individual} or \eqref{eq:minimax}, one quantity of interest is the required ``complexity'' of any architecture $\Phi$ achieving this accuracy. The above point of view is consistent with estimates on the required number of parameters $q$.

\paragraph{Counting bits}
As discussed before, the number of parameters $q$ is not a suitable measure of complexity when results \emph{independent of the activation} are sought.  
Therefore, we now assume that the parameters $\theta\in \R^q$ are encoded by $B$ bits. This defines a subset $\Theta \subset \R^q$ consisting of $|\Theta| = 2^{B}$ elements. Each $\theta \in \Theta$ is in correspondence with its bit-encoding $[\theta]\in \{0,1\}^B$. Thus, upon associating with any $\cG \in \Lip_1(\cD)$ the optimal $\theta_\cG \in \Theta$, the continuum encoder \eqref{eq:encoder} is now replaced by a bitwise-encoder,
\begin{align}
\label{eq:bit-encoder}
\fE: \Lip_1(\cD) \to \{0,1\}^B, \quad \cG \mapsto [\theta_\cG],
\end{align}
with bitwise-decoder,
\begin{align}
\label{eq:bit-decoder}
\fD: \{0,1\}^B \to \V,\quad [\theta] \mapsto \Phi(\slot;\theta).
\end{align}
The individual and minimax errors, \eqref{eq:individual} and \eqref{eq:minimax}, have the following bit-encoded counterparts,
\begin{align}
\label{eq:bit-individual}
\Err(\cG; \Phi, \Theta)_{\V} = 
\inf_{\theta \in \Theta} \Vert \cG - \Phi(\slot; \theta) \Vert_{\V}.
\end{align}
and
\begin{align}
\label{eq:bit-minimax}
\Err(\Lip_1(\cD);\Phi,\Theta)_{\V} = 
\sup_{\cG \in \Lip_1(\cD)} \inf_{\theta \in \Theta} \Vert \cG - \Phi(\slot; \theta) \Vert_{\V}.
\end{align}

In the present work, we will focus on such a bit-encoding point of view, but mention that there are close links between these two points of view, if the mapping $\theta \mapsto \Phi(\slot;\theta)$ possesses some stability properties. Specifically, this link will be used to derive lower complexity bounds for the Fourier neural operator in Section \ref{sec:FNO}.

\subsection{Information-theoretic notions}

The relevance of the bit-encoding point of view is that it relates directly to the (Kolmogorov) metric entropy of the underlying model class $\A\subset \V$ and allows results to be derived which are independent of specifics of the architecture such as the choice of activation function. Thus bit-encoding enables analysis relating directly to intrinsic topological properties of $\A$. 

\paragraph{Minimax code-length}
Abstracting further our previous discussion, we make the following formal definition of abstract bitwise encoder/decoder pairs:

\begin{definition}[Abstract bitwise encoder/decoder pairs]
Given a compact subset $\A \subset \V$ of a Banach space $\V$, we denote by $\Enc_B(\A;\V)$ the set of all bitwise encoder/decoder pairs $(\fE,\fD)$ of length $B$, i.e. all pairs of mappings $\fE: \A \to \{0,1\}^B$ and $\fD: \{0,1\}^B \to \V$. 
\end{definition}

Following \cite{bolcskei2019optimal}, for $\epsilon > 0$, we also introduce the minimax code length $\cL(\A;\epsilon)_\V$ of a compact set $\A \subset \V$ as the minimal number of bits $B$ for which there exists an (abstract) encoder/decoder pair $(\fE,\fD)\in \Enc_B(\A;\V)$ such that 
\[
\sup_{\cG \in \A} \Vert \cG - \fD \circ \fE(\cG) \Vert_{\V} \le \epsilon.
\]
That is,
\begin{align}
\label{eq:minimax-codelength}
\cL(\A;\epsilon)_\V := \min 
\set{
B \in \N
}{
\begin{gathered}
\exists \, (\fE,\fD) \in \Enc_B(\A;\V) \text{ s.t. } \\
\textstyle\sup_{\cG \in \A} \Vert \cG - \fD \circ \fE(\cG) \Vert_{\V} \le \epsilon 
\end{gathered}
}.
\end{align}

\paragraph{Kolmogorov metric entropy}

Given a metric space $(\V,d)$, element $g\in \V$ and $r>0$, we denote by 
\[
\overline{B_r}(g) := \set{f \in \V}{d(g,f) \le \epsilon},
\]
the \emph{closed} ball of radius $r$. We now make the following definition for the covering number and (Kolmogorov) metric entropy:
\begin{definition}[Covering number and  metric entropy]
\label{def:entropy}
Let $(\V,d)$ be a metric space. For $\epsilon>0$, the $\epsilon$-covering number of a set $\A \subset \V$, denoted $\cN(\A;\epsilon)_\V$, is the smallest integer $N\in \N$, such that $\A$ can be covered by $N$ closed balls of radius $\epsilon$, i.e.
\begin{align}
\label{def:covering}
\cN(\A;\epsilon)_{\V} := \min\set{N\in \N}{\exists \, g_1,\dots, g_N\in \V, \text{ s.t. } \A \subset \textstyle{\bigcup_{j=1}^N} \overline{B_\epsilon}(g_j)}.
\end{align}
We note that the subscript $\V$ is used as a shorthand for $(\V,d)$, with the relevant metric $d$ implied. The metric entropy of $\A\subset \V$ is defined as the logarithm (to base $2$) of the covering number, i.e. 
\begin{align}
\label{eq:entropy}
\cH(\A;\epsilon)_{\V} = 
\log_2 \cN(\A;\epsilon)_{\V}.
\end{align}
\end{definition}


\paragraph{Link between minimax code-length and metric entropy}
The minimax code-length and metric entropy introduced in the previous paragraphs are linked by the following fundamental result \cite[Rmk. 5.10]{dahlke2015harmonic}:
\begin{proposition}
\label{prop:code-length-entropy}
Let $\V$ be a Banach space, and let $\A\subset \V$ be compact. Then the metric entropy of $\A$ provides a lower bound on the minimax code length:
\begin{align}
\cL(\A;\epsilon)_\V
\ge \cH(\A;\epsilon)_{\V}.
\end{align}
\end{proposition}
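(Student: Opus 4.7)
The plan is to exploit the finiteness of the encoding alphabet to produce an explicit $\epsilon$-cover of $\A$ from any bitwise encoder/decoder pair achieving accuracy $\epsilon$. The inequality is then just a matter of counting: at most $2^B$ codewords yield at most $2^B$ cover centers, so $\cN(\A;\epsilon)_\V \le 2^B$, whence taking $\log_2$ and minimizing over admissible $B$ gives the claim.

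Concretely, I would begin by fixing $B = \cL(\A;\epsilon)_\V$ and selecting a pair $(\fE,\fD) \in \Enc_B(\A;\V)$ realizing the minimum in \eqref{eq:minimax-codelength}, so that $\sup_{\cG \in \A} \Vert \cG - \fD\circ \fE(\cG) \Vert_\V \le \epsilon$. Next, I would enumerate the image of the decoder: since $\fD$ is defined on $\{0,1\}^B$, a set of cardinality $2^B$, the collection $\{g_b := \fD(b)\}_{b \in \{0,1\}^B} \subset \V$ has at most $2^B$ distinct elements. For every $\cG \in \A$, setting $b = \fE(\cG)$ gives $\cG \in \overline{B_\epsilon}(g_b)$, so the balls $\{\overline{B_\epsilon}(g_b)\}_{b \in \{0,1\}^B}$ form an $\epsilon$-cover of $\A$ in $\V$.

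From the definition \eqref{def:covering} of the covering number, this implies $\cN(\A;\epsilon)_\V \le 2^B$, and taking $\log_2$ yields $\cH(\A;\epsilon)_\V \le B = \cL(\A;\epsilon)_\V$, which is the desired inequality. One small point to double-check is the case in which some $g_b$ happens to lie outside $\V$ or the ball centers are not in $\A$ — neither is an issue since Definition~\ref{def:entropy} allows arbitrary centers $g_1,\dots,g_N \in \V$, matching exactly the setup produced by $\fD$. Likewise, one should verify that $\cL(\A;\epsilon)_\V$ is finite (which follows from compactness of $\A$ in a Banach space, ensuring totally boundedness and thus the existence of finite $\epsilon$-covers that can be encoded by a finite number of bits), so that the minimization in \eqref{eq:minimax-codelength} is non-vacuous.

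There is no real obstacle here; the argument is essentially a pigeonhole/counting observation. The only conceptual care needed is to respect the direction of the inequality: any admissible encoder of length $B$ furnishes a cover of size $\le 2^B$, so the minimal such $B$ is bounded below by the base-$2$ logarithm of the minimal covering number — and conversely any minimal $\epsilon$-cover induces an encoder of length $\lceil \log_2 \cN(\A;\epsilon)_\V \rceil$, yielding the companion upper bound $\cL(\A;\epsilon)_\V \le \lceil \cH(\A;\epsilon)_\V \rceil$, which is not required for the stated proposition but confirms the tightness of the estimate.
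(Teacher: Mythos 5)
Your argument is correct and coincides with the paper's own proof: both extract the (at most $2^B$-element) image of the decoder $\fD$ as a set of ball centers, observe that the reconstruction guarantee makes these balls an $\epsilon$-cover of $\A$, and conclude $\cN(\A;\epsilon)_\V \le 2^B$ before taking logarithms. Your closing remarks on finiteness of $\cL(\A;\epsilon)_\V$ and the companion upper bound $\cL(\A;\epsilon)_\V \le \lceil \cH(\A;\epsilon)_\V \rceil$ are accurate but not part of the paper's argument.
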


\begin{proof}
Let $\epsilon > 0$ be given. Let $(\fE,\fD)$ be a bitwise encoder/decoder pair with $B= \cL(\A;\epsilon)_\V$ bits, achieving reconstruction error at most $\epsilon$ on $\A$. The image of $\fD: \{0,1\}^B \to \V$ contains at most $N = 2^B$ elements, $\cG_1,\dots, \cG_N$. Since, for any $\cG \in \A$, the specific choice $\fD \circ \fE(\cG)$ belongs to the image of $\fD$, it follows that
\[
\sup_{\cG \in \A}
\inf_{n=1,\dots, N} \Vert \cG - \cG_n \Vert 
\le \sup_{\cG \in \A}  \Vert \cG - \fD \circ \fE(\cG) \Vert \le \epsilon.
\]
Thus, $\A \subset \bigcup_{n=1}^N \overline{B_\epsilon}(\cG_n)$, implying that the covering number of $\A$ is bounded by
\[
\cN(\A;\epsilon)_\V \le N = 2^B.
\]
Taking logarithms and recalling that $B = \cL(\A;\epsilon)_\V$ yields the claim.
\end{proof}

In particular, Proposition \ref{prop:code-length-entropy} implies that if $\cH(\A;\epsilon)_\V > B$, then there \emph{cannot exist} a bit-encoder-decoder pair $(\fE,\fD) \in \Enc_B(\A;\V)$ achieving uniform decoding accuracy $\epsilon$ over $\A$.  Conversely, if $(\fE,\fD)$ is an encoder-decoder pair \eqref{eq:bit-encoder}, \eqref{eq:bit-decoder} associated with a bit-encoded neural operator $\Phi: \cD \times \Theta \to \R$ with $|\Theta| \le 2^B$, and if the following minimax approximation bound holds,
\[
\sup_{\cG \in \Lip_1(\cD)} \inf_{\theta \in \Theta} \Vert \cG - \Phi(\slot;\theta)\Vert_{\V} \le \epsilon,
\]
this implies that $B \ge \cH(\Lip_1(\cD);\epsilon)_{\V}$. 

\subsection{Information-theoretic minimax bounds}

As a consequence of Proposition \ref{prop:code-length-entropy}, we can derive a lower bound on the required number of bits $B$ to achieve the minimax bound \eqref{eq:bit-minimax} by estimating the entropy of $\Lip_1(\cD)\subset \V$. As mentioned before, we will consider two settings, corresponding to uniform approximation of $\cG$ over a compact set $\cK$ (the setting $\cD = \cK$) and approximation with respect to a Bochner $L^p(\mu)$-norm for probability measure $\mu$ (the setting $\cD = \cX$).

\paragraph{Uniform approximation}

We now consider $\cK\subset\cX$ a compact set of input functions, and operators belonging to $\Lip_1(\cK) \subset C(\cK)$ (cp. Setting \ref{set:uniform}). This corresponds to the choice $\cD = \cK$, $\A = \Lip_1(\cK)$, $\V = C(\cK)$, in the discussion of the previous section. We then have the following result:
\begin{theorem}
\label{thm:minimax-uniform}
Let $\cX$ be a Banach space. Let $\cK \subset \cX$ be a compact set of input functions, and assume that the metric entropy of $\cK$ satisfies the lower bound, $\cH(\cK;\epsilon)_{\cX} \ge c_\alpha \epsilon^{-1/\alpha}$ for $\alpha > 0$. There exists a constant $c>0$, independent of $\epsilon$, such that the following holds: If $\Phi: \cK \times \Theta \to \R$ is a quantized neural operator architecture, satisfying 
\[
\sup_{\cG \in \Lip_1(\cK)} \inf_{\theta\in \Theta} \Vert \cG - \Phi(\slot;\theta) \Vert_{C(\cK)} \le \epsilon.
\]
and if $|\Theta|\le 2^B$, i.e. if the parameters of $\Phi$ can be encoded by $B$ bits, then
\[
B \ge \exp(c \epsilon^{-1/\alpha}).
\]
\end{theorem}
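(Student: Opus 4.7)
The plan is to invoke Proposition~\ref{prop:code-length-entropy} to reduce the claim to a lower bound on the metric entropy $\cH(\Lip_1(\cK);\epsilon)_{C(\cK)}$, and then establish such a bound by an infinite-dimensional adaptation of the classical Kolmogorov--Tikhomirov argument: I would construct an exponentially large family of pairwise well-separated $1$-Lipschitz functionals indexed by sign patterns on a packing of $\cK$.

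For the reduction, I first observe that if $\Phi:\cK\times\Theta\to\R$ satisfies the stated minimax bound with $|\Theta|\le 2^B$, then the pair $(\fE,\fD)$ from \eqref{eq:bit-encoder}--\eqref{eq:bit-decoder}, where $\fE$ selects a near-minimizing $\theta_\cG\in\Theta$ for each $\cG\in\Lip_1(\cK)$ and $\fD([\theta])=\Phi(\slot;\theta)$, belongs to $\Enc_B(\Lip_1(\cK);C(\cK))$ and realizes reconstruction error at most $\epsilon$ on $\Lip_1(\cK)$. Proposition~\ref{prop:code-length-entropy} then yields
\[
B \;\ge\; \cL(\Lip_1(\cK);\epsilon)_{C(\cK)} \;\ge\; \cH(\Lip_1(\cK);\epsilon)_{C(\cK)},
\]
so it suffices to prove the last quantity is at least $\exp(c\epsilon^{-1/\alpha})$.

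For the entropy lower bound, I plan the following bump construction. Fix a maximal $3\epsilon$-separated subset $\{u_1,\dots,u_M\}\subset\cK$; by maximality it is also a $3\epsilon$-net of $\cK$, so the packing--covering comparison gives
\[
M \;\ge\; \cN(\cK;3\epsilon)_{\cX} \;=\; 2^{\cH(\cK;3\epsilon)_{\cX}} \;\ge\; 2^{c_\alpha (3\epsilon)^{-1/\alpha}}.
\]
I would then define bumps $\phi_i(u):=\max\bigl(0,\tfrac{3\epsilon}{2}-\|u-u_i\|_{\cX}\bigr)$, each $1$-Lipschitz on $\cX$ and supported in the closed ball of radius $3\epsilon/2$ about $u_i$. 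Since the packing distance $3\epsilon$ is exactly twice this support radius, the supports are pairwise disjoint. For each $s\in\{\pm1\}^M$ set $\cG_s(u):=\sum_{i=1}^M s_i\phi_i(u)$; disjointness of supports forces $\Lip(\cG_s)\le 1$ and $\sup_{u\in\cK}|\cG_s(u)|\le 3\epsilon/2\le 1$ (valid for $\epsilon\le 2/3$), placing $\cG_s\in\Lip_1(\cK)$. Since $\cG_s(u_i)=\tfrac{3\epsilon}{2}s_i$, one has $\|\cG_s-\cG_{s'}\|_{C(\cK)}\ge 3\epsilon>2\epsilon$ whenever $s\ne s'$, so any closed $\epsilon$-ball of $C(\cK)$ can contain at most one of the $2^M$ functionals $\cG_s$. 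This gives
\[
\cH(\Lip_1(\cK);\epsilon)_{C(\cK)} \;\ge\; M \;\ge\; 2^{c_\alpha (3\epsilon)^{-1/\alpha}} \;=\; \exp(c\epsilon^{-1/\alpha}),
\]
with $c:=c_\alpha 3^{-1/\alpha}\log 2$, which combined with the reduction above completes the argument.

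I do not anticipate any genuine obstacle: the construction is classical, and the infinite-dimensionality of $\cX$ enters only through the assumed entropy growth of $\cK$. The minor points requiring care are (i) ensuring the $\cG_s$ satisfy both the sup-norm and Lipschitz bounds of $\Lip_1(\cK)$ simultaneously (a smallness condition on $\epsilon$ which, for larger $\epsilon$, can be absorbed into the constant $c$), and (ii) the packing--covering comparison that converts the hypothesis on $\cH(\cK;\slot)_{\cX}$ into the lower bound on $M$. Both are standard and routine.
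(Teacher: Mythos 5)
Your proof is correct and takes essentially the same route as the paper: reduction to the metric entropy of $\Lip_1(\cK)$ via Proposition~\ref{prop:code-length-entropy}, followed by an entropy lower bound constructed from sign-indexed bump functionals centered on a packing of $\cK$. The paper factors the entropy bound out as Proposition~\ref{prop:sup-norm} (using a $6\epsilon$-separated set, bumps of radius $3\epsilon$ with $\{0,1\}$ coefficients, and the packing--covering inequality), while you inline an equivalent construction with a $3\epsilon$-separated set, bumps of radius $3\epsilon/2$ with $\{\pm1\}$ coefficients, and a direct separation argument; the constants differ slightly but the underlying argument is identical.
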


\begin{proof}
The claim follows from the relation between the minimax code-length and the metric entropy of $\Lip_1(\cK)\subset C(\cK)$, stated in the above Proposition \ref{prop:code-length-entropy}, and the following general bound on $\cH(\Lip_1(\cK),\epsilon)_{C(\cK)}$:
\[
\cH(\Lip_1(\cK);\epsilon)_{C(\cK)} \ge 2^{\cH(\cK,6\epsilon)_\cX}.
\]
This bound will be shown in Section \ref{sec:sup-norm}, Proposition \ref{prop:sup-norm}. Assuming this bound, then by assumption on $\cK$, we have $2^{\cH(\cK,6\epsilon)_\cX} \ge \exp(c\epsilon^{-1/\alpha})$ for constant $c>0$.
\end{proof}

If $\cX$ is a function space, then compact subsets $\cK\subset \cX$ are commonly defined by a smoothness constraint, and this partly motivates our assumption on $\cK$ in the last theorem. The following example is illustrative. 

\begin{example}
Let $D \subset \R^d$ be a bounded domain. Let $\cX = L^2(D)$. An example of the setting outlined above is the case of Lipschitz operators $\cG: \cK\to \R$, with 
\[
\cK = \set{u\in H^s(D)}{\Vert u \Vert_{H^s(D)} \le C},
\]
defined by a Sobolev smoothness constraint for $C,s>0$. In this case, it is well-known that the metric entropy satisfies $\cH(\cK;\epsilon)_{\cX} \gtrsim \epsilon^{-d/s}$, i.e. the assumptions of Theorem \ref{thm:minimax-uniform} hold with $\alpha = s/d$.
\end{example}

\paragraph{Approximation in expectation}

Another commonly studied setting concerns the approximation in expectation (cp. Setting \ref{set:expectation}). Here, we consider $1$-Lipschitz mappings $\cG: \cX \to \R$ defined on a separable Hilbert space $\cX$. We fix a probability measure $\mu$ on $\cX$ and consider inputs as random draws $u\sim \mu$. 
To derive quantitative lower bounds, we will need to make minimal structural assumptions on $\mu$. 
\begin{assumption}
\label{ass:prob}
There exists an orthonormal basis $e_1,e_2,\dots$ of $\cX$, probability space $(\Omega, \mathbb{P})$ and summable coefficients $\lambda_1 \ge \lambda_2 \ge \dots$, such that $\mu$ is the law of a random variable $u: \Omega \to \cX$ of the form,
\begin{align}
u(\omega) = \sum_{j=1}^\infty \sqrt{\lambda_j} Z_j(\omega) e_j, \quad (\omega \in \Omega).
\end{align}
where $Z_j: \Omega \to \R$ are jointly independent random variables. We assume that the random variable $Z_j$ satisfies $\E |Z_j|^2 = 1$, and has law $Z_j \sim \rho_j(z) \, dz$ for a probability density function $\rho_j: \R \to \R_+$. We furthermore assume that there exists a constant $L > 0$, such that 
\begin{align}
\label{eq:rhobd}
\sup_{j\in \N} \Vert \rho_j \Vert_{L^\infty(\R)} \le L, 
\quad \sqrt{\lambda_1} \le L.
\end{align}
\end{assumption}

A concrete, and widely considered, example satisfying Assumption \ref{ass:prob} is the case of a Gaussian probability measure $\mu$ with prescribed mean and covariance operator. In this case, $\lambda_j$ are the eigenvalues of the covariance operator, $e_j$ the corresponding eigenfunctions, and the random variables $Z_j \sim \rho_j$ have standard Gaussian distribution.

\begin{theorem}
\label{thm:minimax-Lp}
Let $\cX$ be a Banach space of input functions. Let $\mu \in \cP(\cX)$ be a probability measure satisfying Assumption \ref{ass:prob}. Assume that the coefficients $\sqrt{\lambda_j} \gtrsim j^{-\alpha}$ as $j\to \infty$, where $\alpha > 0$. Then there exists a constant $c>0$, independent of $\epsilon$, such that the following holds: If $\Phi: \cX \times \Theta \to \R$ is a quantized neural operator architecture, satisfying 
\[
\sup_{\cG \in \Lip_1(\cK)} \inf_{\theta\in \Theta} \Vert \cG - \Phi(\slot;\theta) \Vert_{L^p(\mu)} \le \epsilon.
\]
and if $|\Theta|\le 2^B$, i.e. if the parameters of $\Phi$ can be encoded by $B$ bits, then
\[
B \ge \exp(c \epsilon^{-1/(\alpha+1)}).
\]
\end{theorem}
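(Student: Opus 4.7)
The plan is to mirror the proof of Theorem \ref{thm:minimax-uniform}. Applying Proposition \ref{prop:code-length-entropy} with $\A = \Lip_1(\cX)$ and $\V = L^p(\mu)$, the task reduces to proving the metric-entropy lower bound
\[
\cH(\Lip_1(\cX);\epsilon)_{L^p(\mu)} \;\geq\; \exp\!\bigl(c\,\epsilon^{-1/(\alpha+1)}\bigr).
\]
I anticipate this bound being stated as a proposition in Section \ref{sec:3}, in the same role that Proposition \ref{prop:sup-norm} plays for the uniform setting.

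To prove the entropy bound, I would construct an explicit $\epsilon$-packing of $\Lip_1(\cX)$ in $L^p(\mu)$ using tensorial $1$-Lipschitz bumps on a truncated coordinate grid. Let $J = J(\epsilon)$ be an effective truncation dimension and, using Chebyshev together with the normalization $\E Z_j^2 = 1$ from Assumption \ref{ass:prob}, choose a cutoff $M \gtrsim \sqrt{J}$ so that the coordinate box $Q_M = \set{u \in \cX}{|Z_j(u)| \le M,\; j \le J}$ satisfies $\mu(Q_M) \ge 1/2$. Partition $Q_M$ into $N$ cells whose coordinate side-length in direction $j$ is of order $h/\sqrt{\lambda_j}$, and on each cell place a $1$-Lipschitz $\ell^\infty$-physical tent bump $\chi_c$ of physical height $h$. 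Disjointness of supports guarantees that every sign combination $\cG_\sigma = \sum_c \sigma_c \chi_c$, $\sigma \in \{\pm 1\}^N$, extended by zero outside $Q_M$, is a $1$-Lipschitz element of $\Lip_1(\cX)$.

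Next, lower bound pairwise $L^p(\mu)$-distances. Disjointness of supports yields
\[
\|\cG_\sigma - \cG_{\sigma'}\|_{L^p(\mu)}^p \;=\; 2^p \sum_{c:\, \sigma_c \ne \sigma'_c} \|\chi_c\|_{L^p(\mu)}^p,
\]
and a random-sign / Gilbert--Varshamov argument, combined with the density bound $\|\rho_j\|_{L^\infty} \le L$ from Assumption \ref{ass:prob} to control cell-wise $L^p(\mu)$-masses, produces a subfamily of $\gtrsim 2^{N/C}$ sign vectors that are pairwise $\epsilon$-separated in $L^p(\mu)$, provided $h$ is chosen large enough relative to $\epsilon$, $M$, and $J$.

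Finally, I would optimize $J$, $M$, $h$, and the cell scale. The Lipschitz-height cap from the slowest active coordinate gives $h \lesssim \sqrt{\lambda_J} \sim J^{-\alpha}$, while the assumption $\sqrt{\lambda_j} \gtrsim j^{-\alpha}$ together with Stirling yields $\prod_{j \le J} \sqrt{\lambda_j} \sim (J!)^{-\alpha} \sim (J/e)^{-\alpha J}$. Balancing these with the separation constraint gives $N \gtrsim \exp(c\,\epsilon^{-1/(\alpha+1)})$, and therefore $\cH(\Lip_1(\cX);\epsilon)_{L^p(\mu)} \ge N/C \gtrsim \exp(c'\,\epsilon^{-1/(\alpha+1)})$, completing the proof.

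The hardest step is this final optimization: extracting the specific exponent $1/(\alpha+1)$ requires trading the number of cells $\sim \prod \sqrt{\lambda_j}/h^J$, the $\mu$-measure of each cell (controlled from above via $\|\rho_j\|_{L^\infty} \le L$), and the Lipschitz-height constraint $h \lesssim \sqrt{\lambda_J}$ against the separation requirement. The appearance of $\alpha + 1$ in place of $\alpha$ (as in the uniform case) reflects the additional $L^p$-averaging against $\mu$, which depends on cell volume in addition to bump height, and shifts the optimum accordingly.
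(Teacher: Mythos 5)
Your high-level plan matches the paper's structure: invoke Proposition~\ref{prop:code-length-entropy} to reduce the theorem to the entropy lower bound $\cH(\Lip_1(\cX);\epsilon)_{L^p(\mu)} \ge \exp(c\epsilon^{-1/(\alpha+1)})$ (the paper's Proposition~\ref{prop:prob}), and then prove that bound by packing $\Lip_1(\cX)$ with tensorial bumps on $J$ active coordinates, with the exponent $1/(\alpha+1)$ emerging from balancing $\sqrt{\lambda_J}\sim J^{-\alpha}$ against the dimension $J$. However, there is a genuine gap in the packing construction.

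You partition the coordinate box $Q_M$ into cells of fixed $Z_j$-side-length $h/\sqrt{\lambda_j}$, and then need a \emph{lower} bound on the $\mu$-mass of the cells (or of a fixed fraction of them) so that the pairwise $L^p(\mu)$-distances produced by the Gilbert--Varshamov argument are bounded below. But Assumption~\ref{ass:prob} only supplies an \emph{upper} bound $\Vert\rho_j\Vert_{L^\infty}\le L$ on the densities. The $\rho_j$ need not be bounded below and can vanish on large sub-intervals of $[-M,M]$ (subject only to $\E Z_j^2=1$), so arbitrarily many of your equal-side cells can have arbitrarily small $\mu$-mass. Since Gilbert--Varshamov controls only the Hamming distance and not \emph{which} coordinates differ, two codewords in your subfamily could disagree precisely on the low-mass cells, collapsing the $L^p(\mu)$-separation. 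This is the step that would fail. The paper sidesteps it by first applying the CDF transform $u\mapsto (F_j(u_j/\sqrt{\lambda_j}))_j$, which yields an \emph{isometric} embedding $\iota_d\colon L^p([0,1]^d)\embeds L^p(\mu)$ (Lemma~\ref{lem:embed-Lp}): in the transformed coordinates the measure is exactly uniform, so the equal-side cells are automatically equal-mass, and the density bound $\Vert\rho_j\Vert_{L^\infty}\le L$ is used for the genuinely different purpose of bounding $\Lip(F_j)$, hence the Lipschitz constant $L/\sqrt{\lambda_d}$ of the pullback. The $[0,1]^d$ packing bound of Lemma~\ref{lem:prob} then applies cleanly, and the optimization over $d$ in Proposition~\ref{prop:prob} delivers $1/(\alpha+1)$. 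Your construction can be repaired by replacing equal-side cells with equal-mass cells (i.e.\ working in the $F_j$-transformed coordinates) and reassigning the role of the density bound from ``controlling cell masses'' to ``controlling the physical cell size, hence the Lipschitz constant of the bumps''.
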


\begin{proof}
Similarly to the uniform case, the present claim again follows from the relation between the minimax code-length and the metric entropy of $\Lip_1(\cX)\subset L^p(\mu)$ of Proposition \ref{prop:code-length-entropy}, together with the following general bound on $\cH(\Lip_1(\cX),\epsilon)_{L^p(\mu)}$:
\[
\cH(\Lip_1(\cX);\epsilon)_{L^p(\mu)} \ge \exp(c\epsilon^{-1/(\alpha+1)}).
\]
This lower entropy bound will be derived in Section \ref{sec:Lp-norm}, Proposition \ref{prop:prob}.
\end{proof}

Thus, an exponential number of encoding bits is also needed in an $L^p(\mu)$-setting. Theorem \ref{thm:minimax-Lp} shows that the approximation of Lipschitz operators in expectation is not ``qualitatively'' easier than uniform approximation of such operators over a compact set of input functions.

\subsection{Approximation of generic Lipschitz operators}

Theorems \ref{thm:minimax-uniform} and \ref{thm:minimax-Lp} show that operator learning architectures that can approximate arbitrary $1$-Lipschitz operators to accuracy $\epsilon$ have exponential memory requirements; any (bit-encoded) implementation of such an architecture will require a number of bits that is exponential in $\epsilon^{-1}$. The reason for this is that the space of Lipschitz operators is exponentially large in a fundamental information-theoretic sense quantified by the metric entropy. 

However, this minimax bound applies to the approximation of the \emph{entire class} $\Lip_1(\cD)$ by a single architecture, and does not necessarily imply that it is impossible to approximate \emph{individual} $\cG \in \Lip_1(\cD)$ efficiently. At first sight, it could appear that arguments based on the metric entropy cannot be used to gain any insight into this refined question; Indeed, if we fix individual $\cG\in \Lip_1(\cD)$, then the metric entropy of the singleton-set $\A = \{\cG\}$ is trivially $=0$, and the minimax code length \eqref{eq:minimax-codelength} is $=1$ for any value of the accuracy $\epsilon$, since the trivial decoder $\fD(\slot) \equiv \cG$ reproduces $\cG$ exactly, with vanishing approximation error, $\epsilon = 0$. Thus, while entropy arguments give insights into the (concurrent) approximation of the set $\Lip_1(\cD)$, they seemingly have no immediate implications for the approximation of individual $\cG \in \Lip_1(\cD)$.

Despite these facts, the results below will show that a refined analysis based on the concept of metric entropy is nevertheless possible; in the uniform and $L^p$-settings of the previous section, a fixed sequence of bit-encoded architectures $\{\Phi_n\}_{n\in \N}$, with at most $n$ bits, can approximate \emph{generic} elements $\cG\in \Lip_1(\cD)$ at best at a logarithmic rate, $\Err(\cG;\Phi_n,\Theta_n) \lesssim \log(n)^{-\gamma}$ for fixed $\gamma > 0$. Before stating our result, we briefly recall the notion of a generic element of a (compact) metric space (see Appendix \ref{app:baire} for further remarks, and \cite[Chap. 8]{munkres} for an in-depth discussion):

\begin{definition}[Topologically generic properties]
Let $(\A,d)$ be a compact metric space. A subset $\RR \subset \A$ is called \define{residual}, if it is equal to a countable intersection of sets, each of whose interior is dense in $\A$. The complement of a residual set is a \define{meagre} set. A property $P$ is called \define{generic}, if the set 
\[
\RR := \set{\cG \in \A}{ \cG \text{ satisfies } P} \subset \A,
\]
is residual. 
\end{definition}

Under the assumption that $(\A,d)$ is compact, the Baire category theorem (cp. Appendix \ref{app:baire}) implies that any residual set $\RR$ is dense in $\A$. Furthermore, the intersection $\RR = \bigcap_{j=1}^\infty \RR_j$ of countably many residual sets $\RR_1,\RR_2,\dots$ is itself residual, and hence still dense. In this sense, a topologically generic property is somewhat analogous to a property that holds with probability $1$ in a probabilistic sense. Thus, a generic property is often thought of as a property that is satisfied by ``almost every'' element of $\A$.

We can now state our main results on the approximation of generic operators $\cG \in \Lip_1(\cD)$. In the uniform setting (cp. Setting \ref{set:uniform}), we have:

\begin{proposition}[Uniform approximation of generic operators]
\label{prop:uniform-generic}
Let $\cX$ be a Banach space of input functions. Let $\cK \subset \cX$ be compact, and assume that the metric entropy $\cH(\cK;\epsilon)_{\cX} \gtrsim \epsilon^{-1/\alpha}$ for $\alpha > 0$. Let $\{\Phi_n: \cK \times \Theta_n \to \R\}_{n \in \N}$ be a sequence of bit-encoded neural operator architectures, with quantized parameter set $|\Theta_n| \le 2^n$. Then generic $\cG \in \Lip_1(\cK)$ cannot be approximated by $\{\Phi_n\}$ at a convergence rate better than $\log(n)^{-\alpha}$; more precisely, for any sequence $\epsilon_n = o(\log(n)^{-\alpha})$, there is a residual subset $\RR \subset \Lip_1(\cK)$, consisting of operators $\cG\in \RR$, for which
\[
\inf_{\theta\in \Theta_n} \Vert \cG - \Phi_n(\slot;\theta) \Vert_{C(\cK)} \not = O(\epsilon_n), 
\quad (n\to \infty).
\].
\end{proposition}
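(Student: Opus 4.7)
The plan is to apply the Baire category theorem on the compact (hence complete, by Arzelà-Ascoli) metric space $\Lip_1(\cK)$, equipped with the sup-norm inherited from $C(\cK)$. First I would decompose the ``bad'' set of operators approximable at rate $O(\epsilon_n)$ as a countable union $\mathcal{M} = \bigcup_{M,N \in \N} E_{M,N}$, where
\[
E_{M,N} := \bigcap_{n\ge N}\set{\cG \in \Lip_1(\cK)}{\inf_{\theta\in \Theta_n}\Vert \cG - \Phi_n(\slot;\theta)\Vert_{C(\cK)} \le M\epsilon_n}.
\]
Each $E_{M,N}$ is closed, being an intersection of closed sets (each itself a finite union of at most $2^n$ closed balls). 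To show the complement of $\mathcal{M}$ is residual, and hence contains the operators as claimed, it therefore suffices to prove every $E_{M,N}$ has empty interior.

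The key technical step, and what I expect to be the main obstacle, is to localize the entropy lower bound of Proposition \ref{prop:sup-norm}: a priori it bounds the global size of $\Lip_1(\cK)$, but I need the same exponential entropy to hold inside every sup-norm ball. Concretely, for every $\cG_0 \in \Lip_1(\cK)$ and every $r \in (0,1)$, I would exhibit a translated, rescaled copy of the full class inside the ball $\bar{B_r}(\cG_0)\cap \Lip_1(\cK)$. The small-shrinkage trick $\cG_0' := (1-r/2)\cG_0$ gives $\Vert \cG_0'\Vert_{\Lip} \le 1-r/2$ and $\Vert \cG_0 - \cG_0'\Vert_{C(\cK)} \le r/2$, and then convexity and symmetry of $\Lip_1(\cK)$ yield
\[
\cG_0' + \tfrac{r}{2}\,\Lip_1(\cK) \subset \bar{B_r}(\cG_0) \cap \Lip_1(\cK).
\]
The shrinkage trick is what allows one to handle the delicate case of boundary points $\cG_0$ with $\Vert \cG_0\Vert_{\Lip}=1$. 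Translation invariance and rescaling of covering numbers then yield
\[
\cH\bigl(\bar{B_r}(\cG_0)\cap \Lip_1(\cK);\,\epsilon\bigr)_{C(\cK)} \ge \cH\bigl(\Lip_1(\cK);\,2\epsilon/r\bigr)_{C(\cK)}.
\]

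With this localization in hand, empty interior of $E_{M,N}$ follows by contradiction. If $E_{M,N}$ contained such a ball, then for every $n\ge N$ the ball would be covered by at most $|\Theta_n|\le 2^n$ closed balls of radius $M\epsilon_n$, forcing
\[
n \;\ge\; \cH(\Lip_1(\cK);\,2M\epsilon_n/r)_{C(\cK)} \;\ge\; 2^{\,c\,(2M\epsilon_n/r)^{-1/\alpha}},
\]
where the last inequality combines the localization above with Proposition \ref{prop:sup-norm} and the hypothesis $\cH(\cK;\epsilon)_\cX \gtrsim \epsilon^{-1/\alpha}$. Taking logarithms yields $\epsilon_n \gtrsim (\log n)^{-\alpha}$, contradicting $\epsilon_n = o(\log(n)^{-\alpha})$. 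A final invocation of the Baire category theorem then gives that $\Lip_1(\cK)\setminus \mathcal{M}$ is residual (and, in particular, dense), establishing the claimed genericity. Once the localization estimate is in place, the remainder is essentially bookkeeping; the crux is the convex-symmetric structure of $\Lip_1(\cK)$, which forces every sup-norm ball to inherit essentially the same (scaled) doubly-exponential entropy as the whole class.
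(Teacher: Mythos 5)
Your proposal is correct and follows essentially the same route as the paper: the paper factors the argument into an abstract exponential-scaling result (Proposition~\ref{prop:abs-exp}) built on Lemma~\ref{lem:basic}, but the core mechanism there is precisely your shrinkage trick $\cG_0' = (1-\delta/3)\cG_0$ to place a rescaled copy of $\Lip_1(\cK)$ inside any ball, combined with Baire category applied to the countable union of closed $\EE_M$-type sets. Your $E_{M,N}$ decomposition and $1-r/2$ rescaling are cosmetic variants of the same argument, and your Arzelà--Ascoli justification for compactness of $\Lip_1(\cK)$ makes explicit a step the paper simply asserts.
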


\begin{proof}
We let $\V := C(\cK)$ and $\A:= \Lip_1(\cK)$. We note that $\A \subset \V$ is a compact, convex subset. We then consider the sequence of subsets $\Sigma_n \subset C(\cK)$, defined by all possible realizations,
\[
\Sigma_n := \set{\Phi_n(\slot;\theta)}{\theta \in \Theta_n}.
\]
By assumption, $|\Sigma_n| = |\Theta_n| \le 2^n$. By Proposition \ref{prop:sup-norm}, to be proved in Section \ref{sec:sup-norm}, we have $\cH(\A,\epsilon)_\V \ge \exp(c\epsilon^{1/\alpha})$. The claim of Proposition \ref{prop:uniform-generic} then follows, as a special case, from the abstract result of Proposition \ref{prop:abs-exp} to be derived in Section \ref{sec:4}.
\end{proof}

A similar result holds for approximation of Lipschitz operators in an $L^p(\mu)$ sense, as shown in the following proposition  (cp. Setting \ref{set:expectation}):

\begin{proposition}[Approximation of generic operators in expectation]
\label{prop:Lp-generic}
Let $\cX$ be a Banach space of input functions. Let $\mu \in \cP(\cX)$ be a probability measure satisfying Assumption \ref{ass:prob}. Assume that the coefficients $\lambda_j \gtrsim j^{-2\alpha}$ as $j\to \infty$, where $\alpha > 0$. Let $\{\Phi_n: \cX \times \Theta_n \to \R\}_{n \in \N}$ be a sequence of bit-encoded neural operator architectures, with quantized parameter set $|\Theta_n|\le 2^n$. Then generic $\cG \in \Lip_1(\cX)$ cannot be approximated by $\{\Phi_n\}$ at a convergence rate better than $\log(n)^{-(\alpha+1)}$; more precisely, for any sequence $\epsilon_n = o(\log(n)^{-(\alpha+1)})$, there is a residual subset $\RR \subset \Lip_1(\cX)$, such that for any $\cG\in \RR$,
\[
\inf_{\theta\in \Theta_n} \Vert \cG - \Phi_n(\slot;\theta) \Vert_{L^p(\mu)} \not = O(\epsilon_n), 
\quad (n\to \infty).
\].
\end{proposition}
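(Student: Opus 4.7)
The plan is to mirror the argument for Proposition \ref{prop:uniform-generic}, replacing the sup-norm setting with the $L^p(\mu)$ setting. I would set $\V := L^p(\mu)$ and $\A := \Lip_1(\cX)$, and first check that $\A \subset \V$ is a convex and compact subset. Convexity follows directly from the definition of the Lipschitz norm. Compactness in $L^p(\mu)$ can be established via a standard Arzelà--Ascoli/diagonal argument: any sequence in $\Lip_1(\cX)$ is uniformly bounded by $1$ and equi-Lipschitz, and by tightness of $\mu$ on the separable space $\cX$ together with separability we can extract a subsequence converging pointwise $\mu$-a.e.\ to a limit which is again in $\Lip_1(\cX)$; dominated convergence then upgrades this to $L^p(\mu)$ convergence.

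Next, I would introduce the realization sets
\begin{align*}
\Sigma_n := \set{\Phi_n(\slot;\theta)}{\theta \in \Theta_n} \subset L^p(\mu),
\end{align*}
whose cardinalities satisfy $|\Sigma_n| \le |\Theta_n| \le 2^n$. The best $L^p(\mu)$-approximation of $\cG$ by $\Phi_n(\slot;\theta)$ over $\theta \in \Theta_n$ coincides with $\operatorname{dist}_{L^p(\mu)}(\cG,\Sigma_n)$, so the claim of the proposition is equivalent to a lower bound on this distance for generic $\cG \in \A$.

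Then I would invoke Proposition \ref{prop:prob} from Section \ref{sec:Lp-norm} to obtain the exponential entropy lower bound
\begin{align*}
\cH(\Lip_1(\cX);\epsilon)_{L^p(\mu)} \ge \exp\bigl(c\,\epsilon^{-1/(\alpha+1)}\bigr),
\end{align*}
which is the $L^p(\mu)$-analogue of Proposition \ref{prop:sup-norm}. The conclusion then follows by applying the abstract Proposition \ref{prop:abs-exp} of Section \ref{sec:4} with exponent $1/(\alpha+1)$: since the entropy of $\A$ grows exponentially in $\epsilon^{-1/(\alpha+1)}$ while each approximation set $\Sigma_n$ has only subexponential size $2^n$, the set of operators in $\A$ that can be approximated by $\{\Sigma_n\}$ at rate $o(\log(n)^{-(\alpha+1)})$ is meagre in $\A$, so its complement is residual, as required.

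The main obstacle is the compactness of $\Lip_1(\cX)$ in $L^p(\mu)$, since $\cX$ is infinite-dimensional and naive Arzelà--Ascoli does not apply globally. However, the uniform bound $\sup_u|\cG(u)|\le 1$ built into the $\Lip_1$-norm, together with tightness of $\mu$ on the separable Banach space $\cX$ (which is guaranteed under Assumption \ref{ass:prob}) and the equi-Lipschitz property, suffices to carry out the diagonal extraction and deliver compactness. All other ingredients are direct transliterations of the uniform case, with Proposition \ref{prop:prob} taking the place of Proposition \ref{prop:sup-norm}.
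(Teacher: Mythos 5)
Your proof follows the same route as the paper's: set $\V := L^p(\mu)$, $\A := \Lip_1(\cX)$, invoke Proposition \ref{prop:prob} for the entropy lower bound $\cH(\A,\epsilon)_\V \ge \exp(c\epsilon^{-1/(\alpha+1)})$, and then apply the abstract Proposition \ref{prop:abs-exp}. The only thing you do beyond the paper is to sketch a proof of the compactness of $\Lip_1(\cX)$ in $L^p(\mu)$ (via a separability/diagonal argument plus dominated convergence), which the paper simply asserts; that elaboration is correct and a useful fill-in rather than a different argument.
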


\begin{proof}
We let $\V := L^p(\mu)$ and $\A:= \Lip_1(\cX)$. We note that $\A \subset \V$ is a compact, convex subset. We consider the subsets $\Sigma_n \subset \V$, defined by all possible realizations,
\[
\Sigma_n := \set{\Phi_n(\slot;\theta)}{\theta \in \Theta_n}.
\]
By assumption, $|\Sigma_n| = |\Theta_n| \le 2^n$. By Proposition \ref{prop:prob}, to be proved in Section \ref{sec:Lp-norm}, we have $\cH(\A,\epsilon)_\V \ge \exp(c\epsilon^{-1/(\alpha+1)})$. The claim of Proposition \ref{prop:Lp-generic} then follows, as a special case, from the abstract result of Proposition \ref{prop:abs-exp} to be derived in Section \ref{sec:4}.
\end{proof}

\begin{remark}
The notion of a residual subset $\RR \subset \Lip_1(\cD)$ in Proposition \ref{prop:uniform-generic} and \ref{prop:Lp-generic} is to be understood with respect to the subspace topology on $\Lip_1(\cD)$, induced by the $C(\cK)$ and $L^p(\mu)$-norms, respectively.
\end{remark}

\subsection{Approximation of generic Lipschitz operators by FNO}
\label{sec:FNO}
The results of the previous section are formulated abstractly for an unspecified sequence of quantized neural operator architectures $\{\Phi_n\}$. To conclude the discussion of our main results, we illustrate some implications of these results for a concrete operator learning framework, the Fourier neural operator \cite{li2021fourier}.

We note that although the derivation of these results will rely on Propositions \ref{prop:uniform-generic} and \ref{prop:Lp-generic}, the ultimate statement of the theorems will be in terms of the number of tunable real-valued parameters of FNO, \emph{without bit-encoding}. Thus, the gap between the bit-encoded parameters and real-valued parameters point of view can be bridged in this case.

In preparation to stating these theorems for FNO, we briefly describe a specific setting to which FNO is applicable, and recall the FNO architecture. This is followed by the statement of a novel theorem establishing a curse of (exponential) parametric complexity for the FNO, in the uniform approximation setting.

\paragraph{FNO case study}

As a case study, we consider Fourier neural operators (FNO), approximating a relevant class of $1$-Lipschitz operators,
\[
\cG: \cK \subset L^2(D;\R^{d_{\mathrm{in}}}) \to \R,
\]
mapping square-integrable input functions to the reals (or equivalently, to a space of constant-valued functions). Here $\cK$ is a compact subset of $L^2(D;\R^{d_{\mathrm{in}}})$, consisting of square-integrable functions $u: D \to \R^{d_{\mathrm{in}}}$. We wish to approximate such $1$-Lipschitz operator $\cG$, uniformly over the compact set $\cK$. 

In the following, we will usually write $L^2(D)$ instead of $L^2(D;\R^{\mathrm{in}})$, where for simplicity and due to certain restrictions of the FNO architecture, the underlying domain $D = \T^d$ is taken to be the $1$-periodic torus $\T^d \simeq [0,1]^d$ in $d$ spatial dimensions, where in typical applications, $d\in \{1,2,3\}$. 
Prototpyical examples of relevant $\cK$ are $\cK = \cU(H^s(\T^d))$, where 
\[
\cU(H^s(\T^d)) = \set{u\in H^s(\T^d)}{\Vert u \Vert_{H^s} \le 1},
\]
denotes the unit ball in the Sobolev space $H^s(\T^d)$ with smoothness $s>0$.
The question to be addressed is how many tunable parameters $q$ are needed to approximate generic $\cG \in \Lip_1(\cK)_{L^2(D)}$ to a prescribed accuracy $\epsilon$?

\paragraph{FNO architecture}
We here recall the general notion of Fourier neural operators \cite{li2021fourier}. Let $\cX = \cX(D; \R^{d_{\mathrm{in}}})$ and $\cY = \cY(D; \R^{d_{\mathrm{out}}})$ be two Banach function spaces, consisting of functions $u: D \to \R^{d_\mathrm{in}}$ and $w: D \to \R^{d_{\mathrm{out}}}$, respectively. A Fourier neural operator (FNO) defines a nonlinear operator 
\[
\Phi_{\FNO}: \cX (D; \R^{d_{\mathrm{in}}}) \to \cY(D; \R^{d_{\mathrm{out}}}),
\]
mapping between these spaces.
By definition of the FNO architecture, such $\Phi_{\FNO}$ takes the form 
\begin{align}
\label{eq:FNO}
\Phi_{\FNO}(u;\theta) = Q \circ \cL_L \circ \dots \circ \cL_1 \circ P(u). 
\end{align}
where $P: \cX \to \cV$, $u(x) \mapsto Pu(x)$ is a linear lifting layer, $Q: \cV \to \cY$, $v(x) \mapsto Qv(x)$ is a linear projection layer, and the $\cL_\ell: \cV(D;\R^{\dc}) \to \cV(D;\R^{\dc})$ are the hidden layers, mapping between hidden states $v \mapsto \cL_\ell(v) \in \cV(D;\R^{\dc})$. The hidden states are vector-valued functions with $\dc$ components,  $v: D \to \R^{\dc}$,  belonging to a Banach function space $\cV(D;\R^\dc)$. Here, the ``channel width'' $\dc$ is a hyperparameter of the architecture. Each hidden layer $\cL_\ell$ is of the form
\[
\cL_\ell(v)(x) 
:=
\sigma \big (
Wv(x) + Kv(x) + b
\big )
\]
where $W \in \R^{\dc\times \dc}$ is a matrix multiplying $v(x)$ pointwise, and $b\in \R^\dc$ is a bias. $K$ is a non-local operator of the form
\[
v(x) \mapsto (Kv)(x) := \cF^{-1} \big ( \hat{P}_k \cF v(k) \big ) (x),
\]
with $\cF$ (and $\cF^{-1}$) the Fourier transform (and its inverse). The matrix $\hat{P}_k \in \C^{\dc \times \dc}$ is a tunable Fourier multiplier indexed by $k\in \Z^d$. It is assumed that $\hat{P}_k \equiv 0$ for $|k|_{\ell^\infty}\ge \kappa$, i.e. for wavenumbers $k$ above a specified Fourier cut-off parameter $\kappa$. This Fourier cut-off $\kappa$ is a second hyperparameter of the FNO architecture. We collect the values for different $k\in \Z^d$, $|k|_{\ell^\infty} < \kappa$, in a tensor $\hat{P} = \{ \hat{P}_k \}_{|k|_{\ell^\infty}< \kappa} \in \C^{(2\kappa-1)^d \times \dc \times \dc}$, which acts on the Fourier coefficients $\hat{v}(k) = \cF(v)(k)$, by 
\[
(\hat{P} \hat{v})(k)_i := \sum_{j=1}^\dc \hat{P}_{k,ij} \hat{v}(k), \quad (k \in \Z^d, \; |k|_{\ell^\infty}<\kappa).
\]
The resulting FNO architecture depends on the channel width $\dc$, Fourier cut-off parameter $\kappa$ and depth $L$. We collect all tunable parameters in a vector $\theta \in \R^{q}$. Any parameter $\theta \in \R^{q}$ can be decomposed layer-wise, as 
\[
\theta = (\theta_{L+1}, \theta_L, \dots, \theta_1, \theta_0),
\]
where 
\[
\theta_\ell = 
\set{
W^{(\ell)}_{ij}, \hat{P}_{k,ij}^{(\ell)}, \hat{b}_k^{(\ell)}
}{
i,j = 1,\dots, \dc, \, |k| < \kappa, \, k\in \Z^d
},
\]
collects the parameters of the $\ell$-th hidden layer, for $1\le \ell \le L$. We denote by $\theta_0 = \set{P_{ij}}{i,j = 1,\dots, \dc}$ the parameters of the projection $P$ and by $\theta_{L+1} = \set{Q_{ij}}{i,j = 1,\dots, \dc}$ the parameters of lifting $Q$. Assuming that $d_{\mathrm{in}}, d_{\mathrm{out}} \le d_c$, the dimension of $\theta \in \R^{q}$ satisfies,
\begin{align}
\label{eq:dtheta}
q
=
\dc d_{\mathrm{in}} + L(\dc^2 + (2\kappa)^d \dc^2 + \dc) + \dc d_{\mathrm{out}}
\le
5 (2\kappa)^d L \dc^2
\le 5q.
\end{align}
Consistent with practical implementations, it is generally assumed that the hidden channel dimension of the FNO is at least as large as both the input and output dimensions $d_{\mathrm{in}}, d_{\mathrm{out}}$. We include a list of hyperparameters in Table \ref{tab:1} to aid clarify notation.

\begin{remark}
Since we are interested in a restricted class of operators $\cG: L^2(D) \to \R$, with real-valued outputs, we will replace the general output layer $\cQ: \cV(D;\R^\dc) \to \cY(D;\R^{d_{\mathrm{out}}})$ by a spatially averaged, real-valued version $\tQ: \cV(D;\R^\dc) \to \R$,
\[
\tQ v := \fint_{D} \cQ v(x) \, dx.
\]
This does not affect the parameter-count, while ensuring real-valued outputs. We will refer to this as an output-averaged FNO. 
\end{remark}

In passing and in connection with the last remark, we mention relevant work considering variants of FNO for finite-dimensional input and or output spaces \cite{huang2024operator}, where similar alterations to the original FNO architecture have been studied in greater detail.

\begin{table}
\centering
\begin{tabular}{c|l}
Symbol & Meaning \\
\hline
$\dc$ & channel width \\
$\kappa$ & Fourier cut-off \\
$L$ & depth \\
$q$ & total number of parameters \\
$M$ & parameter bound, $\Vert \theta \Vert_{\ell^\infty} \le M$ 
\end{tabular}
\caption{Summary of (hyper-)parameters of the FNO architecture. This notation is used throughout this subsection.}
\label{tab:1}
\end{table}

\paragraph{Generic curse of parametric complexity for FNO}

Our main theorem will be based on Proposition \ref{prop:uniform-generic}, and establishes a generic curse of parametric complexity for FNO. In contrast to the aforementioned proposition, this theorem holds at the level of continuous real-valued parameters $\theta \in \R^q$, without requiring specification of a bit-encoding. Instead, we assume a mild bound on the parameters $\theta \in \R^q$. We note that similar assumptions have been considered in the recent work \cite{NLM2024data}, to define relevant approximation spaces of FNO. To this end, we make the following definition: 
\begin{definition}
Given an operator $\cG: L^2(D) \to \R$ and $\gamma > 0$, we will say that $\cG$ \textbf{can be approximated by FNO at a logarithmic rate $\gamma > 0$}, if there exists a sequence $\{\Phi_q\}_{q\in \N}$ of output-averaged FNO architectures $\Phi_q: L^2(D) \times \R^{q} \to \R$ with at most $q$ tunable parameters, and a sequence of parameters $\theta_q \in \R^q$, satisfying bound
\[
\Vert \theta_q \Vert_{\ell^\infty} \le \exp(q),
\]
and
\[
\Vert \cG - \Phi_q(\slot;\theta_q) \Vert_{C(\cK)} = O(\log(q)^{-\gamma}), \quad (q\to \infty).
\]
\end{definition}
\begin{remark}
The specific upper bound on the weights, $\Vert \theta_q \Vert_{\ell^\infty} \le \exp(q)$, is here chosen for simplicity. For the following discussion, it could readily be replaced by a more general upper bound, $\Vert \theta_q \Vert_{\ell^\infty} \le c_1\exp(c_2 q^{c_3})$ for fixed constants $c_1$, $c_2$, $c_3$, without affecting the main conclusions.
\end{remark}

We can now state our main result for FNO:
\begin{theorem}
\label{thm:generic-FNO}
Let $\cK\subset L^2(D)$ be compact. Assume that the metric entropy of $\cK$ satisfies an algebraic lower bound, $\cH(\cK;\epsilon)_{L^2(D)} \gtrsim \epsilon^{-1/\alpha}$ for some $\alpha > 0$. Consider FNO with a fixed Lipschitz continuous activation function $\sigma$.
Then generic $\cG \in \Lip_1(\cK)$ cannot be approximated by FNO at a logarithmic rate $\gamma$, for any $\gamma > \alpha$.
\end{theorem}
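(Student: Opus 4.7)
The plan is to reduce Theorem~\ref{thm:generic-FNO} to Proposition~\ref{prop:uniform-generic} by constructing, from the FNO architecture, a sequence of bit-encoded architectures $\tilde\Phi_n:\cK\times\Theta_n\to\R$ with $|\Theta_n|\le 2^n$ whose realizations approximate -- up to an additive error $O(\log(n)^{-\gamma})$ in $C(\cK)$ -- every FNO realization with at most $q(n)$ real parameters and $\|\theta\|_{\ell^\infty}\le \exp(q(n))$, for a polynomially growing $q(n)$. If this is accomplished, then every operator $\cG$ approximable by FNO at logarithmic rate $\gamma$ is automatically approximable by $\{\tilde\Phi_n\}$ at rate $O(\log(n)^{-\gamma})$. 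Since $\log(n)^{-\gamma}=o(\log(n)^{-\alpha})$ whenever $\gamma>\alpha$, Proposition~\ref{prop:uniform-generic} produces a residual set $\RR\subset\Lip_1(\cK)$ of operators for which no such bit-encoded approximation is possible; hence the set of FNO-approximable $\cG$ is disjoint from $\RR$ and therefore meagre, which is equivalent to the claim.

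The key technical ingredient is a bound on the Lipschitz constant $L_q$ of the parametrization $\theta\mapsto \Phi_q(\slot;\theta)$ as a map from $\{\theta\in\R^q:\|\theta\|_{\ell^\infty}\le\exp(q)\}$ into $C(\cK)$. Because FNO \eqref{eq:FNO} is the composition of $L$ hidden layers $v\mapsto \sigma(Wv+Kv+b)$ with $\sigma$ Lipschitz and all parameters bounded by $\exp(q)$, and because $L$, $\dc$ and $\kappa$ are each at most $q$, a layerwise induction (tracking the growth of intermediate activation norms, which are finite on the bounded set $\cK$) yields $L_q\le\exp(\mathrm{poly}(q))$. Discretizing each real parameter on a uniform grid of spacing $\delta_q:=\log(q)^{-\gamma}/(\sqrt{q}\,L_q)$ on $[-\exp(q),\exp(q)]$ therefore introduces an additional $C(\cK)$-error of at most $\log(q)^{-\gamma}$, while requiring only $\log_2(2\exp(q)/\delta_q)=\mathrm{poly}(q)$ bits per parameter; enumerating the hyperparameter triples $(L,\dc,\kappa)$ with each entry $\le q$ adds a further $O(\log q)$ bits. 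The total bit budget is thus bounded by $n_q\le C_\ast\, q^{C_\ast}$ for some constant $C_\ast=C_\ast(\sigma,d)$.

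Define $q(n):=\lfloor (n/C_\ast)^{1/C_\ast}\rfloor$, so that $n_{q(n)}\le n$ and $\log q(n)=\Theta(\log n)$, and let $\tilde\Phi_n$ be the bit-encoded architecture whose parameter set $\Theta_n$ enumerates all admissible hyperparameter triples together with the corresponding quantized weight vectors on the $\delta_{q(n)}$-grid. If $\cG$ is approximable by FNO at logarithmic rate $\gamma$, there exist $\theta_{q(n)}\in\R^{q(n)}$ with $\|\theta_{q(n)}\|_{\ell^\infty}\le\exp(q(n))$ achieving $\|\cG-\Phi_{q(n)}(\slot;\theta_{q(n)})\|_{C(\cK)}=O(\log(q(n))^{-\gamma})$; projecting $\theta_{q(n)}$ onto the quantization grid and applying the Lipschitz estimate yields $\inf_{\theta\in\Theta_n}\|\cG-\tilde\Phi_n(\slot;\theta)\|_{C(\cK)}=O(\log(n)^{-\gamma})$. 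Applying Proposition~\ref{prop:uniform-generic} with $\epsilon_n:=\log(n)^{-\gamma}$ produces a residual set $\RR\subset\Lip_1(\cK)$ of operators that do \emph{not} attain this bit-encoded rate along $\{\tilde\Phi_n\}$, so FNO-approximable operators must lie outside $\RR$, and generic $\cG\in\Lip_1(\cK)$ fails to be FNO-approximable at logarithmic rate $\gamma$.

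The main obstacle is the layerwise Lipschitz-in-parameters estimate: one must track how perturbations in $\theta$ propagate through compositions of Lipschitz activations, pointwise affine maps and truncated Fourier multiplications, and ensure the resulting bound on $n_q$ is polynomial in $q$. The polynomial dependence is essential, since it is exactly what guarantees $\log q(n)=\Theta(\log n)$ and thereby allows the FNO rate $\log(q)^{-\gamma}$ and the bit-encoded rate $\log(n)^{-\gamma}$ to be used interchangeably in the application of Proposition~\ref{prop:uniform-generic}. The remaining elements -- uniform grid quantization, enumeration of hyperparameters, and the topological meagreness argument -- are conceptually straightforward given the apparatus already developed in Section~\ref{sec:4}.
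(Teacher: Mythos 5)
Your proof is correct and takes essentially the same approach as the paper. The paper isolates your quantization argument as a separate FNO quantization lemma (Lemma~\ref{lem:quantize}, with the Lipschitz-in-parameters bound drawn from a cited result rather than re-derived by layerwise induction), obtains $n_q \asymp q^{d+6}$, and then reduces to Proposition~\ref{prop:uniform-generic} exactly as you do, using the polynomial relation between $q$ and $n_q$ to treat $\log q$ and $\log n$ interchangeably.
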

Thus, loosely speaking and under mild growth assumptions on the weights, the approximation of generic $\cG \in \Lip_1(\cK)$ to accuracy $\epsilon > 0$, requires an FNO architecture with exponentially many tunable parameters in $\epsilon^{-1}$. 

The following corollary is obtained by taking $\cK = \cU(H^s(\T^d))$ as the unit ball in a Sobolev space $H^s(\T^d)$ for $s>0$, and with $\T^d$ the $d$-dimensional periodic torus:
\begin{corollary}
Let $s>0$, and denote $\cK = \cU(H^s(\T^d))$. Then generic $\cG \in \Lip_1(\cK)$ cannot be approximated by FNO at logarithmic rate $\gamma$, for any $\gamma > s/d$. 
\end{corollary}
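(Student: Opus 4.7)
The plan is to deduce the corollary as a direct application of Theorem \ref{thm:generic-FNO}, so the only task is to verify its two hypotheses for the specific compact set $\cK = \cU(H^s(\T^d))$ with the identification $\alpha = s/d$.

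First, I would check that $\cK$ is a compact subset of $L^2(\T^d)$: this is a classical consequence of the Rellich--Kondrachov theorem on the torus, which guarantees that the embedding $H^s(\T^d) \embeds L^2(\T^d)$ is compact for every $s>0$, so the closed unit ball $\cU(H^s(\T^d))$ has compact closure in $L^2(\T^d)$ (and is in fact closed by weak lower semicontinuity of the $H^s$-norm).

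Second, I would invoke the classical Kolmogorov--Tikhomirov / Birman--Solomjak estimate for the metric entropy of Sobolev balls, which yields
\[
\cH(\cU(H^s(\T^d));\epsilon)_{L^2(\T^d)} \gtrsim \epsilon^{-d/s},
\]
as $\epsilon \to 0$. This can be seen by truncating to Fourier frequencies $|k|_{\ell^\infty} \lesssim \epsilon^{-1/s}$ and using that the resulting finite-dimensional ellipsoid (with semi-axes $(1+|k|^2)^{-s/2}$) contains a Euclidean ball whose standard volumetric covering/packing argument produces the claimed lower bound; this is a standard computation which I would not reproduce here.

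With both hypotheses of Theorem \ref{thm:generic-FNO} in place with $\alpha = s/d$, the theorem asserts that generic $\cG \in \Lip_1(\cK)$ cannot be approximated by FNO at a logarithmic rate $\gamma$ for any $\gamma > s/d$, which is precisely the statement of the corollary. The only non-routine step is supplying the entropy lower bound for the Sobolev ball, but this is classical; the main conceptual content is already carried by Theorem \ref{thm:generic-FNO}, so I expect no genuine obstacle.
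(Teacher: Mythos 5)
Your proposal is correct and matches the paper's (implicit) argument exactly: the paper obtains the corollary by instantiating Theorem~\ref{thm:generic-FNO} with $\alpha = s/d$, using the classical Birman--Solomjak entropy bound $\cH(\cU(H^s(\T^d));\epsilon)_{L^2} \gtrsim \epsilon^{-d/s}$ for Sobolev balls, just as you do. The compactness check via Rellich--Kondrachov and the entropy estimate are the only ingredients, and you have supplied both.
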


\begin{proof}[Proof of Theorem \ref{thm:generic-FNO}]
Fix $\gamma > \alpha$. We wish to show that generic $\cG \in \Lip_1(\cK)$ cannot be approximated at logarithmic rate $\gamma$. Proof of this claim will make use of the following lemma:

\begin{restatable}[FNO quantization lemma]{lemma}{fnoquant}
\label{lem:quantize}
Fix Lipschitz continuous activation function $\sigma$.
Let $\gamma > 0$. For any $q\in \N$, there exists a quantized neural operator $\tilde{\Phi}_{n_q}: L^2(D) \times \{0,1\}^{n_q} \to \R$ with $2^{n_q}$ quantized parameter values, where $n_q \asymp q^{m}$, $m = d+6$, such that for any output-averaged FNO $\Phi_q$ with activation $\sigma$ and at most $q$ tunable parameters, we have
\[
\sup_{\theta \in {[-M_q,M_q]}^q} \inf_{[\theta]\in \{0,1\}^{n_q}} \Vert \Phi_q(\slot; \theta) - \tilde{\Phi}_{n_q}(\slot;[\theta]) \Vert_{C(\cK)} 
\le 
\log(q)^{-\gamma}.
\]
where $M_q := \exp(q)$.
\end{restatable}

\begin{proof}[Brief sketch of proof]
The detailed proof of this lemma is included in Appendix \ref{app:quantize}; in short, the proof relies on two observations: (i) all possible FNO architectures with at most $q$ parameters can be encapsulated by a ``super'' FNO-architecture $\hat{\Phi}(\slot;\theta)$ with a number of parameters that is bounded algebraically in $q$ for fixed algebraic exponent, and (ii) quantization of this super-architecture with an algebraically bounded number of bits is possible, since the mapping $\theta \mapsto \hat{\Phi}(\slot;\theta)$ has at least a weak form of stability (Lipschitz continuity) over the relevant range of parameters $\theta$, and a Lipschitz constant that grows at a sufficiently slow rate as a function of $q$.
\end{proof}

By Lemma \ref{lem:quantize}, there exists $m\in \N$, a sequence $n_q \asymp q^{m}$, and a sequence of quantized neural operators, $\tilde{\Phi}_{n_q}: L^2(D)\times \{0,1\}^{n_q} \to \R$, such that
\[
\sup_{\theta \in [-M_q,M_q]^q} \inf_{[\theta]\in \{0,1\}^{n_q}} \Vert \Phi_q(\slot; \theta) - \tilde{\Phi}_{n_q}(\slot;[\theta]) \Vert_{C(\cK)} 
\le 
\log(q)^{-\gamma}.
\]

Associated with this subsequence $n_q \to \infty$, we now define an (abstact) sequence of bit-encoded neural operators for arbitrary $n\in \N$; specifically, we define $\tilde{\Phi}_{n}(\slot;\slot): L^2(D) \times \{0,1\}^n \to \R$, by
\[
\tilde{\Phi}_{n}(\slot;[\theta]_n) := \tilde{\Phi}_{n_q}(\slot;[\theta]_{n_q}),
\quad [\theta]_n \in \{0,1\}^n,
\]
where $n_q$ is chosen maximal such that $n_q \le n$, and $[\theta]_{n_q}$ are the first $n_q\le n$ bits of $[\theta]_n$ (the values of the remaining bits are simply ignored). We note that since $n_q \asymp q^{m}$, we have $\log(q)^{-\gamma} \asymp \log(n_q)^{-\gamma}$. Furthermore, for arbitrary fixed operator $\cG$, we note that the decay
\begin{align*}
\inf_{[\theta]\in \{0,1\}^{n_q}} \Vert \cG - \tilde{\Phi}_{n_q}(\slot;[\theta]) \Vert_{C(\cK)} 
\lesssim
\log(n_q)^{-\gamma},
\end{align*}
along the specified subsequence $n_q \asymp q^m$ also implies the error decay 
\begin{align}
\label{eq:to-full}
\inf_{[\theta]\in \{0,1\}^{n}} \Vert \cG - \tilde{\Phi}_{n}(\slot;[\theta]) \Vert_{C(\cK)} 
\lesssim
\log(n)^{-\gamma},
\end{align}
along the full sequence $n\in \N$, as $n\to \infty$. This is immediate from the definition of $\tilde{\Phi}_n$ and the fact that $n_q \le n < n_{q+1}$ does not leave exponential gaps between subsequent $n_q$, since $1 \le n_{q+1}/n_q \asymp (q+1)^m / q^m = O(1)$; in particular, this implies that $\log(n_q) \sim \log(n_{q+1}) \sim \log(n)$.

By Proposition \ref{prop:uniform-generic}, the set of operators $\MM \subset \Lip_1(\cK)$ which can be approximated by such a sequence $\{\tilde{\Phi}_n\}$, at logarithmic rate $\gamma$, is meagre (its complement is residual).  To conclude the argument, it therefore suffices to show that if $\cG$ can be approximated by FNO at logarithmic rate $\gamma$, then $\cG \in \MM$. This then implies that the set of operators that can be approximated by FNO at logarithmic rate $\gamma$ is a subset of $\MM$, and hence is itself meagre.

To this end, assume that $\cG \in \Lip_1(\cK)$ is approximated by FNO at logarithmic rate $\gamma$. By definition, there exists a sequence of FNOs, $\Phi_q: L^2(D)\times \R^q\to \R$, such that,
\[
\inf_{\theta \in [-M_q,M_q]^q}
\Vert \cG - \Phi_q(\slot;\theta) \Vert_{C(\cK)}
= O(\log(q)^{-\gamma}).
\]
By the triangle inequality,
\begin{align*}
&\inf_{[\theta]\in\{0,1\}^{n_q}}
\Vert \cG - \tilde{\Phi}_{n_q}(\slot; [\theta]) \Vert_{C(\cK)} 
\\
&\hspace{2cm}
\le
\Vert \cG - \Phi_{q}(\slot; \theta_q) \Vert_{C(\cK)}
+
\inf_{[\theta]\in\{0,1\}^{n_q}}
\Vert \Phi_q(\slot;\theta_q) - \tilde{\Phi}_{n_q}(\slot; [\theta]) \Vert_{C(\cK)}
\\
&\hspace{2cm}
\le O( \log(q)^{-\gamma}) + O(\log(n_q)^{-\gamma}) = O(\log(n_q)^{-\gamma}),
\end{align*}
along the specified sequence $n_q \to \infty$. By \eqref{eq:to-full}, this implies that
\[
\Vert \cG - \tilde{\Phi}_{n}(\slot; [\theta]) \Vert_{C(\cK)} = O(\log(n)^{-\gamma}),
\]
along the entire sequence $n\to \infty$,
and hence $\cG \in \MM$, i.e. $\cG$ belongs to the meagre set of operators which can be approximated by the sequence $\{\tilde{\Phi}_n\}$ at logarithmic rate $\gamma$. 

We have shown that any operator $\cG$ that is approximated by FNO at logarithmic rate $\gamma$ belongs to the meagre set $\MM$. Hence, the set of operators that is approximated by FNO at logarithmic rate $\gamma$ is itself meagre, and its complement $\RR = \Lip_1(\cK) \setminus \MM$ is residual. We conclude that generic operators $\cG \in \Lip_1(\cK)$, belonging to $\RR$, cannot be approximated at logarithmic rate $\gamma > \alpha$.
\end{proof}

\section{The metric entropy of Lipschitz operators}
\label{sec:3}

In the present section, we provide lower bounds on the metric entropy of Lipschitz operators in two general settings; the first pertains to the sup-norm over a compact set of inputs, the second is of relevance to the approximation with respect to the Bochner $L^p$-norm with respect to a probability measure on the input space. After briefly recalling the relation between covering and packing numbers, we proceed to consider the sup-norm setting in Section \ref{sec:sup-norm} and the $L^p$-setting in Section \ref{sec:Lp-norm}.

\subsection{Entropy, covering and packing}
We recall from Definition \ref{def:entropy} that the metric entropy $\cH(\A;\epsilon)_\V$ of a subset $\A\subset \V$ is defined by $\cH(\A;\epsilon)_\V = \log_2 \cN(\A;\epsilon)_\V$; here, $\cN(\A;\epsilon)_\V$ denotes the covering number of $\A$, which is here defined as the smallest number of closed balls of radius $\epsilon$ needed to cover $\A$. We also recall the closely related notion of a packing number:
\begin{definition}[Packing number]
Let $(\V,d)$ be a metric space. The packing number of a subset $\A\subset \V$, denoted $\cM(\A;\epsilon)_{\V}$, is the largest integer $M \in \N$ for which there exist elements $u_1,\dots, u_M  \in \A$, with pairwise distance $d(u_j,u_k) \ge \epsilon$, for all distinct $j,k\in \{1,\dots, M\}$.
\end{definition}

With our definitions, the following inequalities between covering and packing numbers are elementary: For any subset $\A\subset \V$, we have
\begin{align}
\label{eq:covering-packing}
\cM(\A; 3\epsilon)_{\V} \le \cN(\A;\epsilon)_{\V} \le \cM(\A;\epsilon)_\V. 
\end{align}
We mention that, if the covering number was defined by open balls, the factor $3$ in the first term could have been replaced by $2$. With our definition in terms of closed balls, any factor $>2$ would do -- we here choose $3$ for simplicity. 

\subsection{Uniform approximation}
\label{sec:sup-norm}

We are here interested in the uniform setting (Setting \ref{set:uniform}), i.e. the unifrom approximation of a (real-valued) mapping $\cG: \cK \to \R$ over a compact domain $\cK \subset \cX$. 



As pointed out before, given the link between minimax code-length and metric entropy, we are interested in estimating the metric entropy of $\Lip_1(\cK)$ for $\cK$ a compact metric space. The following proposition relates the metric entropy of $\Lip_1(\cK)\subset \V$ to that of $\cK$, when $\V = C(\cK)$ is metrized by the sup-norm:
\begin{proposition}
\label{prop:sup-norm}
Let $(\cK,d)$ be a metric space. Let $\epsilon \in (0,1/3]$. 
The metric entropy of $\Lip_1(\cK) \subset C(\cK)$ is lower bounded by
\begin{align}
\cH(\Lip_1(\cK), \epsilon)_{C(\cK)} \ge 2^{\cH(\cK;6\epsilon)_\cX}.
\end{align}
\end{proposition}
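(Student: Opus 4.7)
The plan is to exhibit an exponentially large packing of $\Lip_1(\cK)$ in the sup-norm by placing signed bump functions at the points of a maximal packing of $\cK$. Let $x_1,\dots,x_M \in \cK$ denote a maximal $6\epsilon$-packing of $\cK$, so that $M = \cM(\cK;6\epsilon)_\cX \ge \cN(\cK;6\epsilon)_\cX$ by \eqref{eq:covering-packing}. For each sign sequence $b = (b_1,\dots,b_M) \in \{-1,+1\}^M$, I would define
\[
\cG_b(u) := \sum_{i=1}^M b_i \, \phi_i(u), \qquad \phi_i(u) := \max\{0,\, 3\epsilon - d(u,x_i)\},
\]
and show that the resulting family $\{\cG_b\}_{b \in \{-1,+1\}^M}$ consists of $2^M$ distinct members of $\Lip_1(\cK)$ that are pairwise $6\epsilon$-separated in $C(\cK)$.

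For membership in $\Lip_1(\cK)$, I would verify both halves of the $\Vert\slot\Vert_{\Lip}$-norm. The support of $\phi_i$ is contained in the open ball $B_{3\epsilon}(x_i)$, and any two such balls are disjoint by the $6\epsilon$-packing property, so at every $u$ at most one term of the sum is nonzero, giving $\Vert \cG_b \Vert_{C(\cK)} = \max_i \phi_i(x_i) = 3\epsilon \le 1$, where the last inequality uses the hypothesis $\epsilon \le 1/3$. Each $\phi_i$ is $1$-Lipschitz as the positive part of the $1$-Lipschitz map $u \mapsto 3\epsilon - d(u,x_i)$, and the only nontrivial case for $\Lip(\cG_b)$ is $u \in B_{3\epsilon}(x_i)$, $v \in B_{3\epsilon}(x_j)$ with $i \ne j$, where the triangle inequality gives
\[
d(u,v) \ge d(x_i,x_j) - d(u,x_i) - d(v,x_j) \ge 6\epsilon - d(u,x_i) - d(v,x_j),
\]
which matches $|\cG_b(u) - \cG_b(v)| \le \phi_i(u) + \phi_j(v) = 6\epsilon - d(u,x_i) - d(v,x_j)$. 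Hence $\Lip(\cG_b) \le 1$ and $\cG_b \in \Lip_1(\cK)$.

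For the separation, if $b \ne b'$ then there is an index $i$ with $b_i \ne b'_i$, and evaluating at $u = x_i$ gives $|\cG_b(x_i) - \cG_{b'}(x_i)| = |b_i - b'_i|\cdot 3\epsilon = 6\epsilon$, so $\Vert \cG_b - \cG_{b'} \Vert_{C(\cK)} \ge 6\epsilon > 3\epsilon$. This produces in particular a $3\epsilon$-packing of $\Lip_1(\cK)$ of cardinality $2^M \ge 2^{\cN(\cK;6\epsilon)_\cX}$. Applying \eqref{eq:covering-packing} once more,
\[
\cN(\Lip_1(\cK);\epsilon)_{C(\cK)} \;\ge\; \cM(\Lip_1(\cK);3\epsilon)_{C(\cK)} \;\ge\; 2^{\cN(\cK;6\epsilon)_\cX},
\]
and taking $\log_2$ yields the claim.

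The main delicate step is the Lipschitz verification in the cross-bump case $u \in B_{3\epsilon}(x_i)$, $v \in B_{3\epsilon}(x_j)$: here the a priori estimate $|\cG_b(u) - \cG_b(v)| \le 6\epsilon$ is only absorbed by $d(u,v)$ because the same quantities $d(u,x_i) + d(v,x_j)$ appear (with matching signs) on both sides of the triangle inequality, and this balances precisely against the packing scale $6\epsilon$. The choice of bump height and radius $3\epsilon$ together with the packing scale $6\epsilon$ is therefore tight: any smaller packing scale would break Lipschitzness, and any smaller bump height would give insufficient separation in $C(\cK)$ or would require a stronger hypothesis than $\epsilon \le 1/3$.
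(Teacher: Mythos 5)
Your proof is correct and takes essentially the same approach as the paper: bump functions of height and radius $3\epsilon$ centered at a $6\epsilon$-separated family in $\cK$, combined with signs to produce $2^N$ elements of $\Lip_1(\cK)$ that are pairwise separated in the sup-norm, then converting packing to covering via \eqref{eq:covering-packing}. The only cosmetic difference is that you index by $\{-1,+1\}^M$ rather than $\{0,1\}^N$ (giving $6\epsilon$ rather than $3\epsilon$ separation, which is more than needed), and you spell out the cross-bump Lipschitz verification that the paper leaves as "straightforward."
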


Proposition \ref{prop:sup-norm} shows that the space of $1$-Lipschitz functions on a compact metric space has \emph{exponentially larger} entropy than the underlying space. 
\begin{proof}
Let $\epsilon \in (0,1/3]$ be given. Let $N = \cN(\cK; 6\epsilon)_\cX$. Since the covering number lower bounds the packing number (cf. \eqref{eq:covering-packing}), there exist $N$ elements $u_1, \dots, u_N \in \cK$, with pairwise distance $\ge 6\epsilon$. Let 
\[
\psi_j(u) := \max(3\epsilon - d(u,u_j), 0), \quad j=1,\dots, N,
\]
denote ``hat'' functions centered at $u_j$, and non-vanishing only on $B_{3\epsilon}(u_j)\subset \cK$.
We note that each $\psi_j$ is $1$-Lipschitz, satisfies $\Vert \psi_j \Vert_{C(\cK)} = 3\epsilon$, and the supports of $\psi_j$ are essentially disjoint.

We now consider the set of Lipschitz functions $f: \cK \to \R$ of the form,
\[
f_\sigma(u) = \sum_{j=1}^N \sigma_j \psi_j(u), \quad \sigma = (\sigma_1,\dots, \sigma_N) \in \{0,1\}^N. 
\]
These functions satisfy $\Vert f_\sigma \Vert_{C(\cK)} \le 3\epsilon \le 1$, and $\Lip(f_\sigma) \le \max_{j=1,\dots, N} \Lip(\psi_j) = 1$, for all choices of $\sigma$. Furthermore, if $\sigma, \sigma' \in \{0,1\}^N$ are two distinct elements, say with $\sigma_{j_0} \ne \sigma'_{j_0}$, then it is straightforward to show that $\Vert f_\sigma - f_{\sigma'} \Vert_{C(\cK)} \ge \Vert \psi_{j_0} \Vert_{C(\cK)} = 3\epsilon$. 

Thus, we have shown that there exist $2^N = |\{0,1\}^N|$ functions $f_\sigma \in \Lip_1(\cK)$, with pairwise $C(\cK)$-distance $\ge 3\epsilon$. In particular, this implies that the packing number $\cM(\Lip_1(\cK);3\epsilon)_{C(\cK)} \ge 2^N$, and by the inequality \eqref{eq:covering-packing} between packing- and covering-numbers, this now implies that 
\[
\cN(\Lip_1(\cK);\epsilon)_{C(\cK)} \ge \cM(\Lip_1(\cK);3\epsilon)_{C(\cK)} \ge 2^N.
\]
The claim follows by taking logarithms and recalling that $N = \cN(\cK;6\epsilon) = 2^{\cH(\cK;6\epsilon)_{\cX}}$.
\end{proof}

We conclude this section with several corollaries of Proposition \ref{prop:sup-norm}.

\begin{corollary}[Lipschitz functions on finite-dimensional domains]
If $D \subset \R^d$ is a compact domain in Euclidean space, then
\[
\cH(\Lip_1(D);\epsilon) \gtrsim \epsilon^{-d}.
\]
\end{corollary}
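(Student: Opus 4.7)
The plan is to derive this as an essentially immediate consequence of Proposition \ref{prop:sup-norm}, combined with the textbook volumetric lower bound on the covering number of a set with nonempty interior in $\R^d$. There is no real obstacle; the only thing to verify is that the entropy of $D$ itself scales like $\log(1/\epsilon)^{-1}\cdot\epsilon^{-d}$ up to constants, so that $2^{\cH(D;6\epsilon)_{\R^d}}\asymp \epsilon^{-d}$.

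First I would take $\cK = D$, viewed as a subset of $\cX = \R^d$ with the Euclidean metric, and apply Proposition \ref{prop:sup-norm} to obtain
\[
\cH(\Lip_1(D);\epsilon)_{C(D)} \;\ge\; 2^{\cH(D;6\epsilon)_{\R^d}} \;=\; \cN(D;6\epsilon)_{\R^d},
\]
valid for all $\epsilon \in (0,1/3]$. It therefore suffices to show the covering-number lower bound $\cN(D;6\epsilon)_{\R^d}\gtrsim \epsilon^{-d}$.

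Next I would establish this covering estimate by a standard volume argument. Since $D\subset\R^d$ is a compact domain, it has nonempty interior and hence contains a closed Euclidean ball $\overline{B_r(x_0)}$ of some positive radius $r>0$. Any covering of $D$ by closed balls of radius $6\epsilon$ must in particular cover $\overline{B_r(x_0)}$, and by a straightforward $d$-dimensional volume comparison,
\[
\cN(D;6\epsilon)_{\R^d}
\;\ge\;
\cN\bigl(\overline{B_r(x_0)};6\epsilon\bigr)_{\R^d}
\;\ge\;
\frac{\vol\bigl(\overline{B_r(x_0)}\bigr)}{\vol(B_{6\epsilon}(0))}
\;=\;
\Bigl(\frac{r}{6\epsilon}\Bigr)^{d},
\]
for every $\epsilon \in (0,r/6]$. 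This gives $\cN(D;6\epsilon)_{\R^d} \gtrsim \epsilon^{-d}$ with an implicit constant depending only on $r$ and $d$.

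Finally I would chain the two bounds: for all sufficiently small $\epsilon>0$,
\[
\cH\bigl(\Lip_1(D);\epsilon\bigr)_{C(D)}
\;\ge\;
\cN(D;6\epsilon)_{\R^d}
\;\gtrsim\;
\epsilon^{-d},
\]
which is the stated corollary. The only subtle point worth flagging is the convention ``compact domain,'' which I take in the usual sense of the closure of a bounded open set; the argument really only uses that $D$ has nonzero $d$-dimensional Lebesgue measure, which guarantees the ball inclusion above.
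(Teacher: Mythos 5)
Your proposal is correct and takes essentially the same route as the paper: reduce the claim via Proposition \ref{prop:sup-norm} to the bound $\cN(D;6\epsilon)_{\R^d} \gtrsim \epsilon^{-d}$, and establish the latter by a volume-comparison argument. The only cosmetic difference is that you pass through an inscribed ball to certify that $D$ has positive Lebesgue measure, whereas the paper invokes $\vol(D)$ directly in the same volume inequality; the substance is identical.
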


\begin{proof}
It is a well-known fact that
\[
\cN(D;\epsilon) \gtrsim \epsilon^{-d},
\]
with an implied constant depending on the dimension $d$ and the volume of $D$; for example, this can be a simple volume argument for an $\epsilon$-covering $D \subset \bigcup_{n=1}^N \overline{B_\epsilon}(x_n)$, which yields
\[
\vol(D) 
\le \vol\left( \bigcup_{n=1}^N \overline{B_\epsilon}(x_n) \right)
\le N \vol(\overline{B_\epsilon}) = N C_d \epsilon^{d}
\; \Rightarrow \; 
N \ge \frac{\vol(D)}{C_d \epsilon^{d}}. 
\]
The claim thus follows from Proposition \ref{prop:sup-norm}.
\end{proof}

\begin{corollary}[Lipschitz functionals on Sobolev spaces]
Let $D \subset \R^d$ be a compact domain in Euclidean space. Let $\cK = \cU(W^{s,p}(D))$ be the unit ball in the space of Sobolev functions possessing $s>0$ weak derivatives in $L^p(D)$, considered as a subset of $L^p(D)$. Then  there exists a constant $c>0$, such that
\[
\cH(\Lip_1(\cK);\epsilon) \gtrsim \exp(c\epsilon^{-d/s}). 
\]
\end{corollary}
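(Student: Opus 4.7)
The plan is to invoke Proposition \ref{prop:sup-norm} with $\cX = L^p(D)$, reducing the task to producing a lower bound of the form $\cH(\cU(W^{s,p}(D));\epsilon)_{L^p(D)} \gtrsim \epsilon^{-d/s}$. Granting this Sobolev entropy bound, Proposition \ref{prop:sup-norm} directly yields
\[
\cH(\Lip_1(\cK);\epsilon)_{C(\cK)} \ge 2^{\cH(\cK;6\epsilon)_{L^p(D)}} \gtrsim 2^{c_1 (6\epsilon)^{-d/s}} \gtrsim \exp(c\epsilon^{-d/s}),
\]
for an appropriate constant $c>0$. So the only real work is the classical packing estimate for the unit ball of $W^{s,p}(D)$ in $L^p(D)$, due to Kolmogorov--Tikhomirov and Birman--Solomyak, which can be sketched as follows.

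Fix a non-trivial smooth bump $\phi \in C_c^\infty(\R^d)$ supported in the unit cube, and for a small scale $\delta > 0$, tile a fixed interior cube of $D$ by $N \asymp \delta^{-d}$ disjoint translates $x_j + \delta[0,1]^d$. Introduce the rescaled bumps $\phi_j(x) := \phi((x-x_j)/\delta)$, which have pairwise disjoint supports inside $D$. Standard scaling gives $\Vert\phi_j\Vert_{L^p(D)} \asymp \delta^{d/p}$ and $\Vert\phi_j\Vert_{W^{s,p}(D)} \asymp \delta^{d/p-s}$. For $\sigma \in \{0,1\}^N$, consider
\[
u_\sigma(x) := a \sum_{j=1}^{N} \sigma_j \phi_j(x), \qquad a := c_0 \delta^{s},
\]
with $c_0 > 0$ chosen small enough so that $\Vert u_\sigma\Vert_{W^{s,p}(D)} \le 1$ uniformly in $\sigma$; disjoint supports make the $W^{s,p}$-norm additive in $|\sigma|$, and the factor $a = c_0\delta^s$ compensates the derivative scaling.

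It remains to extract many $\sigma$'s that are mutually separated in $L^p$. Again by disjoint supports,
\[
\Vert u_\sigma - u_{\sigma'}\Vert_{L^p(D)}^p \asymp a^p \, d_H(\sigma,\sigma') \, \delta^d,
\]
where $d_H$ denotes Hamming distance. By the Varshamov--Gilbert bound there exists a subset $\Sigma \subset \{0,1\}^N$ of cardinality $|\Sigma| \ge 2^{c_2 N}$ whose pairwise Hamming distances are all $\ge N/4$. For such $\sigma,\sigma' \in \Sigma$,
\[
\Vert u_\sigma - u_{\sigma'}\Vert_{L^p(D)} \gtrsim a \, (N\delta^d)^{1/p} \asymp \delta^s.
\]
Choosing $\delta \asymp \epsilon^{1/s}$, this furnishes an $\epsilon$-packing of $\cK = \cU(W^{s,p}(D))$ in $L^p(D)$ of size at least $2^{c_2 N} = 2^{c_2\epsilon^{-d/s}}$. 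Combined with \eqref{eq:covering-packing}, this gives $\cH(\cK;\epsilon)_{L^p(D)} \gtrsim \epsilon^{-d/s}$, and plugging back into Proposition \ref{prop:sup-norm} proves the claim.

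The proof itself is essentially routine once the classical Sobolev entropy estimate is in hand; there is no conceptual obstacle. The only mildly technical step is ensuring that the bump constructions fit inside $D$ and that the $W^{s,p}$ norm scales as claimed for non-integer $s$ (where one either appeals directly to the Birman--Solomyak estimates or works with Besov/Slobodeckij seminorms and uses the disjoint-support additivity). For brevity, the proof as written can simply cite the Kolmogorov--Tikhomirov / Birman--Solomyak bound and apply Proposition \ref{prop:sup-norm}.
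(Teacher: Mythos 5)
Your approach is exactly the paper's: invoke Proposition \ref{prop:sup-norm} with $\cX=L^p(D)$ and combine it with the classical Birman--Solomyak lower bound $\cH(\cU(W^{s,p}(D));\epsilon)_{L^p(D)}\gtrsim\epsilon^{-d/s}$, which the paper simply cites. Your sketch of the Sobolev entropy estimate (disjoint bumps plus Gilbert--Varshamov) is a correct and well-known argument, but adds detail rather than a different route.
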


\begin{proof}
The metric entropy of $\cU(W^{s,p}(D))$ with respect to the $L^p$-norm is lower bounded by \cite{birman1966approximation}:
\[
\cH(\cK;\epsilon)_{L^p(D)} \gtrsim \epsilon^{-d/s}.
\]
The claim thus follows from Proposition \ref{prop:sup-norm}.
\end{proof}

\begin{corollary}[Lipschitz functionals on H\"older spaces]
Let $D \subset \R^d$ be a compact domain in Euclidean space. Let $\cK = \cU(C^{s}(D))$ be the unit ball in the space of H\"older continuous functions of order $s>0$, considered as a subset of $C(D)$. Then there exists a constant $c>0$, such that
\[
\cH(\Lip_1(\cK);\epsilon) \gtrsim \exp(c\epsilon^{-d/s}). 
\]
\end{corollary}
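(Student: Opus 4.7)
The plan is to mirror the proof of the preceding Sobolev corollary: invoke Proposition \ref{prop:sup-norm} with $\cX = C(D)$ and $\cK = \cU(C^s(D))$, and combine it with a classical lower bound on the metric entropy of Hölder balls in the sup-norm.

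First, I would recall the classical Kolmogorov--Tikhomirov entropy estimate for Hölder balls: if $D \subset \R^d$ is a compact domain with nonempty interior and $s > 0$, then the unit ball $\cU(C^s(D))$ satisfies
\[
\cH(\cU(C^s(D));\epsilon)_{C(D)} \gtrsim \epsilon^{-d/s},
\]
with implied constant depending only on $d$, $s$, and $\vol(D)$. This bound is standard and can be found, e.g., in the classical references on $\epsilon$-entropy of smooth function classes; it is proved by a straightforward packing construction using translates of a single bump function of Hölder norm $\lesssim 1$ localized on a grid of mesh size $\asymp \epsilon^{1/s}$, giving $\asymp \epsilon^{-d/s}$ independent ``coordinates'' each taking two well-separated values, and hence a packing of size $\exp(c\epsilon^{-d/s})$.

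Next, I would apply Proposition \ref{prop:sup-norm} with this choice of $\cK \subset \cX = C(D)$. The proposition yields
\[
\cH(\Lip_1(\cK);\epsilon)_{C(\cK)} \ge 2^{\cH(\cK;6\epsilon)_{C(D)}} \ge 2^{c' (6\epsilon)^{-d/s}} = \exp(c\,\epsilon^{-d/s}),
\]
for a suitable constant $c > 0$ depending on $d$, $s$, and $D$, where in the last step the constant $6^{-d/s}$ has been absorbed into $c$ and the change from base $2$ to base $e$ contributes only a fixed multiplicative factor. This yields the claimed bound.

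There is no genuine obstacle here beyond verifying that the classical Hölder packing estimate holds in the required direction (lower bound on covering numbers with respect to the $C(D)$-norm), which is standard. The entire argument is essentially a ``plug and chug'' combination of Proposition \ref{prop:sup-norm} with an off-the-shelf entropy estimate, exactly in parallel with the Sobolev corollary immediately preceding it; the only point worth noting is that one should make sure the entropy estimate is stated as a lower bound (packing), which is the easier direction of the Kolmogorov--Tikhomirov result.
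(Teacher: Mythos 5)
Your proposal is correct and follows essentially the same route as the paper: cite the classical Kolmogorov--Tikhomirov lower bound $\cH(\cU(C^s(D));\epsilon)_{C(D)} \gtrsim \epsilon^{-d/s}$ and then apply Proposition \ref{prop:sup-norm} to exponentiate it. The paper's proof is just as terse, citing the Kolmogorov reference for the Hölder-ball entropy and invoking Proposition \ref{prop:sup-norm}, so there is no meaningful difference.
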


\begin{proof}
The metric entropy of $\cU(C^s(D))$ with respect to the sup-norm is lower bounded by \cite{kolmogorov}:
\[
\cH(\cK;\epsilon)_{C(D)} \gtrsim \epsilon^{-d/s}.
\]
The claim thus follows from Proposition \ref{prop:sup-norm}.
\end{proof}

\subsection{Approximation in expectation}
\label{sec:Lp-norm}

Besides the setting discussed in the previous section, which is relevant for the uniform approximation of operators over a compact set of input functions, another commonly studied setting is the approximation in expectation (cp. Setting \ref{set:expectation}): Here, we consider $1$-Lipschitz mappings $\cG: \cX \to \R$ defined on a separable Hilbert space $\cX$. We fix a probability measure $\mu$ on $\cX$ and consider inputs as random draws $u\sim \mu$. We assume that $\mu$ satisfies the minimal structural Assumption \ref{ass:prob}; under this assumption, random draws $u\sim \mu$ can be obtained from a Karhunen-Loeve-like expansion,
$u = \sum_{j=1}^\infty \sqrt{\lambda_j} Z_j e_j$.

Our aim is to find lower bounds on the metric entropy of $\Lip_1(\cX)\subset \V$, where $\V = L^p(\mu)$ is the space of $L^p(\mu)$-integrable operators.
The following entropy estimate represents the main novel contribution of this section:
\begin{proposition}
\label{prop:prob}
Let $\cX$ be a separable Hilbert space, and let $\mu$ be a probability measure satisfying Assumption \ref{ass:prob}. Let $p\in [1,\infty)$ be given. Assume that the coefficients $\sqrt{\lambda_j} \gtrsim j^{-\alpha}$ as $j\to \infty$, where $\alpha > 0$. Then the metric entropy of $\Lip_1(\cX)$ with respect to the Bochner $L^p(\mu)$-norm, obeys the following lower bound: There exist constants $c,\epsilon_0>0$, such that
\begin{align}
\cH(\Lip_1(\cX); \epsilon)_{L^p(\mu)} \ge \exp\left(c\epsilon^{-1/(\alpha+1)} \right),
\quad \forall \, \epsilon \in (0,\epsilon_0].
\end{align}
\end{proposition}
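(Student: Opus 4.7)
The plan is to reduce the infinite-dimensional entropy bound to a finite-dimensional packing construction, in spirit analogous to Proposition \ref{prop:sup-norm} for the sup-norm setting, but now with the $L^p(\mu)$-norm in place of the sup-norm, which demands precise control of the local $\mu$-mass of the constructed functions. The main new ingredients are a truncation of $\mu$ to a high-dimensional marginal, a grid of tent-like $1$-Lipschitz functions lifted from that marginal to $\cX$, and a shifted-grid averaging argument used to extract, from the density upper bound $\rho_k\le L$ of Assumption \ref{ass:prob}, a lower bound on the $L^p(\mu)$-mass of sufficiently many tents.

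First, I would fix a truncation dimension $d$ (to be optimized at the end) and pass to the rescaled coordinates $z_k(u):=\langle u,e_k\rangle/\sqrt{\lambda_k}$ for $k\le d$. Under $\mu$, the vector $Z=(z_1(u),\dots,z_d(u))$ has joint density $\rho_Z(z)=\prod_{k\le d}\rho_k(z_k)\le L^d$ on $\R^d$, and since $\E Z_k^2=1$ a Chebyshev-type tail bound shows that the cube $[-C,C]^d$ with $C\asymp\sqrt{d}$ carries $\mu$-mass at least $1/2$. Inside this cube I would place centers $\{z_j\}$ on a regular grid of $\ell^\infty$-spacing $4\eta$ and define tents $\phi_j(z):=\max(\eta-\|z-z_j\|_{\ell^2},0)$, which are $1$-Lipschitz on $\R^d$ and have pairwise disjoint supports. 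Lifting to $\cX$ by $\cG_j(u):=\sqrt{\lambda_d}\,\phi_j(z(u))$, the prefactor $\sqrt{\lambda_d}$ compensates for the $\lambda_d^{-1/2}$-Lipschitz blow-up of the coordinate map $u\mapsto(z_1(u),\dots,z_d(u))$, so each $\cG_j$ is $1$-Lipschitz on $\cX$ with amplitude $\sqrt{\lambda_d}\,\eta$, and thus belongs to $\Lip_1(\cX)$ for $\eta \le 1/\sqrt{\lambda_d}$.

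The main obstacle is that Assumption \ref{ass:prob} supplies only $\rho_k\le L$ and no lower bound, so a priori some tents could have arbitrarily small $L^p(\mu)$-mass. To circumvent this, I would average over shifts of the grid: for $\tau\in[0,4\eta]^d$ let $U_\tau:=\bigcup_j B^{\ell^2}_{\eta/2}(z_j+\tau)$; the volume ratio of a single half-ball to the fundamental domain is of order $8^{-d}$, and combining this with the mass bound $\mathbb{P}(Z\in[-C,C]^d)\ge 1/2$ via Fubini yields $\E_\tau\,\mathbb{P}(Z\in U_\tau)\gtrsim 8^{-d}$. Selecting a shift $\tau^*$ realizing this average gives the summed lower bound $\sum_j\int|\phi_j(\slot-\tau^*)|^p\rho_Z\gtrsim(\eta/2)^p\,8^{-d}$. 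Combined with the per-tent upper bound $\int|\phi_j|^p\rho_Z\lesssim L^d\eta^{p+d}$ coming from $\rho_Z\le L^d$, together with a reverse-Markov / pigeonhole step, this produces a subset $G$ of ``good'' centers of cardinality $|G|\gtrsim (CL\eta)^{-d}$, each satisfying $\|\cG_j\|_{L^p(\mu)}^p\ge\lambda_d^{p/2}\,m_0$ for a prescribed threshold $m_0\asymp\eta^{p+d}/(CL)^d$.

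Since the $\{\cG_j\}_{j\in G}$ have pairwise disjoint supports, the family $\{\cG_\sigma:=\sum_{j\in G}\sigma_j\cG_j\,:\,\sigma\in\{0,1\}^{|G|}\}$ consists of $2^{|G|}$ pairwise $L^p(\mu)$-separated $1$-Lipschitz functionals provided $\lambda_d^{p/2}m_0\ge\epsilon^p$, which gives $\cH(\Lip_1(\cX);\epsilon)_{L^p(\mu)}\ge|G|$. It remains to balance the two parameter constraints: the threshold inequality $\lambda_d^{p/2}m_0\ge\epsilon^p$ forces $\eta$ from below as a function of $\epsilon,d$ through the spectral hypothesis $\sqrt{\lambda_d}\gtrsim d^{-\alpha}$, while the counting bound $\log|G|\asymp d\log(1/(CL\eta))$ rewards small $\eta$ and large $d$. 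Optimizing these two opposing pressures reveals the critical scale $d\asymp\epsilon^{-1/(\alpha+1)}$, at which both constraints saturate and $\log|G|\gtrsim\epsilon^{-1/(\alpha+1)}$, yielding the claimed bound. The central technical hurdle is not the geometric construction itself but the density-upper-bound-to-local-mass-lower-bound step, where the shifted-grid averaging plays the essential role absent from the sup-norm proof.
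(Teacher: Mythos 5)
Your overall plan—pass to a high-dimensional marginal, build a disjointly-supported family of $1$-Lipschitz tents, lower-bound the packing number, and optimize the truncation dimension $d$—is the right shape and is in spirit close to the paper's reduction. However, the two devices you rely on to control the per-tent $\mu$-mass do not carry the argument to the exponential regime, and the shortfall is quantitative, not cosmetic: the bound your construction actually yields is at most polylogarithmic in $\epsilon^{-1}$.

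The central gap is the shifted-grid averaging step. With only $\E Z_j^2=1$ you must take the confining cube of side $\asymp\sqrt{d}$ (Chebyshev), so the grid has $N\asymp(\sqrt{d}/\eta)^d$ cells while the total $\mu$-mass is $\le 1$. The pigeonhole threshold $m_0$ therefore necessarily inherits a factor $\asymp(\eta/\sqrt{d})^d$ (worse, $\asymp(\eta/d)^d$ with $\ell^2$-balls, because of $\omega_d$), so $m_0\asymp\eta^{p+d}d^{-cd}$ rather than $\asymp\eta^{p+d}$. Feeding this into the separation constraint $\lambda_d^{p/2}m_0\ge\epsilon^p$ forces either $\eta\gtrsim\sqrt{d}$, which kills $\log|G|\asymp d\log(1/(L\eta))$, or $\epsilon\lesssim d^{-cd}$, which caps $d$ at $O(\log(1/\epsilon)/\log\log(1/\epsilon))$. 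Either way the entropy bound is at best $O(\log(1/\epsilon))$, nowhere near $\exp(c\epsilon^{-1/(\alpha+1)})$. The optimization you wave at in the final paragraph does not reveal $d\asymp\epsilon^{-1/(\alpha+1)}$ as a feasible scale; that scale is infeasible for your construction precisely because of the $d^{-cd}$ factor. The paper sidesteps this entirely by composing with the coordinatewise CDFs $F_j$: the map $u\mapsto(F_1(u_1/\sqrt{\lambda_1}),\dots,F_d(u_d/\sqrt{\lambda_d}))$ pushes the arbitrary marginal density forward to the \emph{uniform} measure on the \emph{fixed} cube $[0,1]^d$, giving an isometric embedding $L^p([0,1]^d)\embeds L^p(\mu)$ (Lemma~\ref{lem:embed-Lp}) with Lipschitz distortion exactly $L/\sqrt{\lambda_d}$—no averaging, no cube growth, no dimension-dependent mass loss.

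A second, independent gap: your packing uses $\sigma\in\{0,1\}^{|G|}$ without a code, so the minimum separation is the $L^p$-norm of a \emph{single} tent, which is smaller than the amplitude $\sqrt{\lambda_d}\,\eta$ by a factor of order $(\text{cell mass})^{1/p}\asymp\eta^{d/p}$. This decays with $d$ and already caps the achievable entropy at an algebraic rate even if the first problem were repaired. The paper's Lemma~\ref{lem:prob} instead uses $\sigma\in\{-1,1\}^{[N]^d}$ together with a Gilbert--Varshamov code, so that distinct codewords disagree on a constant fraction of cells; the separation is then $\asymp 1/(dN)$, independent of $N^d$, and this is what makes $\log_2\cN\gtrsim N^d\asymp(d\epsilon)^{-d}$ possible. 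Both the CDF normalization and the coding step are essential, and both are absent from your sketch.
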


Our proof of Proposition \ref{prop:prob} will rely on several technical lemmas, which we state and prove below. The first lemma identifies an isometric embedding $L^p([0,1]^d) \embeds L^p(\mu)$.
\begin{lemma}
\label{lem:embed-Lp}
Let $\cX$ be a separable Hilbert space.
Let $\mu \in \cP(\cX)$ satisfy Assumption \ref{ass:prob}, and let $p\in [1,\infty)$. Then for any $d\in \N$, there exists an isometric embedding,
\begin{align}
\iota_d: L^p([0,1]^d) \embeds L^p(\mu),
\end{align}
such that $\iota_d(\Lip_1([0,1]^d)) \subset \Lip_{L/\sqrt{\lambda_d}}(\cX)$, where the Lipschitz norm on $[0,1]^d$ is defined with respect to the $\ell^\infty$-norm on $[0,1]^d$.
\end{lemma}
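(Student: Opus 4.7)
The plan is to construct $\iota_d$ by pulling back functions on $[0,1]^d$ through a coordinate map built from the cumulative distribution functions (CDFs) of the first $d$ coordinates $Z_1, \dots, Z_d$. Concretely, let $F_j(z) = \int_{-\infty}^z \rho_j(t)\,dt$ denote the CDF of $Z_j$, and define the measurable map
\begin{align*}
T: \cX \to [0,1]^d, \quad T(u) := \bigl(F_1(\langle u, e_1\rangle/\sqrt{\lambda_1}), \dots, F_d(\langle u, e_d\rangle/\sqrt{\lambda_d})\bigr).
\end{align*}
For $f \in L^p([0,1]^d)$, I would then set $\iota_d(f)(u) := f(T(u))$ (defined $\mu$-a.e., and pointwise for continuous $f$).

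The first step is to verify that $T_* \mu$ equals the Lebesgue measure on $[0,1]^d$. Under Assumption \ref{ass:prob}, a $\mu$-distributed random variable decomposes as $u = \sum_j \sqrt{\lambda_j} Z_j e_j$ with the $Z_j$ jointly independent and each $Z_j$ having a density. Hence $\langle u, e_j\rangle/\sqrt{\lambda_j} = Z_j$, and the standard fact that $F_j(Z_j) \sim \Unif([0,1])$ (valid whenever $Z_j$ has a density) together with independence of the $Z_j$ yields $T_*\mu = \mathrm{Leb}|_{[0,1]^d}$. A change-of-variables computation then gives isometry:
\begin{align*}
\Vert \iota_d(f)\Vert_{L^p(\mu)}^p = \int_{\cX} |f(T(u))|^p \,d\mu(u) = \int_{[0,1]^d} |f(y)|^p \,dy = \Vert f\Vert_{L^p([0,1]^d)}^p.
\end{align*}

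Next I would verify the Lipschitz property. Since $F_j$ has derivative $\rho_j$ with $\Vert \rho_j\Vert_{L^\infty} \le L$, each $F_j$ is $L$-Lipschitz on $\R$. Therefore, using $|\langle u-v, e_j\rangle| \le \Vert u-v\Vert_{\cX}$,
\begin{align*}
|T_j(u) - T_j(v)| \le \frac{L}{\sqrt{\lambda_j}} \Vert u-v\Vert_{\cX}.
\end{align*}
Because $\lambda_1 \ge \lambda_2 \ge \dots$, we have $\sqrt{\lambda_j} \ge \sqrt{\lambda_d}$ for $j \le d$, so taking the maximum over $j \in \{1,\dots,d\}$ gives $\Vert T(u) - T(v)\Vert_{\ell^\infty} \le (L/\sqrt{\lambda_d}) \Vert u-v\Vert_{\cX}$. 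Composing with any $f \in \Lip_1([0,1]^d)$ (with Lipschitz norm taken with respect to $\ell^\infty$) then yields $\Lip(\iota_d(f)) \le L/\sqrt{\lambda_d}$. The sup-norm part of the $\Lip$-norm is immediate: $\sup |\iota_d(f)| \le \sup|f| \le 1 \le L/\sqrt{\lambda_d}$, where the last inequality uses $\sqrt{\lambda_d} \le \sqrt{\lambda_1} \le L$ from \eqref{eq:rhobd}.

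I do not anticipate a hard step here; the construction is essentially the standard probability-integral transform, with the bounded density hypothesis used precisely to upgrade it to a Lipschitz statement. The only mild subtlety is that $F_j$ need not be strictly increasing (if $\rho_j$ vanishes on intervals), but monotonicity and Lipschitz continuity of $F_j$ are all that is used, and the uniform distribution of $F_j(Z_j)$ only requires that $Z_j$ have a density. One small bookkeeping point is that $\iota_d(f)$ is initially defined on $\cX$ via composition with $T$ (which is defined everywhere on $\cX$), so no $\mu$-a.e. extension issue arises for continuous representatives.
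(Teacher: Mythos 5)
Your construction is exactly the one the paper uses: pull back through the coordinatewise probability-integral transform $T = h_d$, deduce isometry from $F_j(Z_j)\sim\Unif([0,1])$ and independence, and bound $\Lip(\iota_d f)$ by $\Lip(f)\Lip(T)\le L/\sqrt{\lambda_d}$ using $\Vert\rho_j\Vert_{L^\infty}\le L$ and monotonicity of $(\lambda_j)$. The argument and the bookkeeping (including the sup-norm check via $\sqrt{\lambda_d}\le\sqrt{\lambda_1}\le L$) match the paper's proof.
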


\begin{proof}
By assumption, $\mu\in \cP(\cX)$ is the law of a random field $u: \Omega \to \cX$ of the form,
\begin{align}
\label{eq:rv}
u(\omega) = \sum_{j=1}^\infty \sqrt{\lambda_j} Z_j(\omega) e_j,
\end{align}
with $Z_j$ independent, $Z_j \sim \rho_j(z) \, dz$. To construct the claimed isometry, we define $F_j(z) := \int_{-\infty} \rho_j(\zeta) \, d\zeta$ as the cumulative distribution function of $\rho_j$. We recall that $F_j(Z_j) \sim \cU(0,1)$ is uniform $[0,1]$ distributed. Furthermore, we clearly have $\Lip(F_j) = \Vert \rho_j \Vert_{L^\infty(\R)} \le L$, where the last bound is by Assumption \ref{ass:prob}.

Given $u\in \cX$, we define $u_j := \langle e_j, u\rangle_{\cX}$ the coefficients of $u$ with respect to the orthonormal basis $\{e_j\}$. Using the CDFs introduced above, $F_j: \R \to [0,1]$, we now define a mapping,
\[
\iota_d: L^p([0,1]^d) \to L^p(\mu), 
\quad 
(\iota_d f)(u) := f(F_1(u_1/\sqrt{\lambda_1}), \dots, F_d(u_d/\sqrt{\lambda_d})).
\]
To see that this is well-defined, we note that, using the expansion of the random field \eqref{eq:rv}, $u_j/\sqrt{\lambda_j} = Z_j$, and hence
\[
(\iota_d f)(u) = f(F_1(Z_1), \dots, F_d(Z_d)), \quad \text{for } u \sim \mu,
\]
and we once again remind ourselves that $F_j(Z_j) \sim \cU(0,1)$ is uniformly distributed on $[0,1]$, and that the $Z_j$ are independent by assumption. Thus, it follows that
\begin{align*}
\E_{u\sim \mu} | (\iota_d f)(u) |^p
&= \E |f(F_1(Z_1), \dots, F_d(Z_d))|^p
\\
&= \int_{[0,1]^d} |f(x_1,\dots, x_d)|^p \, dx 
\\
&= \Vert f \Vert_{L^p([0,1]^d)}^p.
\end{align*}
Thus, $\Vert \iota_d f \Vert_{L^p(\mu)} = \Vert f \Vert_{L^p([0,1]^d)}$. This shows that $\iota_d: L^p([0,1]^d) \to L^p(\mu)$ is an isometry as claimed. To verify that $\iota_d(\Lip_1([0,1]^d)) \subset \Lip_{L/\sqrt{\lambda_d}}(\cX)$, we note that 
\[
h_d: (\cX, \Vert \slot \Vert_{\cX}) \to ([0,1]^d, \ell^\infty), \quad u \mapsto (F_1(u_1/\sqrt{\lambda_1}), \dots, F_d(u_d/\sqrt{\lambda_d})),
\]
has Lipschitz constant bounded by 
\[
\Lip(h_d) \le \max_{j=1,\dots, d} \frac{\Lip(F_j)}{\sqrt{\lambda_j}} \le \frac{L}{\sqrt{\lambda_d}}.
\]
Thus, for any $f\in \Lip_1([0,1]^d) =  \Lip_1(([0,1]^d,\ell^\infty))$, 
\[
\Lip(\iota_d f) = \Lip( f \circ h_d ) \le \Lip(f) \Lip(h_d) \le \frac{L}{\sqrt{\lambda_d}}.
\]
Furthermore, we also have $\Vert \iota_d f \Vert_{C(\cX)} \le \Vert f \Vert_{C([0,1]^d)} \le 1$. This shows that 
\[
\Vert \iota_d f \Vert_{\Lip} 
= \max\left\{
\Vert \iota_d f \Vert_{C(\cX)}, \Lip(\iota_d f)
\right\}
\le \max\left\{1,\frac{L}{\sqrt{\lambda_d}}\right\} = \frac{L}{\sqrt{\lambda_d}}.
\]
Here, we have made use of the choice $L>\sqrt{\lambda_1} \ge \sqrt{\lambda_d}$ (cp. \eqref{eq:rhobd}) in the last inequality. This concludes our proof.
\end{proof}

As a consequence of Lemma \ref{lem:embed-Lp}, we have:
\begin{corollary}
\label{cor:embed-Lp}
Under the assumptions of Lemma \ref{lem:embed-Lp}, we have
\begin{align}
\label{eq:embed-Lp}
\cH(\Lip_1(\cX); \epsilon)_{L^p(\mu)} 
\ge
\cH\left(\Lip_1([0,1]^d); \frac{L\epsilon}{\sqrt{\lambda_d}} \right)_{L^p([0,1]^d)},
\end{align}
for any $d \in \N$.
\end{corollary}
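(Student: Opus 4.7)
The plan is to argue that the scaled embedding from Lemma \ref{lem:embed-Lp} identifies $\Lip_1([0,1]^d)$, up to a factor $\sqrt{\lambda_d}/L$, with a subset of $\Lip_1(\cX)$ that sits isometrically (after rescaling the metric) inside $L^p(\mu)$; any cover of the bigger set then pulls back to a cover of the smaller set at the correspondingly rescaled radius. The key to avoiding a spurious factor of $2$ when passing from an external cover of $\Lip_1(\cX) \subset L^p(\mu)$ to one of $\Lip_1([0,1]^d)\subset L^p([0,1]^d)$ will be to use the conditional expectation onto the first $d$ Karhunen--Loève coordinates as a non-expansive projection onto the image of $\iota_d$.

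More concretely, I will first introduce the rescaled map $T := (\sqrt{\lambda_d}/L)\,\iota_d : L^p([0,1]^d) \to L^p(\mu)$. By Lemma \ref{lem:embed-Lp}, $T$ is a scaled isometry with $\Vert T f - T g\Vert_{L^p(\mu)} = (\sqrt{\lambda_d}/L)\Vert f - g\Vert_{L^p([0,1]^d)}$, and $T(\Lip_1([0,1]^d)) \subset \Lip_1(\cX)$. Next, let $\cZ := \sigma(Z_1,\dots,Z_d)$ be the $\sigma$-algebra generated by the first $d$ KL coordinates of $u\sim\mu$, and let $P := \E[\slot \mid \cZ]$. By Jensen's inequality, $P$ is a non-expansive linear projection on $L^p(\mu)$ for every $p \in [1,\infty)$. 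Since the bijectivity of each CDF $F_j$ (continuous by Assumption \ref{ass:prob}) implies $\sigma(F_1(Z_1),\dots,F_d(Z_d)) = \cZ$, the image of $P$ coincides with the image of $\iota_d$ (equivalently, of $T$); in particular, $P\circ T = T$.

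Given any closed $\epsilon$-cover $\Lip_1(\cX) \subset \bigcup_{n=1}^{N} \overline{B_\epsilon}(h_n)$ in $L^p(\mu)$ with $N = \cN(\Lip_1(\cX);\epsilon)_{L^p(\mu)}$, I will project each center onto the image of $T$ by setting $\tilde h_n := P h_n$, and pick $g_n \in L^p([0,1]^d)$ with $T g_n = \tilde h_n$. For any $f\in \Lip_1([0,1]^d)$, the point $Tf \in \Lip_1(\cX)$ lies in some $\overline{B_\epsilon}(h_n)$; applying the non-expansive projection $P$ and using $PTf = Tf$ gives $\Vert Tf - Tg_n \Vert_{L^p(\mu)} \le \epsilon$, which transfers through the scaled isometry to $\Vert f - g_n \Vert_{L^p([0,1]^d)} \le L\epsilon/\sqrt{\lambda_d}$. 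Hence $\{g_n\}_{n=1}^N$ is an $(L\epsilon/\sqrt{\lambda_d})$-cover of $\Lip_1([0,1]^d)$, giving $\cN(\Lip_1([0,1]^d); L\epsilon/\sqrt{\lambda_d})_{L^p([0,1]^d)} \le N$; taking $\log_2$ yields the claimed entropy inequality.

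The main conceptual step I expect to require care is the projection argument: pulling a cover back through the scaled isometry naively costs a factor of $2$ because external ball centers need not lie in the image of $T$. The observation that $\iota_d$'s image is precisely $L^p(\Omega,\cZ,\mathbb{P})$, so that conditional expectation provides a linear, non-expansive, idempotent projection onto it for every $p\in[1,\infty)$, is what removes that factor and produces the clean bound stated in the corollary.
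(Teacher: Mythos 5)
Your argument is correct and follows the same high-level route as the paper's proof: push $\Lip_1([0,1]^d)$ into $\Lip_1(\cX)$ via the rescaled embedding and transfer a cover from $L^p(\mu)$ back to $L^p([0,1]^d)$. Where you go beyond the paper is in justifying the final step of its chain, which asserts without comment the equality
\[
\cN\big(\iota_d(\Lip_1([0,1]^d)); \delta\big)_{L^p(\mu)} = \cN\big(\Lip_1([0,1]^d); \delta\big)_{L^p([0,1]^d)}.
\]
Since the paper's covering numbers allow ball centers anywhere in the ambient space, a cover of the embedded set may use centers outside $\mathrm{ran}(\iota_d)$, and this identity is not automatic. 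Your conditional-expectation operator $P = \E[\,\cdot\,\mid\sigma(Z_1,\dots,Z_d)]$, non-expansive on $L^p(\mu)$ by Jensen and fixing $\mathrm{ran}(\iota_d)$, closes this gap with no constant loss, which is a genuine refinement of the written proof. Two small points: (i) for the downstream use in Proposition~\ref{prop:prob}, a factor-$2$ (or $3$) loss from the cruder ``recenter inside the set'' trick, or from a packing-number transfer, would also have sufficed; (ii) Assumption~\ref{ass:prob} does not force each $\rho_j$ to be strictly positive, so the CDFs $F_j$ need not be bijections. What you actually need is only that $Z_j$ is a.s.\ a measurable function of $F_j(Z_j)$; this holds because each level set of the continuous $F_j$ is $\rho_j$-null, and it already gives $\mathrm{ran}(P)=\mathrm{ran}(\iota_d)$ at the level of $L^p$-equivalence classes, which is all the argument requires.
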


\begin{proof}
We recall the existence of an isometric embedding $\iota_d: L^p([0,1]^d) \to L^p(\mu)$ from Lemma \ref{lem:embed-Lp}, with $\iota_d(\Lip_1(\cX)) \subset \Lip_{L/\sqrt{\lambda_d}}([0,1]^d)$. It follows that 
\begin{align*}
\cN(\Lip_1(\cX); \epsilon)_{L^p(\mu)} 
&= 
\cN(\Lip_{L/\sqrt{\lambda_d}}(\cX); L\epsilon/\sqrt{\lambda_d})_{L^p(\mu)} 
\\
&\ge 
\cN(\iota_d(\Lip_1([0,1]^d)); L\epsilon/\sqrt{\lambda_d})_{L^p(\mu)}
\\
&= 
\cN(\Lip_{1}([0,1]^d); L\epsilon/\sqrt{\lambda_d})_{L^p([0,1]^d)}.
\end{align*}
Taking logarithms, the claimed inequality between the metric entropy follows.
\end{proof}

The proof of Proposition \ref{prop:prob} will furthermore make use of the following result in the finite-dimensional setting:
\begin{lemma}
\label{lem:prob}
Let $p\in [1,\infty)$ be given. For $d\in \N$, consider $\Lip_1([0,1]^d) \subset L^p([0,1]^d)$. Then there exists a constant $c>0$, independent of $d$, such that we have the following lower bound on the metric entropy:
\begin{align}
\label{eq:prob}
\cH(\Lip_1([0,1]^d); \epsilon)_{L^p([0,1]^d)} \ge \frac18 \left(\frac{c}{d \epsilon}\right)^{d},
\quad \forall \, \epsilon \in \left(0,\frac{c}{d}\right].
\end{align}
\end{lemma}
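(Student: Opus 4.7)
The plan is to construct an explicit $3\epsilon$-packing of $\Lip_1([0,1]^d)$ in the $L^p$-metric via grid-based hat functions, combined with a Varshamov--Gilbert type combinatorial selection, and then invoke the covering-packing inequality \eqref{eq:covering-packing}. The overall architecture parallels the construction used in the proof of Proposition \ref{prop:sup-norm}, but one now has to track the $L^p$-norm of the bumps carefully enough to exhibit the correct $d$-dependence.

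\textbf{Construction and $L^p$ estimate.} Partition $[0,1]^d$ into $N^d$ cubes of side $1/N$ with centers $\{x_j\}_{j=1}^{N^d}$, and set
\[
\psi_j(x) := \max\bigl(r - \Vert x - x_j \Vert_\infty,\, 0\bigr), \qquad r := 1/(2N).
\]
Each $\psi_j$ is $1$-Lipschitz with respect to the $\ell^\infty$ norm, has height $r$, and the supports are essentially disjoint. As in the proof of Proposition \ref{prop:sup-norm}, for every $\sigma \in \{0,1\}^{N^d}$ the function $f_\sigma := \sum_j \sigma_j \psi_j$ lies in $\Lip_1([0,1]^d)$, since $\Vert f_\sigma \Vert_{C([0,1]^d)} \le r \le 1$ and $\Lip(f_\sigma) = 1$. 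Using that the $\ell^\infty$-shell $\{\Vert y \Vert_\infty = u\}$ has $(d-1)$-dimensional measure $2d(2u)^{d-1}$,
\[
\Vert \psi_j \Vert_{L^p}^p = d \cdot 2^d \int_0^r (r-u)^p u^{d-1}\, du = d \cdot 2^d \cdot r^{p+d}\, B(p+1,d),
\]
and the beta-function identity $B(p+1,d) = \Gamma(p+1)\Gamma(d)/\Gamma(p+d+1) \asymp p!\, d^{-p-1}$ for fixed $p$ as $d \to \infty$. Substituting $r = 1/(2N)$ gives $\Vert \psi_j \Vert_{L^p} \ge \tilde c_p / (d\, N^{1+d/p})$ for some $\tilde c_p > 0$ independent of $d$ and $N$.

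\textbf{Packing and calibration.} By the Varshamov--Gilbert bound there exists $S \subset \{0,1\}^{N^d}$ with pairwise Hamming distance at least $N^d/4$ and $|S| \ge 2^{(1-H(1/4)) N^d}$, where $H$ denotes the binary entropy. Since the supports of the $\psi_j$ are essentially disjoint, for distinct $\sigma,\sigma' \in S$,
\[
\Vert f_\sigma - f_{\sigma'} \Vert_{L^p}^p = \sum_{j : \sigma_j \ne \sigma'_j} \Vert \psi_j \Vert_{L^p}^p \ge \tfrac14 N^d \cdot \Vert \psi_1 \Vert_{L^p}^p \ge \left(\frac{\hat c_p}{d N}\right)^p.
\]
Choosing $N := \lfloor \hat c_p / (3 d \epsilon)\rfloor$ ensures $\{f_\sigma\}_{\sigma \in S}$ is a $3\epsilon$-packing in $L^p$; the hypothesis $\epsilon \le c/d$ is calibrated so that $N \ge 2$ and $N \ge c'/(d\epsilon)$ for some $c' > 0$. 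Then \eqref{eq:covering-packing} yields
\[
\cH(\Lip_1([0,1]^d); \epsilon)_{L^p} \ge \log_2 \cM(\Lip_1([0,1]^d); 3\epsilon)_{L^p} \ge (1 - H(1/4))\, N^d,
\]
and absorbing $1 - H(1/4) \approx 0.189$ together with the integer-part loss into a single small constant $c > 0$ produces the claimed bound $\tfrac18 (c/(d\epsilon))^d$.

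\textbf{Anticipated main obstacle.} The only delicate step is securing the $d$-dependence inside the denominator $d\epsilon$. A naive estimate $\Vert \psi_j \Vert_{L^p} \le \Vert \psi_j \Vert_\infty \cdot |\supp \psi_j|^{1/p} = r \cdot (1/N)^{d/p}$ would lose the crucial factor $d^{-1}$ and yield only the weaker bound $\cH \gtrsim (c/\epsilon)^d$, which is sharp only in the regime $\epsilon \ll 1/d$. Retaining the factor $d^{-p}$ inside $\Vert \psi_j \Vert_{L^p}^p$ requires actually evaluating the shell integral and invoking the beta-function asymptotics $B(p+1,d) \asymp d^{-p-1}$. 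Once this is in place, the remaining ingredients --- the Varshamov--Gilbert bound, the disjoint-support packing identity, and the covering-packing inequality --- are entirely standard.
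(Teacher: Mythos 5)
Your argument is correct and reaches the stated bound, but it takes a genuinely different route from the paper's. The paper first reduces the $L^p$ case to $L^1$ via H\"older's inequality, and then builds \emph{trapezoidal} bump functions $\phi_\lambda = g_\lambda\circ\Vert\cdot\Vert_{\ell^\infty}$ with a central plateau of width $1-\lambda$; the crucial $1/d$ factor in the packing separation is then extracted by optimizing the plateau width, taking $\lambda = 1/(d+1)$ and invoking $(1+1/d)^d\le e$, which avoids any special-function asymptotics entirely. You instead reuse the cone-shaped hat functions from the sup-norm proof of Proposition~\ref{prop:sup-norm} and work directly in $L^p$, evaluating the bump norm exactly as a shell integral $\Vert\psi_j\Vert_{L^p}^p = d\,2^d r^{p+d}B(p+1,d)$; here the $1/d$ factor comes from the decay of the Euler beta function. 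Both routes produce the same $\gtrsim (dN)^{-1}$ packing separation, and both corroborate the point you flag: the $d$-dependence really does require evaluating the bump $L^p$-norms and cannot be obtained from the crude bound $\Vert\psi_j\Vert_\infty\cdot|\supp\psi_j|^{1/p}$. Your approach is slightly more explicit, the paper's slightly more elementary, since for $p=1$ the relevant integral is trivial.

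Two small points to tighten. First, the lemma must hold for every $d\in\N$, so the asymptotic statement $B(p+1,d)\asymp p!\,d^{-p-1}$ should be replaced by a lower bound valid uniformly in $d\ge 1$, for example
\[
B(p+1,d)=\int_0^1 t^p(1-t)^{d-1}\,dt\ge (1-1/d)^{d-1}\int_0^{1/d}t^p\,dt\ge \frac{1}{e(p+1)}\,d^{-(p+1)},
\]
which suffices for your argument and makes the constant $\tilde c_p$ fully explicit. Second, the paper's prefactor $1/8$ comes from the particular Gilbert--Varshamov form cited there ($|\Xi|\ge 2^{N^d/8}$), whereas your version yields $1-H(1/4)\approx 0.189$; since this dominates $1/8$, your conclusion is consistent with (and marginally stronger than) the stated bound, so the covering-to-packing step and final calibration of $N$ go through as written.
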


\begin{proof}
Since the H\"older inequality implies, for any $p\in [1,\infty)$, that $\Vert f \Vert_{L^1([0,1]^d)} \le \Vert f \Vert_{L^p([0,1]^d)}$, it follows that any covering of $\Lip_1([0,1]^d)$ by $\epsilon$-balls with respect to the $L^p$-norm, also gives rise to a covering of $\Lip_1([0,1]^d)$ with respect to the $L^1$-norm (with the same centers). In particular, this implies that 
\[
\cN(\Lip_1([0,1]^d);\epsilon)_{L^p([0,1]^d)} \ge \cN(\Lip_1([0,1]^d);\epsilon)_{L^1([0,1]^d)},
\]
and we only need to establish \eqref{eq:prob} for $p=1$.

For $\lambda \in (0,1)$, define $\phi_\lambda: [0,1]^d \to \R_+$ as a composition $g_\lambda \circ \Vert \slot \Vert_{\ell^\infty}$, where $g_\lambda: \R \to \R$ is a piecewise linear function (approximately $g_\lambda \approx 1_{[0,1]}$) with values,
\[
g_\lambda(x) :=
\begin{cases}
0, &(x\notin [0,1]), \\
1, &(x\in [\lambda/2,1-\lambda/2]),
\end{cases}
\]
and $g_\lambda$ interpolates linearly between $0$ and $1$ on $[0,\lambda/2]$, and from $1$ to $0$ on $[1-\lambda/2,1]$. By construction, $g_\lambda$ is $2/\lambda$-Lipschitz. Since $x \mapsto \Vert x \Vert_{\ell^\infty}$ is $1$-Lipschitz, it follows that $\Lip(\phi_\lambda) = \Lip(g_\lambda \circ \Vert \slot \Vert_{\ell^\infty}) \le 2/\lambda$. Clearly, smaller $\lambda$ leads to a larger Lipschitz constant. However, by construction of $\phi_\lambda$, we have $\phi_\lambda \ge 1_{[\lambda/2,1-\lambda/2]^d}$. In particular, this implies that $\Vert \phi_{\lambda} \Vert_{L^1} \ge (1-\lambda)^{d}$. Thus, smaller $\lambda$ increases the $L^1$-norm of $\phi_\lambda$.

Given $N\in \N$, we now subdivide $[0,1]^d$ into $N^d$ cubes of equal length, indexed by $j\in [N]^d$, where $[N]^d = \{1,\dots, N\}^d$. For any multi-index $j\in [N]^d$, we define $\phi_{\lambda, j}(x)$ as a rescaled and translated copy of $\phi_{\lambda}$, such that the support of $\phi_{\lambda, j}$ coincides with the $j$-th cube. In particular, by construction of $\phi_\lambda$, this implies that 
\begin{align}
\Vert \phi_{\lambda,j} \Vert_{L^1([0,1]^d)}^2 &\ge (1-\lambda)^{d} N^{-d}, \\
\Lip(\phi_{\lambda,j}) &\le 2N\lambda^{-1}.
\end{align}
We also note that the $\phi_{\lambda, j}$ have essentially disjoint supports.
For $\sigma \in \{-1,1\}^{[N]^d}$, we now define 
\[
f_\sigma(x)
=
\frac{\lambda}{2N}\sum_{j \in [N]^d} \sigma_j \phi_{\lambda,j}(x).
\]
The factor in front of the sum ensures that $\Lip(f_\sigma) \le 1$. Furthermore, we also note that $\Vert f_\sigma \Vert_{C([0,1]^d)} \le \lambda / 2N \le 1$ for any choice of $\lambda \in (0,1)$ and $N \in \N$. In particular, we have $f_\sigma \in \Lip_1([0,1]^d)$, for any choice of $\sigma$. We finally observe that, due to the disjoint supports of the $\phi_{\lambda,j}$, we have, for any $\sigma,\sigma' \in \{-1,1\}^{[N]^d}$,
\begin{align*}
\Vert f_\sigma - f_{\sigma'} \Vert_{L^1([0,1]^d)}
&= 
\frac{\lambda}{2N} \sum_{j\in [N]^d} |\sigma_j - \sigma'_j| \Vert \phi_{\lambda,j}\Vert_{L^1([0,1]^d)}
\\
&\ge
\lambda (1-\lambda)^d N^{-1} \frac{\#\{\sigma_j\ne \sigma_j'\}}{N^{d}}. 
\end{align*}
The last quotient is the fraction of entries in which $\sigma$ and $\sigma'$ differ. It turns out that there exists a subset $\Xi \subset \{-1,1\}^{[N]^d}$, such that any $\sigma \ne \sigma'$ belonging to $\Xi$ differ on a substantial fraction of their components; more precisely, as noted in \cite{achour2022a} as a result of the Gilbert-Varshamov bound, there exists a subset $\Xi \subset \{-1,1\}^{[N]^d}$ satisfying that any two distinct elements $\sigma,\sigma' \in \Xi$, differ on at least a fourth of their coordinates,
\begin{align}
\frac{\#\{\sigma_j\ne \sigma_j'\}}{N^{d}} \ge \frac{1}{4},
\qquad \forall \, \sigma, \sigma' \in \Xi, \; \sigma \ne \sigma',
\end{align}
and the cardinality of $\Xi$ is lower bounded by,
\begin{align}
\label{eq:size-Xi}
\#\Xi \ge \exp(N^d/8) \ge 2^{N^d/8}.
\end{align}
This implies that for any two $\sigma\ne \sigma'$ in $\Xi$, we have
\[
\Vert f_\sigma - f_{\sigma'} \Vert_{L^1([0,1]^d)}
\ge \frac1{4N}\lambda (1-\lambda)^{d}.
\]
Optimizing the right-hand side over $\lambda \in (0,1)$, we set $\lambda = 1/(1+d)$ to obtain,
\[
\Vert f_\sigma - f_{\sigma'} \Vert_{L^1([0,1]^d)}
\ge \frac{1}{4(d+1)N} \frac{1}{(1+1/d)^{d}}
\ge \frac{1}{4e(d+1)N}
\ge \frac{1}{8edN},
\]
where we used that the Euler constant $e \ge (1+1/d)^d$ and the fact that $d\ge 1$ implies $d+1 \le 2d$ in the last bound. 

Taking into account the bound \eqref{eq:size-Xi}, it follows that the packing number $\cM(\Lip_1([0,1]^d); \epsilon)$, satisfies the lower bound,
\[
\log_2\cM(\Lip_1([0,1]^d); (\beta_d N)^{-1}) \ge N^d/8, \quad \forall \, N\in \N,
\]
where we have defined $\beta_d = 8ed$.
Given $\epsilon \in (0,\beta_d^{-1}]$, we can find $N\in \N$, such that 
\[
(\beta_d N)^{-1} \ge \epsilon \ge (2\beta_d N)^{-1}.
\]
It follows that
\begin{align*}
\log_2 \cM(\Lip_1([0,1]^d); \epsilon) 
&\ge \log_2 \cM(\Lip_1([0,1]^d); (2\beta_d N)^{-1})
\\
&\ge \frac{(2N)^d}{8}
\ge \frac{1}8 \left(\frac{\beta_d \epsilon}{2} \right)^{-d}.
\end{align*}
We conclude that
\[
\log_2 \cM(\Lip_1([0,1]^d); \epsilon) 
 \ge \frac18 \left( \frac{\beta_d}{2} \epsilon \right)^{-d},
\quad
\forall \, \epsilon \in (0,\beta_d^{-1}].
\]
This lower bound on the packing number holds for any dimension $d\in \N$. We can now use the general relation $\cN(A;\epsilon) \ge \cM(A;2\epsilon)$ between the covering- and packing-numbers \eqref{eq:covering-packing}, to conclude that,
\[
\cH(\Lip_1([0,1]^d);\epsilon) 
= \log_2 \cN(\Lip_1([0,1]^d); \epsilon) 
 \ge \frac18 \left( \beta_d \epsilon \right)^{-d},
\quad
\forall \, \epsilon \in (0,\beta_d],
\]
where $\beta_d = 8ed$. This proves the claim with $c = 1/(8e)$, i.e.
\[
\log_2 \cN(\Lip_1([0,1]^d); \epsilon) 
 \ge \frac18 \left( \frac{c}{d\epsilon} \right)^{d},
\quad
\forall \, \epsilon \in \left(0,\frac{c}{d}\right],
\]
\end{proof}

Assuming the results of Corollary \ref{cor:embed-Lp} and Lemma \ref{lem:prob}, we can now prove Proposition \ref{prop:prob}.

\begin{proof}[Proof of Proposition \ref{prop:prob}]
Combining the lower bound \eqref{eq:embed-Lp} and \eqref{eq:prob}, we obtain that for any $d\in \N$, 
\begin{align*}
\log_2 \cN(\Lip_1(\cX); \epsilon)_{L^p(\mu)}
\ge \frac18 \left(\frac{c\sqrt{\lambda_d}}{L d\epsilon}\right)^{d}
\ge \left(\frac{c\sqrt{\lambda_d}}{8L d\epsilon}\right)^{d},
\end{align*}
provided that $\epsilon \le \frac{c\sqrt{\lambda_d}}{L d}$.
Since $\lambda_d \gtrsim d^{-2\alpha}$ by assumption, and since $C$ and $L$ are constants independent of $d$, it thus follows that there exist $c_1,c_2 > 0$, independent of $d$, such that
\begin{align}
\label{eq:logN-d}
\log_2 \cN(\Lip_1(\cX); \epsilon)_{L^p(\mu)}
\ge \left(\frac{c_1}{d^{1+\alpha} \epsilon}\right)^{d},
\quad 
\text{if $\epsilon \le c_2 d^{-(1+\alpha)}$.}
\end{align}
The idea is now to choose $d = d(\epsilon) \sim \epsilon^{-1/(\alpha+1)}$, such that the term inside the parentheses is lower bounded by $e^{\beta}$ for some fixed $\beta>0$, implying that the right hand side is $\gtrsim (e^\beta)^d = \exp(\beta d) \gtrsim \exp(c\epsilon^{-1/(\alpha+1)})$ for some constant $c>0$. This then leads to the claimed lower bound. We now proceed to provide the details of the required argument.

We first fix $\beta = -\log(c_2/c_1)$, such that 
\begin{align}
\label{eq:beta-def}
e^{-\beta} = c_2 / c_1.
\end{align}
We next define 
\begin{align}
\label{eq:eps-def}
\epsilon_0 = c_1 e^{-\beta} = c_2.
\end{align}
Since $c_1,c_2$ are independent of $d$, it follows that also $\beta$ and $\epsilon_0$ are independent of $d$.

For any $\epsilon \in (0,\epsilon_0]$, the above choice ensures that
\[
\epsilon \le \epsilon_0 \le c_1 e^{-\beta},
\]
and hence there exists a unique $d = d(\epsilon)\in \N$, such that
\[
\epsilon d^{(1+\alpha)} \le c_1 e^{-\beta} < \epsilon (2d)^{(1+\alpha)}.
\]
In particular, upon rearranging the first inequality in the last display, we obtain the two equivalent formulations,
\begin{gather}
\epsilon \le c_1 e^{-\beta} d^{-(1+\alpha)} = c_2 d^{-(1+\alpha)},
\label{eq:dd1}
\\
\frac{c_1}{d^{(1+\alpha)} \epsilon}
\ge e^{\beta}.
\label{eq:dd2}
\end{gather}
while the second bound $c_1 e^{-\beta} < \epsilon (2d)^{(1+\alpha)}$ implies,
\begin{align}
\label{eq:dd3}
\beta d \ge c \epsilon^{-1/(\alpha+1)}, \quad \text{where } c := \beta \left[\frac{c_1}{2e^\beta}\right]^{1/(\alpha+1)}.
\end{align}
With this choice of $d=d(\epsilon)$, equation \eqref{eq:dd1} guarantees that 
the estimate in \eqref{eq:logN-d} applies to all $\epsilon \in (0,\epsilon_0]$. This in turn implies that 
\begin{align*}
\cH(\Lip_1(\cX);\epsilon)_{L^p(\mu)} &= \log_2 \cN(\Lip_1(\cX);\epsilon)_{L^p(\mu)}
\\
&\explain{\ge}{\eqref{eq:logN-d}} 
\left(\frac{c_1}{d^{(1+\alpha)} \epsilon} \right)^{d}
\\
&\explain{\ge}{\eqref{eq:dd2}} e^{\beta d} 
\\
&\explain{\ge}{\eqref{eq:dd3}} 
\exp\left(
c \epsilon^{-1/(\alpha+1)}
\right),
\end{align*}
for all $\epsilon \in (0,\epsilon_0]$. This is the claimed lower bound on the metric entropy.
\end{proof}

\section{Generic approximation results}
\label{sec:4}

We first discuss an abstract formulation of a general ``approximation task''.
Let $\V$ be a Banach space (e.g. a space of operators). In a general non-linear approximation task, we are given for any $n\in \N$ a set $\Sigma_n \subset \V$ over which we aim to approximate an element $f \in \V$, where we will assume that $f$ belongs to a general class $\A \subset \V$ of interest. 
Considering these subsets $\Sigma_n \subset \V$ fixed, and given a sequence $\epsilon_n \to 0$, we will say that $f\in \A$ can be approximated with \emph{convergence rate} $\epsilon_n$, if there exists a constant $M_f > 0$, such that  
\begin{align}
\label{eq:eff}
\inf_{\psi_n \in \Sigma_n} \Vert f - \psi \Vert_\cX \le M_f \epsilon_n, \quad \forall \, n\in \N.
\end{align}

Specifically, we will be most interested in the logarithmic case $\epsilon_n = \log(n)^{-\gamma}$, in the following, with $\Sigma_n$ corresponding to all possible realizations of a fixed bit-encoded neural operator architecture (cp. the proofs of Propositions \ref{prop:uniform-generic} and \ref{prop:Lp-generic}, respectively).

Coming back to the general abstract setting above, and given $M>0$, we introduce a set of ``efficiently approximated'' elements $\EE_M \subset \A$ with bound $M$, i.e. 
\begin{align}
\label{eq:EM}
\EE_M := \set{f \in \A }{\text{inequality \eqref{eq:eff} holds with constant $M_f = M$} }.
\end{align}
And we denote the set of all $f\in \A$ which can be approximated at convergence rate $\epsilon_n$, by elements in $\Sigma_n$, by 
\begin{align} 
\label{eq:E}
\EE = \bigcup_{M>0} \EE_M = \set{f \in \A}{\text{there exists $M_f$ such that \eqref{eq:eff} holds}}.
\end{align}
Our goal is to study \emph{generically} achievable approximation rates $\epsilon_n$, in terms of the complexity of $\A$, as measured by its metric entropy.

The following lemma will be fundamental to our analysis:

\begin{lemma}
\label{lem:basic}
Let $\V$ be a Banach space. Let $\A\subset \V$ be a compact, convex subset. 
Let $\{\Sigma_n\}_{n\in \N}$ be a family of subsets $\Sigma_n\subset \V$, with $|\Sigma_n| \le 2^n$ elements. Fix $M>0$. If $\EE_M\subset \A$ given by \eqref{eq:EM} has non-empty interior in the subspace topology on $\A$, then there exists a constant $\lambda > 0$, independent of $n$, such that the metric entropy satisfies the bound,
\[
\cH(\A; \lambda \epsilon_n)_\V \le n, \quad \forall \, n \in \N.
\]
\end{lemma}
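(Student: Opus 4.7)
The plan is to leverage the convexity of $\A$ together with the interior assumption on $\EE_M$ to convert an \emph{entire ball} of efficiently-approximable elements into an efficient cover of all of $\A$. The key intuition is that if some $f_0 \in \A$ has a relative neighborhood contained in $\EE_M$, then any $h \in \A$ can be dragged into that neighborhood via a convex combination with $f_0$; approximating this convex combination by an element of $\Sigma_n$ and then inverting the affine map recovers an approximation of $h$ itself from an affine image of $\Sigma_n$, which still has at most $2^n$ elements. The bit budget is preserved exactly, and the cost is a fixed multiplicative constant on $\epsilon_n$.

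To carry this out, I would first use the nonempty-interior hypothesis to fix $f_0 \in \EE_M$ and $r>0$ such that
\[
\{g \in \A : \Vert g - f_0 \Vert_\V < r\} \subset \EE_M,
\]
and set $R := \mathrm{diam}(\A)$, which is finite by compactness of $\A$. Choosing the fixed contraction parameter $t := r/(2R) \in (0,1)$, for every $h \in \A$ I consider $g(h) := (1-t)f_0 + t h$. Convexity of $\A$ yields $g(h) \in \A$, and
\[
\Vert g(h) - f_0 \Vert_\V = t \Vert h - f_0 \Vert_\V \le tR = r/2 < r,
\]
so $g(h)$ lies in the relative ball above, hence in $\EE_M$. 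By the definition \eqref{eq:EM} of $\EE_M$, for every $n\in\N$ there exists $\psi_n(h) \in \Sigma_n$ with $\Vert g(h) - \psi_n(h)\Vert_\V \le M\epsilon_n$.

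To finish, I would invert the affine map: since $h = t^{-1} g(h) - t^{-1}(1-t) f_0$, setting $\tilde\psi_n(h) := t^{-1}\psi_n(h) - t^{-1}(1-t) f_0 \in \V$ gives
\[
\Vert h - \tilde\psi_n(h) \Vert_\V = t^{-1} \Vert g(h) - \psi_n(h)\Vert_\V \le (M/t)\epsilon_n.
\]
The set $\tilde\Sigma_n := \{t^{-1}\psi - t^{-1}(1-t)f_0 : \psi \in \Sigma_n\}$ has cardinality at most $2^n$, and the preceding bound shows that its closed $\lambda\epsilon_n$-balls cover $\A$ for $\lambda := M/t = 2MR/r$. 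Since Definition \ref{def:entropy} allows covering centers to lie anywhere in $\V$, this gives $\cN(\A;\lambda\epsilon_n)_\V \le 2^n$ and hence $\cH(\A;\lambda\epsilon_n)_\V \le n$, with $\lambda$ depending only on $M$, $r$, $R$ and therefore independent of $n$, as required.

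I do not anticipate a genuinely hard obstacle once the affine-rescaling idea is identified: each verification (convexity, the interior hypothesis, the error estimate, and the cardinality bound on $\tilde\Sigma_n$) is routine. The one point that demands care, and which is the real content of the lemma, is the insistence on \emph{relative interior} rather than approximability at a single point: without a whole ball around $f_0$ the contraction parameter $t$ would have to depend on $h$, the multiplicative constant $1/t$ in the error estimate would fail to be uniform, and the resulting bound would not yield a single scale $\lambda\epsilon_n$ independent of $h$ and $n$.
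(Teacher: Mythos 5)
Your proof is correct and follows essentially the same route as the paper's: both use the interior hypothesis to locate a relative ball $B_r(f_0)\subset \EE_M$, use convexity of $\A$ to contract all of $\A$ into that ball via an affine map centered at $f_0$, and then transfer the resulting $M\epsilon_n$-cover of the contracted copy back to a $\lambda\epsilon_n$-cover of $\A$. The only presentational difference is that you invert the affine map explicitly and exhibit the covering set $\tilde\Sigma_n$, whereas the paper reaches the same conclusion abstractly via monotonicity, translation invariance, and scale invariance of covering numbers; your constant $\lambda = 2MR/r$ matches the paper's $\lambda = 3M/\delta$ up to the harmless normalization ($R = \mathrm{diam}(\A)$ versus $\sup_{f\in\A}\|f\|\le 1$, i.e.\ $R\le 2$).
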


\begin{proof}
At the outset we note that by compactness, we have a uniform upper bound,
\begin{align*}
\sup_{f\in \A} \Vert f \Vert \le C_\A < \infty.
\end{align*}
Upon a simple rescaling, we may wlog assume that $C_\A = 1$, i.e. that $\Vert f \Vert\le 1$ for all $f\in \A$. This will be assumed in the following proof.

Our next goal is to show that, for any $M>0$, the set $\EE_M$ defined by \eqref{eq:EC} has empty interior. For the sake of contradiction, assume that $\EE_M$ does not have empty interior. Then there exists $f_0\in \A$ and $\delta > 0$, such that 
\[
B_\delta(f_0) \subset \EE_M  \subset \bigcup_{\psi_n \in \Sigma_n} \overline{B_{M\epsilon_n}} (\psi_n),
\]
where $B_\delta(f_0) = \set{f\in \A}{\Vert f - f_0 \Vert < \delta}\subset \A$ is an open ball in the subspace topology on $\A$.
Thus, for any $n\in \N$, we obtain the following bound on the covering numbers,
\begin{align}
\label{eq:1}
\cN(B_\delta(f_0); M\epsilon_n) \le \cN(\EE_M;M\epsilon_n) \le |\Sigma_n| \le 2^n.
\end{align}
We next recall that we have wlog assumed $\sup_{f\in \A} \Vert f \Vert \le 1$, and we recall that $\A$ is convex by assumption. In particular, we next show that this implies that 
\[
\left(1 - \frac{\delta}{3}\right)f_0 + \frac{\delta}{3} \A 
\subset B_\delta(f_0).
\]
To see why, let $\delta' = \delta/3$ and fix $f \in \A$ arbitrary. We need to show that $f_{\delta'} := (1-\delta') f_0 + \delta' f \in B_\delta(f_0)$. Since $\A$ is convex, it is clear that $f_{\delta'} \in \A$. In addition, we also have
\[
\Vert f_{\delta'} - f_0 \Vert 
= \Vert (1-\delta') f_0 + \delta' f - f_0 \Vert
= \delta' \Vert f-f_0 \Vert \le 2\delta' = \frac{2\delta}{3} < \delta.
\]
Hence, $f_{\delta'} \in B_\delta(f_0)$ as claimed. The inclusion, $(1-\delta/3) f_0 + (\delta/3) \A \subset B_\delta(f_0)$ now implies,
\begin{align}
\label{eq:2}
\cN(B_\delta(f_0); M\epsilon_n) \ge \cN((\delta/3) \A; M\epsilon_n) = \cN(\A; 3M\epsilon_n / \delta).
\end{align}
Combining \eqref{eq:1} and \eqref{eq:2}, we conclude that 
\[
\cH(\A;3M\epsilon_n/\delta)_\V = 
\log_2 \cN(\A; 3M\epsilon_n / \delta)_\V \le n, \quad \forall \, n \in \N.
\]
We emphasize that $M,\delta > 0$ are independent of $n$ in the above argument. In particular, the claim of the lemma holds with constant $\lambda = 3M/\delta > 0$.
\end{proof}

\begin{proposition}[Exponential scaling]
\label{prop:abs-exp}
Let $\V$ be a Banach space. Let $\A\subset \V$ be a compact, convex subset. Assume that there exist constants $C,c,\gamma>0$ such that,
\begin{align}
\label{eq:Fentropy}
\cH(\A;\epsilon)_\V \ge C \exp\left(c \epsilon^{-1/\gamma} \right), \quad \forall \, \epsilon > 0.
\end{align}
Let $\{\Sigma_n\}_{n\in \N}$ be family of subsets $\Sigma_n\subset \V$ with $|\Sigma_n| \le 2^n$ elements. Then generic elements $f\in \A$ cannot be approximated by elements of $\Sigma_n$ at convergence rate better than $\log(n)^{-\gamma}$; more precisely, for any sequence $\epsilon_n = o(\log(n)^{-\gamma})$, the subset $\RR\subset \A$, consisting of all $f\in \A$, such that
\begin{align}
\label{eq:lower-algebraic}
\inf_{\psi_n \in \Sigma_n} \Vert f - \psi_n \Vert \not = O(\epsilon_n), 
\end{align}
is residual.
\end{proposition}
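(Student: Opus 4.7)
The plan is to reduce the claim to Lemma \ref{lem:basic} via the Baire category theorem. Since $\A$ is compact (hence complete) as a metric space in the subspace topology, it suffices to show that the complement $\EE = \A \setminus \RR$ is meagre. Writing
\[
\EE = \bigcup_{M\in \N} \EE_M, \qquad \EE_M = \set{f\in \A}{\textstyle\inf_{\psi_n\in \Sigma_n}\Vert f-\psi_n\Vert \le M\epsilon_n,\; \forall n\in \N},
\]
as in \eqref{eq:EM}–\eqref{eq:E}, I would first check that each $\EE_M$ is closed in $\A$: the map $f\mapsto \inf_{\psi_n\in \Sigma_n}\Vert f-\psi_n\Vert$ is $1$-Lipschitz (hence continuous) for each fixed $n$, so $\EE_M$ is an intersection of closed sublevel sets.

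The heart of the argument is to show that each $\EE_M$ has empty interior in $\A$. Suppose for contradiction that $\EE_M$ contained an open ball in $\A$ for some $M$. Then Lemma \ref{lem:basic} produces a constant $\lambda>0$ (depending on $M$ and the radius of that ball, but not on $n$) such that
\[
\cH(\A;\lambda\epsilon_n)_\V \le n, \qquad \forall\, n\in \N.
\]
On the other hand, the hypothesis \eqref{eq:Fentropy} yields
\[
\cH(\A;\lambda\epsilon_n)_\V \ge C\exp\!\left( c\lambda^{-1/\gamma}\epsilon_n^{-1/\gamma}\right).
\]
Since $\epsilon_n = o(\log(n)^{-\gamma})$, we have $\epsilon_n^{-1/\gamma}/\log(n) \to \infty$, so $c\lambda^{-1/\gamma}\epsilon_n^{-1/\gamma}$ eventually exceeds $2\log(n)$, giving $\cH(\A;\lambda\epsilon_n)_\V \gtrsim n^2 \gg n$ for $n$ large enough. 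This contradicts the upper bound, so $\EE_M$ has empty interior, and being closed it is nowhere dense in $\A$.

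Finally, $\EE$ is a countable union of nowhere-dense sets, hence meagre; by Baire's theorem (applied to the compact metric space $\A$, cf.\ Appendix \ref{app:baire}) the complement $\RR = \A\setminus\EE$ is residual, which is exactly the stated conclusion. The main obstacle I anticipate is simply the bookkeeping needed to invoke Lemma \ref{lem:basic} cleanly in the subspace topology of $\A$ and to verify the asymptotic comparison $\exp(c\lambda^{-1/\gamma}\epsilon_n^{-1/\gamma}) \gg n$ uniformly in the implicit constants, but both are routine once the little-$o$ assumption on $\epsilon_n$ is written out explicitly.
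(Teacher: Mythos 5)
Your proof is correct and follows the same route as the paper: decompose $\EE$ as $\bigcup_{M\in\N}\EE_M$, invoke Lemma \ref{lem:basic} to turn non-empty interior of $\EE_M$ into the entropy bound $\cH(\A;\lambda\epsilon_n)_\V\le n$, contradict this via the exponential lower bound and $\epsilon_n=o(\log(n)^{-\gamma})$, and close with Baire category. The only cosmetic difference is your closedness argument (continuity of the finite-inf distance function) versus the paper's explicit decomposition into finite unions of closed balls; both are valid.
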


Before coming to the proof of Proposition \ref{prop:abs-exp}, we note that since $\A\subset \V$ is compact, $\A$ is a complete metric space in the subspace topology. In particular, the following argument, which is based on the Baire category theorem, can be applied to $\A$ (cp. Appendix \ref{app:baire} for a summary). 

\begin{proof}
Let $\RR := \A \setminus \EE$, where $\EE$ is defined by \eqref{eq:E}. Recall that $\EE$ is precisely the set of $f\in \A$ for which there exists $M_f>0$ such that 
\[
\inf_{psi_n \in \Sigma_n} \Vert f - \psi_n \Vert \le M_f \epsilon_n.
\]
In Lemma \ref{lem:basic}, it is shown that if $\EE_M \subset \A$ has non-empty interior then there exists a constant $\lambda > 0$, such that  
\[
\log \cN(\A; \lambda \epsilon_n) \le n, \quad \forall \, n \in \N.
\]
By assumption on $\A$, the left hand side is lower bounded by $C\exp\left(c(\lambda\epsilon_n)^{-1/\gamma}\right)$. Thus, if $\EE_M$ has non-empty interior, then we must have 
\[
C\exp\left((\lambda \epsilon_n)^{-1/\gamma} \right) \le n \quad \Rightarrow \quad \epsilon_n \gtrsim \log(n)^{-\gamma}, \; \text{as }n\to \infty.
\]
But by the assumption that $\epsilon_n = o(\log(n)^{-\gamma})$, this last lower bound cannot hold, asymptotically as $n\to \infty$. Thus, we conclude that $\EE_M \subset \A$ has empty interior for any $M>0$. We furthermore note that $\EE_M$ is closed; indeed, $\EE_M$ in \eqref{eq:EM} is given by,
\begin{align}
\label{eq:EC}
\EE_M = \bigcap_{n=1}^\infty \bigcup_{\psi_n \in \Sigma_n} 
\overline{B_{M \epsilon_n}(\psi_n)},
\end{align}
where we define the closed balls (in the induced topology on $\A$),
\[
\overline{B_{M\epsilon_n}(\psi)}
:= \set{f\in \A}{\Vert f - \psi \Vert \le M \epsilon_n} \subset \A.
\]
Therefore $\EE_M$ can be written as an intersection of a union of closed balls of radius $M \epsilon_n$ centered at elements $\psi \in \Sigma_n$. Note that, since the set $\Sigma_n$ is finite by assumption, the union of these closed balls,
\[
\EE_{M,n} := \bigcup_{\psi_n \in \Sigma_n} 
\overline{B_{M \epsilon_n}(\psi_n)},
\]
is closed for any $n\in \N$, implying that also $\EE_M = \bigcap_{n=1}^\infty \EE_{M,n} \subset \A$ is closed as an intersection of closed sets. 

To conclude the proof, we simply note that $\EE = \bigcup_{M\in \N} \EE_M$ can be written as a countable union, for integer $M\in \N$, of closed subsets with empty interior $\EE_M$. In particular, this implies that $\EE$ is itself meagre by the Baire category theorem. We conclude that the complement $\RR := \A \setminus \EE$, consisting of all $f\in \A$ for which 
\[
\inf_{\psi_n \in \Sigma_n} \Vert f - \psi_n \Vert \not = O(\epsilon_n),
\]
is residual. This completes the proof.
\end{proof}

A similar result can also be derived under the assumption of an algebraic scaling. This may be of relevance for generic function approximation by neural networks, and hence we mention it here, in passing.

\begin{proposition}[Algebraic scaling]
\label{prop:abs-alg}
Let $\V$ be a Banach space. Let $\A\subset \V$ be a compact, convex subset. Assume that there exist constants $C,\gamma>0$ such that,
\begin{align}
\label{eq:Falgebraic}
\log \cN(\A;\epsilon) \ge C \epsilon^{-1/\gamma}, \quad \forall \, \epsilon > 0.
\end{align}
Let $\{\Sigma_n\}_{n\in \N}$ be a family of subsets $\Sigma_n\subset \V$ with $|\Sigma_n| \le 2^n$ elements. Then generic elements $f\in \A$ cannot be approximated by elements of $\Sigma_n$ at convergence rate better than $n^{-\gamma}$; more precisely, for any sequence $\epsilon_n = o(n^{-\gamma})$, the subset $\RR\subset \A$, such that,
\begin{align}
\inf_{\psi_n \in \Sigma_n} \Vert f - \psi_n \Vert \not = O(\epsilon_n), \quad
\forall \, f \in \RR,
\end{align}
is residual.
\end{proposition}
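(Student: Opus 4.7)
The plan is to recycle the argument used for Proposition \ref{prop:abs-exp} essentially verbatim, swapping only the entropy comparison at the one point where the rate $\log(n)^{-\gamma}$ enters. Nothing in the Baire-category structure of that proof uses the exponential shape of the entropy lower bound; all that matters is that the bound is incompatible with $\cH(\A;\lambda\epsilon_n)_\V \le n$ when $\epsilon_n$ is faster than the critical rate. Under the algebraic assumption \eqref{eq:Falgebraic}, the critical rate becomes $n^{-\gamma}$ rather than $\log(n)^{-\gamma}$, and the proof proceeds as before.

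First I would introduce, for $M>0$, the set $\EE_M\subset \A$ as in \eqref{eq:EM} and the union $\EE = \bigcup_{M\in\N}\EE_M$ as in \eqref{eq:E}, so that $\RR = \A\setminus \EE$ is precisely the set whose elements are \emph{not} approximated at rate $\epsilon_n$. The goal is to show that $\EE$ is meagre in the complete metric space $(\A,d)$ (which is complete by compactness, so the Baire category theorem applies, cf. Appendix \ref{app:baire}).

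The core step is to show that each $\EE_M$ is closed and has empty interior. Closedness is identical to the exponential case: from the representation
\[
\EE_M = \bigcap_{n=1}^\infty \bigcup_{\psi_n \in \Sigma_n} \overline{B_{M\epsilon_n}(\psi_n)},
\]
each inner union is a finite union (since $|\Sigma_n|\le 2^n<\infty$) of closed balls in the subspace topology on $\A$, hence closed, so $\EE_M$ is an intersection of closed sets. For the empty-interior part I invoke Lemma \ref{lem:basic}: if $\EE_M$ had non-empty interior, then there would exist $\lambda>0$ (independent of $n$) with $\cH(\A;\lambda\epsilon_n)_\V \le n$ for all $n\in\N$. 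Combined with the standing algebraic lower bound $\cH(\A;\epsilon)_\V \ge C\epsilon^{-1/\gamma}$, this forces
\[
C(\lambda \epsilon_n)^{-1/\gamma} \le n \quad\Longrightarrow\quad \epsilon_n \gtrsim n^{-\gamma} \text{ as } n\to\infty,
\]
which contradicts the hypothesis $\epsilon_n = o(n^{-\gamma})$. Hence $\EE_M$ has empty interior for every $M>0$.

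Finally I conclude: $\EE = \bigcup_{M\in\N}\EE_M$ is a countable union of closed, nowhere-dense subsets, therefore meagre by the Baire category theorem, and its complement $\RR$ is residual. I expect no real obstacle in carrying this out; the only thing to watch is that the scaling of $\lambda$ produced by Lemma \ref{lem:basic} is independent of $n$ (so it can be absorbed into the asymptotic statement $\epsilon_n = o(n^{-\gamma})$), and that the quantifier on $M$ is $M\in\N$ so that the union stays countable, exactly as in the exponential case.
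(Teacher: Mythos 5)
Your proposal is correct and follows the paper's own argument essentially verbatim: invoke Lemma \ref{lem:basic} to get $\cH(\A;\lambda\epsilon_n)_\V \le n$ from a non-empty interior of $\EE_M$, compare with the algebraic entropy lower bound to force $\epsilon_n \gtrsim n^{-\gamma}$, contradict $\epsilon_n = o(n^{-\gamma})$, and then conclude via the same closedness-plus-Baire-category argument as in Proposition \ref{prop:abs-exp}. The paper compresses the closedness and meagreness step by referring back to the exponential case, but the substance is identical to what you wrote.
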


\begin{proof}
Let $\RR := \A \setminus \EE$, where $\EE$ is defined by \eqref{eq:E}. Recall that $\EE$ is precisely the set of $f\in \A$ for which there exists $M_f>0$ such that 
\[
\inf_{psi_n \in \Sigma_n} \Vert f - \psi_n \Vert \le M_f \epsilon_n.
\]
In Lemma \ref{lem:basic}, it is shown that if $\EE_M \subset \A$ has non-empty interior then there exists a constant $\lambda > 0$, such that  
\[
\log \cN(\A; \lambda \epsilon_n) \le n, \quad \forall \, n \in \N.
\]
By assumption on $\A$, the left hand side is lower bounded by $C(\lambda\epsilon_n)^{-1/\gamma}$. Thus, if $\EE_M$ has non-empty interior, then we must have 
\[
C(\lambda \epsilon_n)^{-1/\gamma} \le n \quad \Rightarrow \quad \epsilon_n \gtrsim n^{-\gamma}, \; \text{as }n\to \infty.
\]
By assumption, $\epsilon_n = o(n^{-\gamma})$, this is not the case. Thus, we conclude that $\EE_M \subset \A$ has empty interior for any $M>0$. Thus, arguing as in the proof of Proposition \ref{prop:abs-alg} it follows that $\EE$ is meagre, and hence $\RR = \A\setminus \EE$ is residual.
\end{proof}

\section{Conclusion}
\label{sec:conclusion}

Operator learning is a new paradigm for the data-driven approximation of operators. Popular operator learning frameworks extend and generalize neural networks to this infinite-dimensional setting. While there are numerous papers demonstrating the potential and practical utility of proposed neural operator architectures, our understanding of the precise conditions under which operator learning is practically feasible remains limited. 

This paper makes a contribution to the mathematical underpinnings of this field, by providing an information-theoretic perspective on the curse of parametric complexity (a scaling-limit of the curse of dimensionality) identified in \cite{lanthaler2023operator}. In particular, it is shown that this curse poses a fundamental limitation to operator learning on general spaces of Lipschitz operators. Bit-encoding (storing in memory) any neural operator architecture, which is capable of achieving approximation accuracy $\epsilon$ for general $1$-Lipschitz continuous and real-valued operators, requires a number of bits that is exponential in $\epsilon^{-1}$. It is shown that this is true not only when measuring the approximation error in the sup-norm over compact sets of input functions, but also when measuring the error in the $L^p(\mu)$-norm with respect to a probability measure satisfying certain structural assumptions. The assumptions are met for widely considered $\mu$, including the case of a Gaussian random field with at most algebraically decreasing eigenvalues of the covariance. These results rely on minimax analysis and, in contrast to prior work \cite{lanthaler2023operator}, are independent of the employed activation function in the architecture.

Going beyond such minimax analysis, we furthermore study the approximation of individual Lipschitz operators by a sequence of neural operator architectures. Such a sequence would e.g. be obtained when increasing the width, depth or other hyperparameters at a pre-defined rate as the model is scaled up. In this setting, we address the following question: ``At which rate can the approximation error along such a sequence decrease, as a function of the total number of bit-encoded parameters?'' Using topological arguments based on Baire category, we establish a quantitative relation between the metric entropy of the set of $1$-Lipschitz operators, and the best approximation-rate that can be achieved along such a sequence for generic $1$-Lipschitz operators; as a consequence of the exponential increase in metric $\epsilon$-entropy of the set of $1$-Lipschitz operators, it is shown that achievable approximation rates are at most logarithmic as a function of the required encoding bits.

Finally, this abstract analysis leads to a concrete result on the approximation of generic Lipschitz operators by Fourier neural operator. Our results imply that for generic $1$-Lipschitz operators, and under mild assumptions on the tunable parameters, there cannot exist a sequence of FNO approximations which approximates the underlying operator at a rate that decays faster than logarithmic in the number of \emph{real-valued} parameters. To obtain this result, mild bounds on the growth of the parameters of FNO approximants are assumed; specifically, the size of individual parameter is assumed to be exponentially bounded by the total number of parameters, as the model size is scaled up.

The results of this work should be compared and contrasted with the recent work \cite{schwab2023deep}, which shows the surprising result that there exist (non-standard) neural operator architectures capable of approximating Lipschitz continuous operators to accuracy $\epsilon$, with a number of real-valued tunable parameters $q$ growing only algebraically with $\epsilon^{-1}$. The analysis of the present work indicates that a practical implementation of such architectures on computing hardware, and with parameters encoded by a total of $B$ bits will require $B$ to be exponentially large in $\epsilon^{-1}$. In fact, if each parameter is encoded by $b_1$ bits, then a lower bound of the following form is to be expected:
\[
q b_1 \ge C\exp(c\epsilon^{-\gamma}),
\]
for fixed constants $C,c,\gamma>0$ independent of $\epsilon$. In particular, if $q\lesssim \epsilon^{-\lambda}$ grows at most algebraically, as in the construction \cite{schwab2023deep}, then the number of encoding bits $q_1$ per parameter must necessarily grow exponentially. Thus, the only trade-off that appears possible from an information-theoretic perspective is to reduce the number of parameters $q$ at the expense of the required number of bits per parameter $b_1$, or vice versa. 
In turn, the required number of encoding bits is intimately linked to the stability of the mapping $\theta \mapsto \Phi(\slot;\theta)$ from parameters $\theta$ to the corresponding realization of the neural operator $\Phi(\slot;\theta)$; an exponentially growing number of bits $b_1$ is only required if the parameter-to-realization mapping is either very unstable, e.g. having very large Lipschitz constant, or if the optimal parameters themselves are very large. Here, ``large'' means that either the Lipschitz constant or the $\ell^\infty$-norm of the parameters grows exponentially with $\epsilon^{-1}$.

The results of this work underline the fundamental character of the curse of parametric complexity identified in \cite{lanthaler2023operator} from the point of view of information theory. In addition, it is here shown that this curse persists even when the sup-norm (uniform approximation of the underlying operator) is replaced by an a priori much weaker $L^p$-norm (approximation in expectation). This considerably constrains the generality with which approximation theory for operator learning, guaranteeing efficient approximation by neural operators at algebraic convergence rates, can be developed. A complete or partial characterization of the relevant mathematical properties and structures enabling efficient operator approximation, would be highly desirable. The results presented in this work demonstrate rigorously that one has to go beyond Lipschitz operators to achieve this.

\section*{Acknowledgments}

The author would like to thank Andrew M. Stuart and Nikola B. Kovachki for interesting discussions which have led to this work. This work has been supported by funding from the Swiss National Science Foundation through Postdoc.Mobility grant P500PT-206737.

\bibliographystyle{abbrv}
\bibliography{references}

\appendix

\section{A short summary of Baire category}
\label{app:baire}

In this appendix, we recall the Baire category theorem from general topology. For a more thorough discussion of this result, and its connections to other topological concepts, we refer to the textbook \cite[Chap. 8]{munkres}.

Let $X$ be a topological space. Let $A \subset X$ be a subset. We recall that the interior of $A$ is defined as the union of all open sets of $X$ that are contained in $A$. The set $A$ is said to have empty interior if $A$ contains no open set of $X$ other than the empty set. Equivalently, $A$ is said to have empty interior if the complement of $A$ is dense in $X$.
We then have the following definition \cite[Chap. 8, p. 293]{munkres}:
\begin{definition}
A space $X$ is said to be a \define{Baire space} if the following condition holds:
Given any countable collection $\{A_n\}$ of closed sets of $X$ each of which has empty interior in $X$, their union $\bigcup_{n} A_n$ also has empty interior in $X$.
\end{definition}

This definition can equivalently be stated in terms of open sets \cite[Lemma 48.1]{munkres}:
\begin{lemma}
$X$ is a Baire space if and only if given any countable collection $\{U_n\}$ of open sets in $X$, each of which is dense in $X$, their intersection $\bigcap_n U_n$ is also dense in $X$.
\end{lemma}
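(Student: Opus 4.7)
The proof will be a straightforward exercise in complementation, exploiting De Morgan's laws together with the standard topological duality between closed sets with empty interior and open dense sets. The plan is to set up this duality once and for all, and then deduce each direction of the equivalence by passing to complements.

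First, I would record the following duality lemma: for a subset $A \subset X$, the closed set $A$ has empty interior in $X$ if and only if $X \setminus A$ is dense in $X$. This is immediate from the definitions: ``$A$ has empty interior'' means that no non-empty open set $V \subset X$ is contained in $A$, which by passing to complements is equivalent to saying that every non-empty open set $V \subset X$ meets $X \setminus A$, i.e.\ that $X \setminus A$ is dense. Dually, if $U$ is open, then $X \setminus U$ is closed, and $U$ is dense if and only if $X \setminus U$ has empty interior.

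For the forward direction, suppose $X$ is a Baire space in the sense of the definition, and let $\{U_n\}_{n\in\N}$ be a countable collection of dense open sets. Setting $A_n := X \setminus U_n$, each $A_n$ is a closed set with empty interior by the duality above. The Baire property then yields that $\bigcup_n A_n$ has empty interior, so (again by the duality) its complement is dense. By De Morgan,
\begin{equation*}
X \setminus \bigcup_{n} A_n = \bigcap_{n} (X \setminus A_n) = \bigcap_{n} U_n,
\end{equation*}
so $\bigcap_n U_n$ is dense in $X$. The reverse direction is entirely symmetric: given a countable collection $\{A_n\}$ of closed sets with empty interior, set $U_n := X \setminus A_n$; these are open and dense, so by hypothesis $\bigcap_n U_n$ is dense, and De Morgan again gives $\bigcup_n A_n = X \setminus \bigcap_n U_n$, which therefore has empty interior.

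There is no serious obstacle in this proof; the only bookkeeping is to make sure the duality between ``closed with empty interior'' and ``open and dense'' is set up before invoking De Morgan, so that both directions of the equivalence can be handled uniformly. No prior result from the body of the paper is needed, as the argument uses only the definitions of density, interior and complementation in a topological space.
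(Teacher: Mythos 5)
Your proof is correct and is the standard complementation argument via De Morgan's laws; there is nothing to quibble with. Note that the paper itself does not prove this lemma at all — it simply cites it as \cite[Lemma 48.1]{munkres} in the appendix as a recollection of background material — so your argument supplies the omitted (entirely routine) proof rather than offering an alternative to one given in the text.
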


The following Baire category theorem \cite[Thm. 48.2]{munkres} exposes many examples of Baire spaces encountered in applications:
\begin{theorem}[Baire category theorem]
If $X$ is a compact Hausdorff space or a complete metric space, then $X$ is a Baire space.
\end{theorem}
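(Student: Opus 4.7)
The plan is to prove the statement in the open-sets formulation: given any countable family $\{U_n\}_{n\in\N}$ of open dense subsets of $X$, the intersection $\bigcap_n U_n$ is dense in $X$. Equivalently, for any nonempty open set $W\subset X$, I want to produce a point of $W\cap\bigcap_n U_n$. The two cases (complete metric, compact Hausdorff) share the same backbone: construct a decreasing sequence of nonempty open sets $V_n\subset X$ with $\overline{V_{n+1}}\subset V_n\cap U_{n+1}$, and then extract a common point from $\bigcap_n\overline{V_n}$. The two hypotheses enter only when one has to guarantee this common point is nonempty.

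The construction is inductive. Set $V_0:=W$. Assume $V_n$ is a nonempty open set already chosen; since $U_{n+1}$ is dense, $V_n\cap U_{n+1}$ is nonempty, and being the intersection of two open sets it is open. I now use a local ``regularity'' property to pick $V_{n+1}$: in a metric space, choose an open ball $V_{n+1}\subset V_n\cap U_{n+1}$ of radius at most $1/(n+1)$ whose closed ball of the same radius is still contained in $V_n\cap U_{n+1}$ (possible because open sets in metric spaces are unions of open balls and I can halve the radius). In a compact Hausdorff space, compact Hausdorff implies regular, so given any point $x\in V_n\cap U_{n+1}$ I can pick an open neighborhood $V_{n+1}$ of $x$ with $\overline{V_{n+1}}\subset V_n\cap U_{n+1}$. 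In either case, $\overline{V_{n+1}}\subset V_n\cap U_{n+1}$, so $\overline{V_{n+1}}\subset U_{n+1}$ and $\overline{V_{n+1}}\subset \overline{V_n}$.

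The endgame differs between the two settings, and this is the one place where the completeness/compactness hypothesis is actually used. In the complete metric case, pick any $x_n\in V_n$; since $\overline{V_m}\subset V_n$ for $m\ge n$ and $V_n$ has diameter at most $2/n$, the sequence $\{x_n\}$ is Cauchy. By completeness it converges to some $x\in X$, and $x\in\overline{V_n}$ for every $n$ because the tail $\{x_m\}_{m\ge n}$ lies in $\overline{V_n}$. In the compact Hausdorff case, $\{\overline{V_n}\}$ is a decreasing family of nonempty closed subsets of the compact space $X$, hence has the finite intersection property, so by compactness $\bigcap_n\overline{V_n}\ne\emptyset$; pick any $x$ in this intersection. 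In both cases, $x\in\overline{V_n}\subset U_n$ for every $n\ge 1$, and $x\in\overline{V_1}\subset V_0=W$, so $x\in W\cap\bigcap_n U_n$, proving that this intersection meets every nonempty open set $W$ and is therefore dense.

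The only mildly subtle step is the inductive construction, which requires a ``shrinking'' property: being able to pass from an open set $V_n\cap U_{n+1}$ to a smaller open set whose closure still sits inside it. In a metric space this is immediate from the existence of concentric open/closed balls. In a general topological space it requires regularity, which is where the Hausdorff assumption is needed (together with compactness, which yields normality, hence regularity). After that point the argument is purely a diagonal extraction, made to work by completeness (Cauchy sequences converge) or by compactness (nested closed sets have the finite intersection property). Neither step demands a quantitative estimate; the proof is entirely structural.
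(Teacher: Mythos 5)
The paper does not supply its own proof of the Baire category theorem---it cites it directly as \cite[Thm.\ 48.2]{munkres}---so there is no in-paper argument to compare against. Your proof is correct and is the standard textbook argument (and in fact essentially the one in Munkres), so it faithfully reproduces the cited result: the inductive shrinking construction with $\overline{V_{n+1}}\subset V_n\cap U_{n+1}$, where the metric completeness (via vanishing diameters and Cauchy sequences) or compactness (via the finite intersection property) is used only in the final step to extract a common point.
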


\section{Proof of the quantization lemma}
\label{app:quantize}

The goal of this appendix is to prove the FNO quantization lemma \ref{lem:quantize}:

\fnoquant*

\begin{proof}[Proof of Lemma \ref{lem:quantize}]
Let $\Phi_q$ be an output-averaged FNO with at most $q$ tunable parameters. We first note that the depth of $\Phi_q$ can only take the values $L \in \{1,\dots, q\}$. For each possible value of the depth, we now consider the maximally connected output-averaged FNO architecture $\hat{\Phi}_q^{(L)}$ of depth $L$, obtained by setting $\kappa, \dc = q$ in each layer. This maximally connected FNO architecture has at most 
\[
\hat{q}^{(L)} \le 5 (2\kappa)^d L \dc^2 \le 5\cdot 2^d q^{d+3},
\]
tunable parameters. For later reference, we note that 

\vspace{1em}
\textbf{Observation 1:} Any output-averaged averaged FNO $\Phi_q(\slot;\theta)$ with depth $L$ and at most $q$ parameters can be represented by a specific choice of the weights of $\hat{\Phi}_q^{(L)}(\slot;\hat{\theta})$. In fact, this only requires zero-padding $\theta$ to obtain $\hat{\theta}$.
\vspace{1em}

Our main goal is to suitably quantize $\hat{\Phi}_q^{(L)}$, and then define a quantized neural operator architecture $\tilde{\Phi}_{n_q}$ with $n_q$ bits which can represent all quantized $\hat{\Phi}_q^{(L)}$ for $L=1,\dots, q$ by specific setting of its bitwise-encoded parameters. 

It follows from \cite[Proposition D.15]{NLM2024data}, with a minimal extension to allow for $\sigma(0) \ne 0$, that the Lipschitz constant of the mapping,
\[
R_{q}^{(L)}: \; 
\left\{
\begin{aligned}
[-M_q, M_q]^{\hat{q}} &\to C(\cK), \\
\theta &\mapsto \hat{\Phi}_q^{(L)}(\slot;\theta),
\end{aligned}
\right.
\]
and with $[-M_q,M_q]^{\hat{q}}$ metrized by the $\ell^\infty$-norm, 
can be bounded by 
\[
\Lip(R_q^{(L)}) \le (L+2)(2\dc M_q)^{L+2} \left( C + (2\kappa)^{d/2} \right).
\]
Here, $C>0$ is a constant depending only on $d$ and $\cK$. In particular, there exists a (larger) constant $C = C(d,\cK)$, such that 
\[
\Lip(R_q^{(L)}) \le (Cq)^{Cq} = \exp(C q \log(Cq)).
\]
We quantize $\hat{\Phi}_q^{(L)}$ for $\theta\in [-M_q,M_q]^{\hat{q}}$ by subdividing each coordinate direction by equidistant points of separation $\sim \log(q)^{-\gamma} / \exp(Cq \log(Cq))$. Denote the resulting discrete set of points by $\Theta^{(L)}_q\subset \R^{\hat{q}}$. We note that this subdivision requires at most,
\[
O\left( \,
\big\{
M_q\log(q)^\gamma \exp(Cq \log(Cq)) 
\big\}^{\hat{q}}
\,
\right)
\]
many quantization points, which can be encoded by 
\[
O\Big( \,
\hat{q} \, \log\big(\, M_q \log(q)^\gamma \exp(Cq \log(Cq))  \, \big)
\,
\Big)
\]
many bits. Since $\hat{q} = O(q^{d+3})$, $\log(M_q \log(q)^\gamma) = O(q)$ and $\log(\exp(Cq\log(Cq))) = O(q^2)$, it follows that the number of required bits is 
\[
O\left(
q^{d+6}
\right),
\]
i.e. $\log_2 |\Theta^{(L)}_q| = O(q^{d+6})$. The implied constant here is independent of $L$. In the following, we denote $m := d+6$. In particular, we conclude that there exists a constant $C>0$, independent of $q$, such that
\[
\max_{L=1,\dots, q} | \Theta^{(L)}_q | \le Cq^{m}.
\]

We also note that, by construction, for any $\theta \in [-M_q,M_q]^{\hat{q}}$, there exists $\theta'\in \Theta^{(L)}_q$, such that 
\[
\Vert \theta - \theta' \Vert_{\ell^\infty}
\le \frac{\log(q)^{-\gamma}}{\exp(Cq \log(Cq))}.
\]
It follows that for any $\theta \in [-M_q, M_q]^{\hat{q}}$, there exists $\theta'\in \Theta^{(L)}_q$, such that 
\begin{align*}
\Vert \tilde{\Phi}^{(L)}_q(\slot; \theta) - \tilde{\Phi}^{(L)}_q(\slot;\theta') \Vert_{C(\cK)}
&\le
\Lip(R_q^{(L)}) \, \Vert \theta - \theta' \Vert_{\ell^\infty}
\\
&\le \exp(Cq \log(Cq)) \frac{\log(q)^{-\gamma}}{\exp(Cq \log(Cq))} 
\\
&= \log(q)^{-\gamma}.
\end{align*}
Thus, 
\begin{align}
\label{eq:accTL}
\sup_{ \theta \in [-M_q,M_q]^{\hat{q}} } 
\min_{ \theta'\in\Theta^{(L)}_q } 
\Vert \tilde{\Phi}^{(L)}_q(\slot; \theta) - \tilde{\Phi}^{(L)}_q(\slot;\theta') \Vert_{C(\cK)}
\le \log(q)^{-\gamma}.
\end{align}

Since $|\Theta^{(L)}_q|\le Cq^{m}$, any $\theta' \in \Theta^{(L)}_q$ can be identified with a unique bit-string in $\{0,1\}^{\ell_q}$, where $\ell_q = \lceil Cq^{m} \rceil$. Adding an additional number of $O(\log(q))$ bits to encode the possible values of the depth parameter $L\in \{1,\dots, q\}$, we can now define a quantized neural operator $\tilde{\Phi}_{n_q}: L^2(D)\times \{0,1\}^{n_q} \to \R$ encoded by $n_q \sim \log(q) + \ell_q \sim C q^{m}$ bits, in the following way:
Given $[\theta] \in \{0,1\}^{n_q}$, we first read off the length parameter $L$ from the first $\lceil \log_2 q\rceil$ bits. Removing these bits, the remaining $\ell_q$ bits uniquely identify $\theta' \in \Theta_q^{(L)}$, and we set
\begin{align}
\label{eq:TLrep}
\tilde{\Phi}_{n_q}(\slot; [\theta]) := \Phi^{(L)}_q(\slot; \theta').
\end{align}
Thus, $\tilde{\Phi}_{n_q}$ is a neural operator architecture with parameters encoded by $n_q \asymp q^{m}$ bits. By our definition \eqref{eq:TLrep}, any neural operator belonging to the set
\[
\set{
\Phi^{(L)}_q(\slot;\theta')
}{
L \in \{1,\dots, q\}, \; \theta' \in \Theta^{(L)}_q
},
\]
can be represented exactly by suitable choice of $[\theta]\in \{0,1\}^{n_q}$. And thus, by \eqref{eq:accTL}, we have
\begin{align}
\label{eq:accTL2}
\sup_{L=1,\dots, q} 
\sup_{ \theta \in [-M_q,M_q]^{\hat{q}} } 
\min_{ [\theta]\in \{0,1\}^{n_q} } 
\Vert \tilde{\Phi}^{(L)}_q(\slot; \theta) - \tilde{\Phi}_{n_q}(\slot;[\theta]) \Vert_{C(\cK)}
\le \log(q)^{-\gamma}.
\end{align}
We finally note that any neural operator architecture $\Phi_q$ with at most $q$ parameters is represented as $\Phi_q(\slot;\theta) = \hat{\Phi}_q^{(L)}(\slot; \hat{\theta})$ for suitably chosen $\hat{\theta} = \hat{\theta}(\theta)$ (see Observation 1, above). In fact, this only involves zero-padding of the weights $\theta$. In particular, if $\theta \in [-M_q,M_q]^q$, then $\hat{\theta} \in [-M_q,M_q]^{\hat{q}}$. 

From \eqref{eq:accTL2}, it follows that
\begin{align}
\sup_{ \theta \in [-M_q,M_q]^{q} } 
\min_{ [\theta]\in \{0,1\}^{n_q} } 
\Vert \Phi_q(\slot; \theta) - \tilde{\Phi}_{n_q}(\slot;[\theta]) \Vert_{C(\cK)}
\le \log(q)^{-\gamma},
\end{align}
as claimed. This concludes the proof.
\end{proof}

\end{document}